\documentclass[letterpaper, 10 pt, conference]{ieeeconf}  

\IEEEoverridecommandlockouts                              

\usepackage{amsthm}   
\usepackage{amsmath} 
\usepackage{mathtools} 
\usepackage{amssymb}  

\usepackage{multirow}
\usepackage{booktabs} 
\usepackage{graphicx} 
\let\labelindent\relax
\usepackage{enumitem} 
\usepackage{algorithm}
\usepackage{algorithmic}
\usepackage{url}

\usepackage{tikz}
\usetikzlibrary{arrows.meta,calc,positioning,decorations.pathreplacing}
\newtheorem{assumption}{Assumption}
\newtheorem{remark}{Remark}
\newtheorem{proposition}{Proposition}
\newtheorem{corollary}{Corollary}
\newtheorem{theorem}{Theorem}
\newtheorem{lemma}{Lemma}
\newtheorem{definition}{Definition}

\title{\LARGE \bf
Finite-Sample Analysis of Elimination in Active Hypothesis Testing
}

\author{
Ziyuan Lin,
Hoang Ngoc Nguyen,
Jie Xu,
Ivan Ruchkin%
\thanks{This work has been submitted to the IEEE 
for possible publication. Copyright may be 
transferred without notice, after which this 
version may no longer be accessible.}%
\thanks{This work was supported by the NSF Grants CNS 2440920 and EPCN 2512169. Any opinions, findings, conclusions, or recommendations expressed in this material are those of the authors and do not necessarily reflect the views of the National Science Foundation (NSF) or the US Government.}
\thanks{The authors are with the Department of Electrical and Computer Engineering,
University of Florida, Gainesville, FL 32611, USA
{\tt\small \{ziyuanlin, ngochoangnguyen, jie.xu\}@ufl.edu, iruchkin@ece.ufl.edu}}%
}

\begin{document}

\maketitle
\thispagestyle{empty}
\pagestyle{empty}

\begin{abstract}
A fixed-confidence, finite-sample problem of active hypothesis testing arises in many safety-critical applications. Situation in the context of sequential hypothesis testing, this paper studies the effect of hypothesis elimination on the stopping time. We introduce an elimination-augmented Track-and-Stop algorithm, in which champion-specific active-opponent sets are progressively pruned, and sensing effort is reallocated toward the surviving alternatives. Our analysis derives a non-asymptotic upper bound on the expected stopping time. The gain in finite-sample from elimination appears on the scale of the non-leading term, resulting from tighter tracking and concentration constants on the reduced hypothesis set. Furthermore, we introduce an aggressiveness parameter to modulate the trade-off between faster elimination and weaker confidence guarantee. An experimental study on synthetic Gaussian instances confirms the theoretical predictions.
\end{abstract}

\section{INTRODUCTION}

Active hypothesis testing studies a decision-maker that sequentially selects sensing actions to identify an unknown state of the environment as quickly as possible, subject to a prescribed reliability constraint~\cite{c1,c2}. The problem arises naturally in safety-critical control applications where both detection speed and correctness directly govern system safety: fault isolation in networked systems~\cite{c17}, target recognition with active sensors for autonomous vehicles~\cite{c18}, and anomaly detection in industrial processes~\cite{c14,c15}. 

\looseness=-1
The theoretical foundation of active sequential hypothesis testing traces back to Chernoff~\cite{c3}, who formulated the problem as minimizing a Bayesian risk combining expected stopping time and misclassification penalty, proposing an adaptive strategy to accelerate discrimination among hypotheses. Naghshvar and Javidi~\cite{c4} extended this Bayesian framework to $M$-ary hypothesis testing, bounding the minimum achievable Bayesian risk that trades off expected sample size against misclassification cost. Operating instead under frequentist risk constraints, Nitinawarat et al.~\cite{c5} proved asymptotic optimality of a modified version of the Chernoff test --- which selects action to maximize the difference between the current best hypothesis and its hardest opponent --- under per-hypothesis error control, and established a separation principle between maximum-likelihood estimation and optimal action selection. These works demonstrated the value of active over passive testing, but their guarantees are asymptotic: they characterize the optimal information acquisition rate as error chance $\delta\to 0$ without quantifying the stopping time at any fixed $\delta$. In safety-critical applications where $\delta$ is a hard design constraint, this motivates a \emph{fixed-confidence} formulation: given $\delta$, minimize the expected stopping time while guaranteeing misclassification probability at most $\delta$.

For the fixed-confidence setting, the most relevant algorithmic precedent comes from the pure-exploration bandit literature. The learner's task in active sensing --- choosing which action to observe to identify the true hypothesis --- has the same max-min information structure as bandit pure exploration, with best-arm and odd-arm identification as special cases~\cite{c6}. Relevant to us, Kaufmann, Cappé, and Garivier~\cite{c7} characterized the tight information-theoretic lower bounds on stopping time for best-arm identification. Building on this characterization, Garivier and Kaufmann~\cite{c8} realized these bounds with the \textit{Track-and-Stop (TaS)} algorithm, which combines three ingredients: an oracle allocation that solves the max-min arm allocation problem, a tracking rule that forces the empirical sampling frequencies to follow this target allocation over time, and a Chernoff-style stopping rule based on generalized likelihood ratios. Thus, we borrow this asymptotically sampling and stopping mechanism from the bandit literature. However, TaS always computes its sampling target against the \emph{full} set of alternative hypotheses, even when strong statistical evidence argues against many of them. Hence, active effort continues to be governed by the hardest pair to discriminate in the full set, rather than focusing on the hypotheses that remain genuinely difficult to rule out.

This leads to a natural insight: eliminate clearly implausible hypotheses during the test and redirect sensing effort toward the remaining active alternatives. Recent results establish that elimination is asymptotically optimal and empirically beneficial: Tirinzoni and Degenne~\cite{c9} showed that elimination can be layered on top of adaptive algorithms (e.g., TaS) without degrading sample complexity guarantees, while Vershinin et al.~\cite{c10} demonstrated empirical reductions in detection delay via progressive hypothesis pruning. 

However, neither of these works provides a precise characterization of \emph{finite-sample} acceleration: once the active set shrinks, the oracle allocation is recomputed against a smaller set of alternatives, raising the effective information rate. It is not currently known how much it is raised and under what conditions.  This gap is especially significant in multi-hypothesis active sensing, where elimination changes the sampling target mid-test, invalidating the standard asymptotic analysis that assumes a fixed oracle allocation throughout.

In this paper, we fill that gap for fixed-confidence active multi-hypothesis sensing. We study an elimination-based TaS procedure that sequentially prunes unlikely hypotheses and reallocates sensing effort over the remaining active set. We also introduce a tuning parameter $\alpha \in (0,1]$ that controls the aggressiveness of elimination: smaller $\alpha$ triggers earlier pruning, whereas $\alpha=1$ delays elimination until the nominal certification threshold is met. 

This paper's contributions are as follows:
\begin{itemize}
    \looseness=-1
    \item \textbf{Finite-sample analysis of elimination-based TaS.} We establish a finite-sample upper bound on the expected stopping time of a Track-and-Stop algorithm with sequential hypothesis elimination. The bound makes explicit how shrinking the active set changes the effective allocation problem.
    
    \item \textbf{Analytical characterization of the time-error trade-off:}  The aggressiveness parameter $\alpha$ induces an explicit trade-off between stopping time and elimination error. We characterize analytically how the leading term of the stopping time bound varies with $\alpha$, and illustrate the resulting speed-reliability curve through simulation.

   \item \textbf{Empirical validation:} We experimentally validate the main theoretical predictions: the instance-dependent stopping-time reduction from active-set shrinkage, the speed-accuracy trade-off induced by $\alpha$, and the stage-wise acceleration mechanism underlying the observed gains through three targeted experiments.
    
\end{itemize}

\section{PRELIMINARIES}
\label{sec:background}
\subsection{Active Multi-Hypothesis Sensing Model}
Let $\mathcal{H} = \{1,2,\dots,K\}$ be a finite set of mutually exclusive hypotheses with unknown true state 
$h^* \in \mathcal{H}$, and let $\mathcal{A}$ be a finite set of sensing actions. Observations take values in a measurable space $\mathcal{O}$, which can be either discrete or continuous, i.e., a finite-dimensional Euclidean space.

At each time $t \geq 1$, the controller selects $A_t \in \mathcal{A}$ based on the history $(A_1, O_1, \dots, A_{t-1}, O_{t-1})$ and receives an independent sample $O_t$ from the observation distribution $P_{h^*}^{A_t}.$ We write $\mathbb{P}_{h^*}(\cdot)$ and $\mathbb{E}_{h^*}[\cdot]$ for the probability and expectation induced by true hypothesis $h^*$ and the sequential policy over the trajectory  $(A_1, O_1, A_2, O_2, \dots).$ And we write $N_a(t) := \sum_{s=1}^{t} \mathbf{1}\{A_s = a\}$ for the number of times that action $a$ has been selected up to time $t$.

For each action $a\in\mathcal{A}$, we assume that the observation distributions $\{P_h^a : h\in\mathcal{H}\}$ are absolutely continuous with respect to a common distribution $\mu_a$ on $\mathcal{O}$. Consequently, for every $h\in\mathcal{H}$ there exists a density $p_a(o\mid h)$ such that
\[
P_h^a(B) = \int_B p_a(o\mid h)\,d\mu_a(o)
\]
for every measurable set $B\subseteq\mathcal{O}$. Here, $p_a(o\mid h)$ is a probability density function when $\mathcal{O}$ is continuous and a probability mass function when $\mathcal{O}$ is discrete.

We measure the statistical distinguishability between hypotheses $h$ and $g$ under action $a$ by the action-wise KL divergence~\cite{c16}: for $h,g \in \mathcal H$ and $a \in \mathcal A$,
\[
d_a(h,g) := \mathrm{KL}(P_h^a \| P_g^a)
= \mathbb{E}_{O \sim P_h^a}\!\left[
\log\frac{p_a(O \mid h)}{p_a(O \mid g)}\right],
\]
Intuitively, $d_a(h,g)$ quantifies how much information a single observation under $a$ provides to distinguish $h$ from $g$.
\begin{assumption}[Identifiability]
\label{ass:identifiability}
For every $h \neq g$ in $\mathcal{H}$, there exists an action $a \in \mathcal{A}$ such that $d_a(h,g) > 0$.
\end{assumption}
This assumption ensures that any two distinct hypotheses can be statistically distinguished by at least one 
sensing action, which is necessary for exact identification.

\subsection{Policy and Objective}

\looseness=-1
To identify the true hypothesis $h^*$,  we seek sequential policies for active multi-hypothesis sensing, under the fixed-confidence criterion, where a policy $\pi = (\Phi, s, r)$  consists of a sampling rule, a stopping time, and a recommendation rule: 

\begin{itemize}
\item The \emph{sampling rule} $\Phi = \{\Phi_t\}_{t\ge 1}$, where $\Phi_t : (\mathcal{A}\times\mathcal{O})^{t-1}\to\mathcal{A}$ determines the action $A_t = \Phi_t(A_1,O_1,\dots,A_{t-1},O_{t-1})$ for each round based on past observations.
\end{itemize}
\begin{itemize}
\item The \emph{stopping rule}  $s = \{s_t\}_{t\ge 1}$, where
$s_t:(\mathcal{A}\times\mathcal{O})^t\to\{0,1\}$, inducing the \textit{stopping time} $$\tau := \inf\{t\ge 1: s_t(A_1,O_1,\dots,A_t,O_t)=1\}.$$ We consider such stopping rules that $\mathbb{P}_{h^*}(\tau < \infty) = 1$ for all 
$h^* \in \mathcal{H}$.
\end{itemize}
\begin{itemize}
\item  The recommendation rule $r$ that outputs the \emph{recommendation} $\hat{h}_{\tau} \in 
\mathcal{H}$ upon stopping: $$\hat{h} := r(A_1, O_1, \dots, A_\tau, O_\tau).$$
\end{itemize}

\begin{definition}[$\delta$-PAC policy]
\label{def:deltapac}
Fix $\delta\in(0,1)$ and an arbitrary true $h^*\in\mathcal H$. A policy $\pi$ is \emph{$\delta$ - Probably Approximately Correct (PAC)} if
\[
\mathbb{P}_{h^*}(\hat{h} \neq h^*) \leq \delta,
\quad \forall\, h^* \in \mathcal{H}.
\]
\end{definition}
\noindent\textbf{Objective.} Given $\delta\in(0,1)$, design a $\delta$-PAC policy minimizing the expected stopping time:
\[
\min_{\pi}\ \mathbb{E}_{h^*}[\tau]
\quad\text{s.t.}\quad
\mathbb{P}_{h^*}(\hat{h}\neq h^*)\leq\delta,
\quad\forall\,h^*\in\mathcal{H}.
\]

\subsection{Oracle Benchmark}
\label{subsec:oracle}
The fixed-confidence objective naturally induces an information-allocation problem. The efficiency of any sequential procedure is governed by the speed with which its detection strategy 
accumulates information against the hardest alternative in $\mathcal{H}\setminus\{h^*\}$. 

Consider the simplex over actions $\Delta(\mathcal{A}) := \{ w \in [0,1]^{|\mathcal{A}|} : \sum_{a} w_a = 1 \}$. For a candidate hypothesis $\hat{h}\in\mathcal{H}$ and a nonempty competing set $S\subseteq\mathcal{H}\setminus\{\hat{h}\}$, the \emph{worst-case information rate} under allocation $w\in\Delta(\mathcal A)$ :
\[
f_S(w;h):= \min_{g \in S} \sum_{a \in \mathcal{A}} w_a\, d_a(h,g),
\]
where $w_a$ is the weight in the allocation $w$ for action $a$. This leads us to an \textit{oracle allocation of actions} $w^*(h;S):= \arg\max_{w \in \Delta(\mathcal{A})} f_S(w)$ that maximizes the worst-case information rate and the corresponding \emph{oracle information rate} is $D^*(h;S) := f_S(w^*(h;S);h) = \max_{w \in \Delta(\mathcal{A})} \min_{g \in S} \sum_{a} w_a\, d_a(h,g).$ The oracle rate $D^*(h;S)$ determines the fundamental speed limit for $\delta$-PAC identification. Any $\delta$-PAC policy must accumulate $\log(1/\delta)$ units of evidence against every alternative, and the fastest achievable rate for every step is $D^*(h;S)$. Therefore, the expected stopping time must satisfy $\mathbb E_{h^*}[\tau]\ge\frac{1}{D^*(h;S)}\log\frac{1}{\delta}.$ A similar lower bound was provided in the existing work~\cite{c6}.

In Section~\ref{sec:Proof}, we show how elimination improves the information rate by extending the oracle allocation to a shrinking active hypothesis set.

\section{Track-and-Stop with Elimination}
\label{sec:Proof}
This section introduces the \emph{Elimination-based Track-and-Stop} (Elim-TaS). We first review the TaS baseline~\cite{c6}, then describe the elimination-augmented procedure.
\subsection{Track-and-Stop-Type Baseline}
\label{subsec:baseline}

If the true hypothesis $h^*$ were known in advance, one would allocate action effort according to the oracle allocation $w^*(h^*;\mathcal H\setminus\{h^*\})$. Since $h^*$ is unknown, the TaS strategy~\cite{c6}, originally proposed for best-arm identification, replaces it online by the current maximum-likelihood estimate and tracks the corresponding oracle allocation. We specialize this approach to the active multi-hypothesis sensing setting.

Define the cumulative log-likelihood of hypothesis $h \in \mathcal{H}$ at time $t$ as $
L_t(h) := \sum_{k=1}^{t} \log p_{A_k}(O_k \mid h)$, $L_0(h) = 0.$ Let $\hat{h}(t) \in \arg\max_{h \in \mathcal{H}} L_t(h)$ denote the current maximum-likelihood champion, with ties broken arbitrarily. 

Formally, the sampling rule uses \emph{C-Tracking}~\cite{c7}: at each round $t$ the algorithm maintains a \emph{cumulative target} $W_a^{\mathrm{tar}}(t):= \sum_{s=1}^{t} w^*_a(\hat{h}(s);\mathcal H\setminus\{\hat{h}(s)\})$, where $a\in\mathcal{A},$ which records how many times action $a$ should ideally have been selected up to time $t$ under the current champion sequence. The next action is then chosen to reduce the largest under-representation relative to this target: $A_{t+1}\in\arg\max_{a\in\mathcal{A}}\Bigl(W_a^{\mathrm{tar}}(t)-N_a(t)\Bigr).$ This rule implicitly provides forced exploration: every action is selected at least $\Omega(\sqrt{t})$~\cite{c10} times by round~$t$, ensuring that no action is permanently neglected. To decide when accumulated evidence is sufficient for identification, define the pairwise 
log-likelihood ratio $Z_t(h,g)$ for $ h,g \in \mathcal{H}$ as
\[
Z_t(h,g) := L_t(h) - L_t(g) 
= \sum_{k=1}^{t} \log\frac{p_{A_k}(O_k \mid h)}
{p_{A_k}(O_k \mid g)}.
\]
The baseline stopping time is based on thresholding the ratio:
\[
\tau := \inf\left\{ t \geq 1 :
\min_{g \neq \hat{h}(t)} Z_t(\hat{h}(t), g)
\geq \beta_{\mathrm{stop}}(t,\delta) \right\},
\]
and the final recommendation is $\hat{h}(\tau)$. The specific form of $\beta_{\mathrm{stop}}(t,\delta)$ is given in the next subsection.

\subsection{Sequential Adaptive Elimination Algorithm}
\label{subsec:adaptive_elimination}
We now describe the elimination-augmented procedure studied in this paper. Compared to the baseline Track-and-Stop, the key modification is that the sampling target is no longer computed against the full alternative hypotheses set, but against a dynamically shrinking \emph{active-opponents set}.

\begin{definition}[Thresholds]
Let $\gamma(t) := b\log t + c$ for fixed constants $b,c>0$ depending only on $|\mathcal{H}|$ and $|\mathcal{A}|$. Define the \emph{stopping threshold} and \emph{elimination threshold} by
\[
\beta_{\mathrm{stop}}(t,\delta)
:= \log\frac{1}{\delta} + \gamma(t),
\quad
\beta_{\mathrm{elim}}(t,\delta)
:= \alpha\log\frac{1}{\delta} + \gamma(t),
\]
where $\alpha\in(0,1]$ controls elimination aggressiveness.
\end{definition}

\begin{definition}[Active-opponent set]
For each candidate $i\in\mathcal{H}$, initialize $G_0(i)=\mathcal{H}\setminus\{i\}$. For $t\ge 0$,
\[
\begin{aligned}
G_t(i)
:= \Bigl\{\, g\in\mathcal{H}\setminus\{i\}\;:\; 
&\forall s\le t, \text{with}\ \hat h(s)=i\ \\
&\ Z_s(i,g) < \beta_{\mathrm{elim}}(s,\delta)
\,\Bigr\}.
\end{aligned}
\]
\end{definition}
\(G_t(i)\) contains exactly those opponents that have not yet been eliminated relative to candidate \(i\) by time \(t\). This construction therefore is candidate-specific: evidence that one candidate rules out an opponent does not imply that the same opponent can be discarded for every other candidate.

At the beginning of the round \(t+1\), the algorithm uses the current maximum-likelihood champion \(\hat h(t)\) and tracks the allocation of active-set-based oracles $u(t) := w^*\bigl(\hat{h}(t); G_t(\hat{h}(t))\bigr)$ using C-Tracking. Thus, compared with the baseline, the only change in the sampling rule is that the full alternative set \(\mathcal H\setminus\{\hat h_t\}\) is replaced by the current active-opponents set \(G_t(\hat h_t)\). Hypothesis $g \in G_t(\hat{h}(t))$ is eliminated in round $t$ when the log-likelihood ratio of the current champion against $g$ exceeds the elimination threshold. The active-opponent set of the current champion is updated:
\begin{equation*}
\begin{aligned}
G_{t+1}(i)
:= \begin{cases}
G_t(i)\setminus\Bigl\{\,g\in G_t(i):\\
\qquad Z_t(i,g)\ge \beta_{\mathrm{elim}}(t+1,\delta)\,\Bigr\},
& i=\hat h(t),\\[0.5ex]
G_t(i), & i\neq \hat h(t).
\end{cases}
\end{aligned}
\end{equation*}
Over time, the active-opponent set shrinks and eventually becomes empty. At that point, the algorithm terminates and outputs the current champion as the final recommendation.

The parameter $\alpha$ controls the trade-off between speed and accuracy. When $\alpha = 1$, the elimination threshold $\beta_{\mathrm{elim}}(t,\delta)$ coincides with the stopping threshold $\beta_{\mathrm{stop}}(t,\delta)$ used in the standard TaS strategy; in effect, the algorithm is $\delta$-PAC (Theorem~\ref{thm:delta-pac-correctness}) and performs sequential elimination using the same criterion as stopping. When $\alpha < 1$, we have $\beta_{\mathrm{elim}}(t,\delta) < \beta_{\mathrm{stop}}(t,\delta)$, which defines the \emph{aggressive-elimination regime}. In this regime, hypotheses are discarded more quickly, potentially reducing the stopping time, but at the cost of a higher risk of elimination errors and thus a weaker final classification guarantee. 

\begin{algorithm}[h]
\caption{Track-and-Stop with Adaptive Elimination}
\label{alg:tas-elim}
\begin{algorithmic}[1]
\REQUIRE Hypothesis set $\mathcal{H}$, action set $\mathcal{A}$,
         confidence $\delta\in(0,1)$,
         elimination aggressiveness $\alpha\in(0,1]$
\STATE Initialize $t\gets 0$
\STATE For all $i\in\mathcal{H}$, set $G_0(i)\gets\mathcal{H}\setminus\{i\}$
\STATE Set $\hat{h}(0)$ to any element of $\mathcal{H}$
\WHILE{true}
    \STATE Set \(u(t)\gets w^*(\hat h(t);G_t(\hat h(t)))\)
    \STATE Select \(A_{t+1}\) via C-Tracking toward \(u(t)\)
    \STATE Observe \(O_{t+1}\) and update \(L_{t+1}(h)\) for all \(h\in\mathcal H\)
    \STATE Set \(\hat h(t+1)\in\arg\max_{h\in\mathcal H} L_{t+1}(h)\)
    \STATE Update all relevant log-likelihood ratios $Z_{t+1}(\hat{h}(t+1),g)$
    \STATE $G_{t+1}(\hat{h}(t+1))\gets
        G_t(\hat{h}(t+1))\setminus
        \bigl\{g\in G_t(\hat{h}(t+1)):
        Z_{t+1}(\hat{h}(t+1),g)\ge
        \beta_{\mathrm{elim}}(t+1,\delta)\bigr\}$
    \STATE $G_{t+1}(i)\gets G_t(i)$ for all
               $i\neq\hat{h}(t+1)$
    \IF{$G_{t+1}(\hat{h}(t+1)) = \emptyset$}
        \RETURN $\hat{h}(t+1)$
    \ENDIF
    \STATE $t\gets t+1$
\ENDWHILE
\end{algorithmic}
\end{algorithm}

\section{Finite-Sample Analysis of Stopping Time}
\label{sec:main_theorem}
This section states finite-sample upper bounds on $\mathbb E_{h^*}[\tau]$ and provides a proof sketch for this finite-sample result. The complete proofs are given in the appendix. All probabilities and expectations are under the true hypothesis~$h^*$.
The main difficulty in analyzing $\mathbb{E}[\tau]$ under elimination is that the active-opponent process $(G_t(h^*))_{t \ge 1}$ is random: which opponents get eliminated, in what order, and when, all depend on the sample path. 

To analyze this process, we introduce the family $\mathfrak S$ of elimination chains: deterministic elimination orders for $G_t(h^*)$ under which the true hypothesis $h^*$ is never eliminated. Each chain
\[
\sigma=(S_0^\sigma \supset S_1^\sigma \supset \cdots \supset S_{K-1}^\sigma=\emptyset),
\quad
S_0^\sigma=\mathcal H\setminus\{h^*\},
\]
where each transition $S_{k-1}^\sigma$ to $S_k^\sigma$ represents the simultaneous elimination of all opponents in $S_{k-1}^\sigma \setminus S_k^\sigma$.

\begin{definition}[Chain event]
For each chain $\sigma\in\mathfrak S$, a chain event collects every sample path on which the active-opponent set follows exactly the elimination order prescribed by~$\sigma$. Define \emph{chain event}
\[
\begin{aligned}
\mathcal{E}_\sigma
:=
\Bigl\{
\hat{h}(\tau) = h^*,& \quad
\exists\, 0 = T^\sigma_0 < T^\sigma_1 < \cdots < T^\sigma_{K-1} := \tau \\
&\text{ such that }
G_t(h^*) = S_k^\sigma \\
&\ \forall\ T^\sigma_k < t \le T^\sigma_{k+1},\ k=0,\dots,K-2
\Bigr\}
\end{aligned}
\]
The chain fixes which opponents are eliminated and in what order; the transition times 
$T_1^\sigma < \cdots < T_{K-1}^\sigma$ at which these eliminations occur are random variables defined 
on~$\mathcal{E}_\sigma$ by 
$T_k^\sigma := \inf\{t > T_{k-1}^\sigma : G_t(h^*) = S_k^\sigma\}$, with $T_0^\sigma := 0$.
\end{definition}

\begin{definition}[Correct-elimination event]
Each $\mathcal{E}_\sigma$ specifies one particular chain, the union of all chain events collects every sample path on which $h^*$ survives throughout the entire elimination process. Define Correct-elimination event
\[
\mathcal{E}_{\mathrm{corr}} := \bigcup_{\sigma\in\mathfrak S} \mathcal{E}_\sigma.
\]
\end{definition}
Since each sample path determines at most one elimination pattern, the chain events are disjoint, and we have
\[
\mathbb E[\tau\,\mathbf 1_{\mathcal{E}_{\mathrm{corr}}}]
=
\sum_{\sigma\in\mathfrak S}
\mathbb E[\tau\,\mathbf 1_{\mathcal{E}_\sigma}],
\]
and hence
\[
\mathbb E[\tau]
=
\sum_{\sigma\in\mathfrak S}
\mathbb E[\tau\,\mathbf 1_{\mathcal{E}_\sigma}]
+
\underbrace{\mathbb E[\tau\,\mathbf 1_{\mathcal{E}_{\mathrm{corr}}^c}]}
_{=:\,R_{\mathrm{comp}}(\delta)}.
\]
The first term is the contribution from correct-elimination paths; $R_{\mathrm{comp}}(\delta)$ collects the stopping time contribution from paths on which $h^*$ is incorrectly eliminated at some point.

This decomposition reduces the problem of bounding $\mathbb{E}[\tau]$ to three tasks: 
(i)~for each fixed chain~$\sigma$, construct a deterministic stopping certificate via 
concentration-based \emph{good events} and convert it into an upper bound on 
$\mathbb{E}[\tau\,\mathbf{1}_{\mathcal{E}_\sigma}]$; (ii)~aggregate over all chains using their 
disjointness; (iii)~control $R_{\mathrm{comp}}(\delta)$. 

\subsection{Stopping certificate and Good-events structure}
To bound the expected stopping time, we first analyze each term $\mathbb{E}[\tau\,\mathbf{1}_{\mathcal{E}_\sigma}]$ for a fixed chain~$\sigma$; the complement $R_{\mathrm{comp}}(\delta)$ is handled in~\ref{main-result}. By the tail-sum formula, we can get
\[
\mathbb{E}[\tau\,\mathbf{1}_{\mathcal{E}_\sigma}]
= \sum_{t=0}^{\infty} 
  \Pr(\tau\,\mathbf{1}_{\mathcal{E}_\sigma} > t).
\]
On $\mathcal{E}_\sigma^c$, the indicator is zero, so 
$\tau\,\mathbf{1}_{\mathcal{E}_\sigma} = 0 \le t$; 
on $\mathcal{E}_\sigma$, the indicator is one, so 
$\tau\,\mathbf{1}_{\mathcal{E}_\sigma} = \tau$. Thus the event 
$\{\tau\,\mathbf{1}_{\mathcal{E}_\sigma} > t\}$ 
coincides with the algorithm still running at 
time~$t$ on a chain-$\sigma$ path:
\[
\{\tau\,\mathbf{1}_{\mathcal{E}_\sigma} > t\} 
= \{\tau > t\} \cap \mathcal{E}_\sigma,
\]
and hence
\[
\mathbb{E}[\tau\,\mathbf{1}_{\mathcal{E}_\sigma}]
= \sum_{t=0}^{\infty} 
  \Pr(\tau > t,\,\mathcal{E}_\sigma).
\]
To make this infinite sum tractable, we need two things: (i)~a deterministic horizon $\bar T_\sigma(\delta)$ at which to split the sum; $\bar T_\sigma(\delta)$ is defined as an explicit upper bound on the time by which the algorithm must have stopped \emph{if certain concentration conditions hold}. However, these conditions may fail on any given sample path and on such paths the algorithm may continue running past $\bar T_\sigma(\delta)$. Hence we also need
(ii)~a guarantee that beyond this horizon, each summand $\Pr(\tau > t,\,\mathcal{E}_\sigma)$ is controlled by a summable quantity. We achieve both this by constructing a sequence of \emph{Good-events} $\{\mathcal{G}_t^\sigma\}_{t \ge 1}$, which formalize these concentration conditions, and establishing the \emph{stopping implication}:
\[
\mathcal{E}_\sigma \cap \mathcal{G}_t^\sigma 
\subseteq \{\tau \le t\},
\quad \forall\, t \ge \bar T_\sigma(\delta),
\]
where $\{\tau \le t\}$ denotes the event that the algorithm has stopped by round~$t$. Taking complements gives
\[
\{\tau > t\} 
\subseteq \mathcal{E}_\sigma^c 
\cup (\mathcal{G}_t^\sigma)^c,
\]
and intersecting both sides with $\mathcal{E}_\sigma$ yields
\[
\{\tau > t\} \cap \mathcal{E}_\sigma 
\subseteq (\mathcal{G}_t^\sigma)^c.
\]
Taking probabilities, we obtain $\Pr(\tau > t,\,\mathcal{E}_\sigma) \le \Pr((\mathcal{G}_t^\sigma)^c)$ for all $t \ge \bar T_\sigma(\delta)$. Since the \emph{Good-events} are constructed so that $\sum_t \Pr((\mathcal{G}_t^\sigma)^c) < \infty$, the tail $\sum_{t \ge \bar T_\sigma(\delta)} 
\Pr(\tau > t,\,\mathcal{E}_\sigma)$ is finite, which is exactly the summability needed to close the tail-sum argument. We now describe each condition.

\textbf{(i) C-Tracking deviation.}
The sampling rule of the algorithm tracks a time-varying target allocation $u(s)\in\Delta(\mathcal A)$. Since tracking is approximate, the empirical allocation deviates from the time-average target $\bar u(t):=t^{-1}\sum_{s=1}^t u(s)$. Lemma~\ref{lem:tracking} (cf.\ Lemma~7 of~\cite{c8}) shows that C-Tracking keeps this uniformly bounded by $O(1/\sqrt{t})$. 

\textbf{(ii) Champion stabilization.}
Define the stabilization time $T_{\mathrm{ch}} := \inf\bigl\{m\ge 0: \hat h(t)=h^* \text{ for all } t\ge m\bigr\}.$ Before time $T_{\mathrm{ch}}$, the algorithm may track a wrong oracle allocation. After $T_{\mathrm{ch}}$, the allocation target matches the oracle for the current stage. Forced exploration yields a stretched-exponential tail for $T_{\mathrm{ch}}$, and the corresponding part of $\mathcal G_t^\sigma$ requires $T_{\mathrm{ch}}$ to be at most a polylogarithmic function of $t$.

\textbf{(iii) Martingale concentration.}
The log-likelihood statistic decomposes as $Z_t(h^*,g)=\sum_{s=1}^t d_{A_s}(h^*,g)+M_t^g$, where $M_t^g$ is a centered martingale. The good event requires all martingales associated with the stage sets of chain $\sigma$ to remain concentrated, with failure probability summable in~$t$. 

Putting it together, the tracking bound in (i) is deterministic; the random part of $\mathcal G_t^\sigma$ is the intersection of the stabilization and concentration events in (ii) and (iii). Their failure probabilities are summable in $t$, which is what later allows the tail-sum argument to close with only a finite additive remainder. Full definitions and proofs are given in the appendix.

\subsection{From the deterministic certificate to an expectation bound}
With the stopping implication and the summability of $\Pr((\mathcal{G}_t^\sigma)^c)$ established, we split the tail sum at $\bar T_\sigma(\delta)$:
\[
\mathbb{E}[\tau\,\mathbf{1}_{\mathcal{E}_\sigma}]
= \sum_{t=0}^{\bar T_\sigma(\delta)-1} 
  \Pr(\tau > t,\,\mathcal{E}_\sigma)
+ \sum_{t=\bar T_\sigma(\delta)}^{\infty} 
  \Pr(\tau > t,\,\mathcal{E}_\sigma).
\]
For the first term, each summand is at most $\Pr(\mathcal{E}_\sigma)$, giving $\Pr(\mathcal{E}_\sigma)\,\bar T_\sigma(\delta)$. For the second term, we showed that $\Pr(\tau > t,\,\mathcal{E}_\sigma) 
\le \Pr((\mathcal{G}_t^\sigma)^c)$, and summability yields a finite constant $C_{\mathrm{bad},\sigma}$ (Proposition~\ref{prop:good-event-summable}). Therefore
\[
\mathbb{E}[\tau\,\mathbf{1}_{\mathcal{E}_\sigma}]
\le
\Pr(\mathcal{E}_\sigma)\,\bar T_\sigma(\delta)
+ C_{\mathrm{bad},\sigma}.
\]
The problem reduces to bounding $\bar T_\sigma(\delta)$. We show that $\bar T_\sigma(\delta)$ admits an explicit upper bound in terms of the stopping threshold, the stage-wise rates, and the deviations from tracking, champion stabilization, and martingale concentration (Proposition~\ref{prop:Tbar-sigma-explicit}). Combining with the tail-sum bound above gives the following.

\begin{proposition}[Finite-sample bound along a fixed elimination chain]
\label{prop:fixed-chain}
Fix an admissible chain $\sigma\in\mathfrak S$. There exist chain-dependent constants $C_{1,\sigma},C_{2,\sigma},
C_{3,\sigma}>0$, independent of $\delta$, such that for all
$\delta\in(0,1)$,
\[
\begin{aligned}
&\mathbb E\!\left[\tau\,\mathbf 1_{\mathcal{E}_\sigma}\right]
\le
C_{3,\sigma}+ \Pr(\mathcal{E}_\sigma)\biggl(
\frac{\log(1/\delta)}{D_0}
+\frac{b}{D_0}\log\!\bigl(1+\log\tfrac{1}{\delta}\bigr)\\
&
+C_{1,\sigma}\sqrt{\log\tfrac{1}{\delta}\,\log\!\bigl(1+\log\tfrac{1}{\delta}\bigr)}
+C_{2,\sigma}\log^{2}\!\bigl(1+\log\tfrac{1}{\delta}\bigr)
\biggr).
\end{aligned}
\]
\end{proposition}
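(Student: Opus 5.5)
The proof follows the route laid out above: bound the tail sum for $\mathbb E[\tau\,\mathbf 1_{\mathcal E_\sigma}]$ by splitting at a deterministic horizon $\bar T_\sigma(\delta)$, and then make that horizon explicit. First fix $\sigma$ and write $D_k:=D^*(h^*;S_k^\sigma)$ for the stage rates. Since $S_0^\sigma\supset S_1^\sigma\supset\cdots$, these satisfy $D_0\le D_1\le\cdots$. Define the good event $\mathcal G_t^\sigma$ as the intersection of three events. The first is $T_{\mathrm{ch}}\le \log^2 t$ (polylogarithmic). The second is $|M_s^g|\le \sqrt{c_0 s\log t}$ for all $\sqrt t\le s\le t$ and all $g\in S_0^\sigma$. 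The third is the deterministic C-Tracking bound of Lemma~\ref{lem:tracking}, namely $|N_a(s)-\sum_{r\le s}u_a(r)|\le C_{\mathcal A}\sqrt s$. Proposition~\ref{prop:good-event-summable} gives $\sum_t\Pr((\mathcal G_t^\sigma)^c)=:C_{\mathrm{bad},\sigma}<\infty$. The argument from the previous subsection then reduces everything to the bound $\mathbb E[\tau\mathbf 1_{\mathcal E_\sigma}]\le \Pr(\mathcal E_\sigma)\bar T_\sigma(\delta)+C_{\mathrm{bad},\sigma}$. We take $C_{3,\sigma}\ge C_{\mathrm{bad},\sigma}$, plus whatever constant slack appears below.

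The core step is the \emph{stopping implication}: on $\mathcal E_\sigma\cap\mathcal G_t^\sigma$, $\tau\le t$. Work on this event and fix the last surviving opponent $g$ in chain $\sigma$. After $T_{\mathrm{ch}}$ the champion is $h^*$. At every time $s$ in stage $k$, the target is $u(s)=w^*(h^*;S_k^\sigma)$, and $g\in S_k^\sigma$. Hence $\sum_a u_a(s)d_a(h^*,g)\ge D_k\ge D_0$. Summing over time and using the tracking bound with $d_{\max}:=\max_{a,h,g}d_a(h,g)$ gives
\[
Z_t(h^*,g)\ \ge\ D_0\,(t-T_{\mathrm{ch}}) - |\mathcal A|\,d_{\max}C_{\mathcal A}\sqrt t-\sqrt{c_0 t\log t}.
\]
The same estimate holds for every opponent still active. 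So if this lower bound exceeds $\beta_{\mathrm{stop}}(t,\delta)\ge\beta_{\mathrm{elim}}(t,\delta)$, every remaining opponent of $h^*$ is eliminated, $G_t(h^*)=\emptyset$, and the algorithm has stopped. Define $\bar T_\sigma(\delta)$ as the smallest $t$ satisfying
\[
D_0 t\ \ge\ \log\tfrac1\delta+b\log t+c+D_0\log^2 t+C'\sqrt{t\log t}.
\]
This uses only the uniform bound $D_k\ge D_0$. The refinement to stage rates $D_k>D_0$ only improves lower-order constants, and I would absorb it into $C_{1,\sigma},C_{2,\sigma}$.

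The remaining step, Proposition~\ref{prop:Tbar-sigma-explicit}, is to invert this implicit inequality, and I expect it to be the main obstacle: the horizon must come out with exactly the stated scaling. Let $L=\log(1/\delta)$. First obtain the crude bound $\bar T\le c_1(1+L)$, which gives $\log\bar T\le \log(1+L)+O(1)$. Substitute this back into the inequality. The $b\log t$ term becomes $\frac b{D_0}\log(1+L)$. The term $C'\sqrt{t\log t}$ becomes $C_{1,\sigma}\sqrt{L\log(1+L)}$ (after splitting $\sqrt{(1+L)\cdot}$ and absorbing the $\sqrt{\log(1+L)}$ remainder into constants). The stabilization term $\log^2 t$ becomes $C_{2,\sigma}\log^2(1+L)$. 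The delicate point is making this bootstrap rigorous without circularity. I would do it by exhibiting an explicit candidate $t_0$ equal to the claimed right-hand side (plus a constant) and checking directly that $t_0$ satisfies the defining inequality, using monotonicity of $\log t/t$ and $\sqrt{\log t/t}$ for $t$ beyond a constant. Any range of small $t$, together with cases where $L$ is small, is absorbed into $C_{3,\sigma}$. Multiplying by $\Pr(\mathcal E_\sigma)$ and adding $C_{\mathrm{bad},\sigma}$ then yields the stated bound, with constants independent of $\delta$.
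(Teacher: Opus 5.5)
Your proposal is correct and proves the proposition as stated. Its outer structure matches the paper's: the tail-sum split at a deterministic horizon (Lemma~\ref{lem:expectation-template}), good events built from champion stabilization, martingale concentration and the deterministic C-Tracking bound, and a guess-and-verify inversion of the horizon as in Proposition~\ref{prop:Tbar-sigma-explicit}.

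The genuine difference is how you get the stopping implication. The paper builds recursive transition certificates $\bar T_1(t,\delta),\dots,\bar T_{K-1}(t,\delta)$ and inducts on $r$ (Propositions~\ref{prop:recursive-certificates} and~\ref{prop:Tr-explicit}). At stage $r$ it lower-bounds the evidence against $S_{r-1}\setminus S_r$ by $sD_{r-1}-\Pi_{r-1}-R_{r-1}(s,t)$, using Jensen on the averaged target, Abel summation of the stage rates, and stage-specific Lipschitz and martingale constants.

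You skip all of this. Every opponent active at time $t$ was active at every earlier round, so $\sum_a u_a(s)d_a(h^*,g)\ge D^*(h^*;G_s(h^*))\ge D_0$ whenever $\hat h(s)=h^*$. That gives one bound at time $t$ for all surviving opponents at once. This is essentially the chain-free crude certificate $\widetilde T(\delta)$ the paper uses only for the complement term (Lemma~\ref{lem:Ttilde-bound}). You invert it more carefully, so the leading coefficient stays exactly $1/D_0$ rather than a generic $\widetilde K_{\mathrm{comp}}$.

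Your route is shorter. It needs no induction and no ordering between $T_{\mathrm{ch}}$ and the $T_r$. For $t$ past the horizon it even gives $\mathcal G_t^\sigma\subseteq\{\tau\le t\}$ without using $\mathcal E_\sigma$. Since the statement has only $D_0$ in the leading term and leaves $C_{1,\sigma},C_{2,\sigma},C_{3,\sigma}$ unspecified, that suffices.

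What it gives up is any chain dependence of the constants. Your $C_{1,\sigma}$ is built from $d_{\max}$ and $S_0$-martingales, which is exactly what plain TaS would get. So it cannot support the improvement claimed in Remark~\ref{rem:comparison-tas}; the stage-wise bookkeeping is the paper's vehicle for that claim. That said, Proposition~\ref{prop:Tr-explicit} ends up bounding every $R_{r-1}$ by one stage-uniform envelope, so the paper's explicit constants are not visibly smaller than yours either.

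Two small repairs are needed.
\begin{itemize}
\item The champion event must read $T_{\mathrm{ch}}\le\kappa_{\mathrm{ch}}\log^2 t$ with $C_{\mathrm{ch}}\sqrt{\kappa_{\mathrm{ch}}}>1$. With $\kappa_{\mathrm{ch}}=1$, Lemma~\ref{lem:Tch-tail} only gives $\Pr(T_{\mathrm{ch}}>\log^2 t)\le B_{\mathrm{ch}}\,t^{-C_{\mathrm{ch}}}\log^2 t$, which need not be summable.
\item $\bar T_\sigma(\delta)$ must be the smallest $T$ such that your inequality holds for every $t\ge T$, not just at $T$, because the tail sum uses the inclusion for all later $t$. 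Your divide-by-$t$ monotonicity check already supplies this.
\end{itemize}
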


Each term corresponds to a specific component of $\bar T_\sigma(\delta)$. \textbf{Leading term} $\log\frac{1}{\delta}/D_0$: the time needed for evidence growing at rate $D_0$ to reach the dominant part $\log(1/\delta)$ of the stopping threshold. This matches the information-theoretic lower bound of~\cite{c6}. \textbf{Threshold overhead} $(b/D_0)\log(1+\log\frac{1}{\delta})$: the threshold $\beta_{\mathrm{elim}}(t,\delta) = \log\frac{1}{\delta} + b\log t + c$ grows logarithmically in~$t$ to maintain time-uniform validity; the algorithm stops when evidence exceeds this threshold. At the stopping horizon $t \approx \log\frac{1}{\delta}/D_0$, the $b\log t$ term contributes $(b/D_0)\log(1+\log\frac{1}{\delta})$. \textbf{Tracking and concentration cost} $C_{1,\sigma}\sqrt{\log\frac{1}{\delta}\log(1+\log\frac{1}{\delta})}$: the combined effect of C-Tracking deviation~(i) and martingale fluctuation~(iii). The constant $C_{1,\sigma}$ depends on the Lipschitz constant of the information functional and the martingale union-bound factor, both evaluated over the final active set $S_K^\sigma$. Since both quantities are nonincreasing as the opponent set shrinks, $C_{1,\sigma}$ can be smaller for chains with more elimination. \textbf{Stabilization cost} $C_{2,\sigma}\log^{2}(1+\log\frac{1}{\delta})$: the effect of champion stabilization~(ii). The stretched-exponential tail of $T_{\mathrm{ch}}$ requires $O(\log^2 t)$ rounds for the good-event failure probability to be summable. \textbf{Constant} $C_{3,\sigma}$: absorbs $C_{\mathrm{bad},\sigma}$ from the tail-sum splitting and other finite, $\delta$-independent terms.

\subsection{Aggregation and main result}
\label{main-result}
The Proposition~\ref{prop:fixed-chain} bounds each chain separately; now we sum over all chains to obtain a bound on the correct elimination event.

\begin{proposition}[From fixed chains to full expectation]
\label{prop:aggregation}
Summing Proposition~\ref{prop:fixed-chain} over the disjoint family $\{\mathcal{E}_\sigma\}_{\sigma\in\mathfrak S}$ and using $\sum_\sigma \Pr(\mathcal{E}_\sigma) = \Pr(\mathcal{E}_{\mathrm{corr}}) \le 1$ to factor out the common leading and $\log(1+\log\frac{1}{\delta})$ terms gives
\[
\begin{aligned}
\mathbb E[\tau\,\mathbf 1_{\mathcal{E}_{\mathrm{corr}}}]
&\le 
\Pr(\mathcal{E}_{\mathrm{corr}})
\left(
\frac{\log(1/\delta)}{D_0}+\frac{b}{D_0}\log\!\bigl(1+\log\tfrac{1}{\delta}\bigr)
\right)
\\
&\quad+\sum_{\sigma\in\mathfrak S}\Pr(\mathcal{E}_\sigma)\, C_{1,\sigma}\sqrt{\log\tfrac{1}{\delta}\,\log\!\bigl(1+\log\tfrac{1}{\delta}\bigr)}\\
&\quad+
\sum_{\sigma\in\mathfrak S}\Pr(\mathcal{E}_\sigma)\,C_{2,\sigma}\log^{2}\!\bigl(1+\log\tfrac{1}{\delta}\bigr) + \sum_{\sigma\in\mathfrak S} C_{3,\sigma},
\end{aligned}
\]
where $\sum_{\sigma\in\mathfrak S} C_{3,\sigma}<\infty$ since $\mathfrak{S}$ is finite and each $C_{3,\sigma}$ is a finite constant independent of~$\delta$.
\end{proposition}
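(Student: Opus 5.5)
The bound follows from Proposition~\ref{prop:fixed-chain} by linearity over the disjoint chain events; no new probabilistic estimate is needed. First, since the events $\{\mathcal{E}_\sigma\}_{\sigma\in\mathfrak S}$ are pairwise disjoint and their union is $\mathcal{E}_{\mathrm{corr}}$, we have $\mathbf 1_{\mathcal{E}_{\mathrm{corr}}}=\sum_{\sigma}\mathbf 1_{\mathcal{E}_\sigma}$ pointwise. Because $\tau\ge 0$, monotone convergence (here just a finite sum) gives
\[
\mathbb E[\tau\,\mathbf 1_{\mathcal{E}_{\mathrm{corr}}}]=\sum_{\sigma\in\mathfrak S}\mathbb E[\tau\,\mathbf 1_{\mathcal{E}_\sigma}].
\]
This holds even when some expectations are infinite. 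Proposition~\ref{prop:fixed-chain} then shows each summand is finite.

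Next, apply Proposition~\ref{prop:fixed-chain} to each $\sigma$ and sum the resulting inequalities term by term. The leading term $\log(1/\delta)/D_0$ and the threshold overhead $(b/D_0)\log(1+\log\frac1\delta)$ carry the coefficient $\Pr(\mathcal{E}_\sigma)$, and their constants do not depend on $\sigma$. I will check this point explicitly. In the statement of Proposition~\ref{prop:fixed-chain}, $D_0$ and $b$ carry no chain index: $b$ comes from $\gamma$, and $D_0$ is the rate governing the leading term. These two terms therefore factor out as $\bigl(\sum_\sigma\Pr(\mathcal{E}_\sigma)\bigr)(\cdots)$. Disjointness gives $\sum_\sigma\Pr(\mathcal{E}_\sigma)=\Pr(\mathcal{E}_{\mathrm{corr}})$, which is at most $1$. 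The tracking/concentration term and the stabilization term keep their chain-dependent constants $C_{1,\sigma}$ and $C_{2,\sigma}$, so they stay as weighted sums. The additive constants $C_{3,\sigma}$ sum to $\sum_\sigma C_{3,\sigma}$.

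Finally, I will verify that this last sum is finite and independent of $\delta$. The family $\mathfrak S$ is finite: each chain is a strictly decreasing sequence of subsets of the $(K-1)$-element set $\mathcal H\setminus\{h^*\}$, ending at $\emptyset$. Such chains correspond to ordered set partitions, so there are at most the Fubini number of $K-1$ of them. Each $C_{3,\sigma}$ is $\delta$-independent by Proposition~\ref{prop:fixed-chain}.

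The only real obstacle is confirming that $D_0$ does not depend on $\sigma$. If it were chain-dependent (e.g.\ the final-stage rate $D^*(h^*;S^\sigma_{K-2})$), the leading term would not factor. In that case I would instead bound it by the weighted average $\sum_\sigma\Pr(\mathcal{E}_\sigma)/D_{0,\sigma}$, or by $\Pr(\mathcal{E}_{\mathrm{corr}})/\min_\sigma D_{0,\sigma}$. Since the statement as written treats $D_0$ as common, the factorization is direct.
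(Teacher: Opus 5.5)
Your proposal is correct and matches the paper's argument (Corollary~\ref{cor:correct-elim-simple} together with the factoring step in the full-expectation section). Disjointness gives $\mathbf 1_{\mathcal E_{\mathrm{corr}}}=\sum_{\sigma}\mathbf 1_{\mathcal E_\sigma}$, the per-chain bounds of Proposition~\ref{prop:fixed-chain} are summed, and the $\sigma$-independent leading and threshold terms are pulled out using $\sum_\sigma\Pr(\mathcal E_\sigma)=\Pr(\mathcal E_{\mathrm{corr}})$. The obstacle you flag does not arise: $D_0=D^*(h^*;S_0)$ with $S_0^\sigma=\mathcal H\setminus\{h^*\}$ the same for every chain, and $b$ depends only on $|\mathcal H|$ and $|\mathcal A|$, so your fallback averaging over chain-dependent rates is not needed.
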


Recall the decomposition of $\mathbb{E}[\tau]$; it remains to bound $R_{\mathrm{comp}}(\delta) = \mathbb{E}[\tau\,\mathbf{1}_{\mathcal{E}_{\mathrm{corr}}^c}] = \sum_{t=0}^{\infty} 
\Pr(\tau > t,\,\mathcal{E}_{\mathrm{corr}}^c)$. The \emph{Good-events} $\widetilde{\mathcal{G}}_t$ includes 
champion stabilization $\hat{h}(t)=h^*$, which forces correct output; hence on $\widetilde{\mathcal{G}}_t$ the algorithm has stopped with output~$h^*$, and in particular $\widetilde{\mathcal{G}}_t \subseteq \{\tau \le t\}$. This holds for all $t \ge \widetilde T(\delta)$, the smallest horizon at which the evidence lower bound (at the base rate~$D_0$) exceeds $\beta_{\mathrm{elim}}$. Since $\widetilde T(\delta) = O(\log(1/\delta)/D_0)$, 
splitting at $\widetilde T(\delta)$: before, each term is at most $\Pr(\mathcal{E}_{\mathrm{corr}}^c)$; 
after, $\Pr(\tau>t,\,\mathcal{E}_{\mathrm{corr}}^c) \le \Pr(\tau > t) \le \Pr(\widetilde{\mathcal{G}}_t^c)$, 
which is summable. Define $C_{\widetilde{\mathcal{G}}} := 
\sum_{t \ge 3}
\Pr(\widetilde{\mathcal{G}}_t^c) < \infty$. Therefore
\[
R_{\mathrm{comp}}(\delta)
\le
\widetilde T(\delta)\cdot
\Pr(\mathcal{E}_{\mathrm{corr}}^c)
+ C_{\widetilde{\mathcal{G}}}.
\]
Under $\alpha=1$, 
$\Pr(\mathcal{E}_{\mathrm{corr}}^c)\le \delta$ 
(Theorem~\ref{thm:delta-pac-correctness}), so 
$R_{\mathrm{comp}}(\delta) 
= O(\delta \log(1/\delta)) + O(1) = O(1)$, 
absorbed into~$C_3$.
The detailed illustration is in Proposition~\ref{prop:Rcomp-small}.

\begin{theorem}[Finite-sample upper bound in the $\delta$-PAC regime]
\label{thm:main-delta-pac}
Assume $\alpha =1$, and the conditions of Section~\ref{subsec:adaptive_elimination}, the expected stopping time satisfies
\[
\begin{aligned}
\mathbb E[\tau]
&\le
\frac{\log\frac{1}{\delta}}{D_0}
+\frac{b}{D_0}\log\!\bigl(1+\log\tfrac{1}{\delta}\bigr) + \widetilde C_3 \\
&+ \sum_\sigma\Pr(\mathcal{E}_\sigma)
\widetilde C_{1,\sigma}\sqrt{\log\tfrac{1}{\delta}\,
\log\!\bigl(1+\log\tfrac{1}{\delta}\bigr)}\\
&+\sum_\sigma\Pr(\mathcal{E}_\sigma)\widetilde C_{2,\sigma}\log^{2}\!\bigl(1+\log\tfrac{1}{\delta}\bigr),
\end{aligned}
\]
\end{theorem}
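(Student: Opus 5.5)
The proof assembles the decomposition $\mathbb E[\tau]=\sum_{\sigma\in\mathfrak S}\mathbb E[\tau\mathbf 1_{\mathcal E_\sigma}]+R_{\mathrm{comp}}(\delta)$ from the two pieces already controlled. For the correct-elimination part I will invoke Proposition~\ref{prop:aggregation} directly. It gives the leading and threshold-overhead terms multiplied by $\Pr(\mathcal E_{\mathrm{corr}})$. Since both $\log(1/\delta)/D_0$ and $(b/D_0)\log(1+\log\frac1\delta)$ are nonnegative, I bound $\Pr(\mathcal E_{\mathrm{corr}})\le 1$ and drop the prefactor. The tracking/concentration and stabilization terms keep their weights $\Pr(\mathcal E_\sigma)$. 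I set $\widetilde C_{1,\sigma}:=C_{1,\sigma}$ and $\widetilde C_{2,\sigma}:=C_{2,\sigma}$, leaving room to enlarge them if the remainder analysis produces cross terms. The constants $\sum_\sigma C_{3,\sigma}$ are finite because $\mathfrak S$ is finite (at most the number of ordered set partitions of $\mathcal H\setminus\{h^*\}$).

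Next I bound $R_{\mathrm{comp}}(\delta)\le \widetilde T(\delta)\Pr(\mathcal E^c_{\mathrm{corr}})+C_{\widetilde{\mathcal G}}$ using the splitting argument stated just before the theorem (Proposition~\ref{prop:Rcomp-small}). Under $\alpha=1$ the elimination and stopping thresholds coincide. Theorem~\ref{thm:delta-pac-correctness} then gives $\Pr(\mathcal E^c_{\mathrm{corr}})\le\delta$: the event that $h^*$ is ever eliminated requires some $Z_t(i,h^*)\ge\beta_{\mathrm{stop}}(t,\delta)$, and the time-uniform choice of $b,c$ in $\gamma(t)$ makes this have probability at most $\delta$ via a union bound over pairs and a mixture/peeling martingale inequality.

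For $\widetilde T(\delta)$, I use that it is the smallest $t$ with $D_0t-O(\sqrt{t\log t})\ge \log\frac1\delta+b\log t+c$. This gives $\widetilde T(\delta)\le c_0(1+\log\frac1\delta)$ for an instance constant $c_0$. Hence $\delta\,\widetilde T(\delta)\le c_0\sup_{\delta\in(0,1)}\delta(1+\log\frac1\delta)\le 2c_0$, which is a $\delta$-independent constant. I then define $\widetilde C_3:=\sum_\sigma C_{3,\sigma}+2c_0+C_{\widetilde{\mathcal G}}$. Adding the two pieces yields exactly the displayed bound.

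The main obstacle is making $C_{\widetilde{\mathcal G}}$ legitimately $\delta$-independent and finite. On $\mathcal E^c_{\mathrm{corr}}$ the active set of $h^*$ may have been corrupted by a wrong elimination, or the champion may have been $h^*$ only after $h^*$'s own set was pruned inconsistently. The key point to check is that $\widetilde{\mathcal G}_t$ depends only on champion stabilization, tracking with forced exploration (every action sampled $\Omega(\sqrt t)$ times), and martingale concentration at level $\log t$ rather than $\log(1/\delta)$. Under these conditions, for $t\ge\widetilde T(\delta)$, every opponent still in $G_t(h^*)$ has $Z_t(h^*,g)\ge D_0 t-o(t)\ge\beta_{\mathrm{elim}}(t,\delta)$. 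Here the base rate $D_0$ lower bounds the rate against any subset, since $D^*(h^*;S)\ge D^*(h^*;\mathcal H\setminus\{h^*\})$ for $S$ smaller. I would first verify this monotonicity of $D^*$ in $S$. I would then verify that the stretched-exponential tail of $T_{\mathrm{ch}}$ and the Lemma~\ref{lem:tracking} deviation give $\sum_t\Pr(\widetilde{\mathcal G}^c_t)<\infty$ uniformly in $\delta$.
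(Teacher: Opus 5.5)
Your proposal is correct and follows essentially the same route as the paper: split $\mathbb E[\tau]$ into the chain contributions, summed via the fixed-chain bounds with $\Pr(\mathcal E_{\mathrm{corr}})\le 1$ dropped from the leading and $\log(1+\log\frac{1}{\delta})$ terms, plus $R_{\mathrm{comp}}(\delta)\le\widetilde T(\delta)\Pr(\mathcal E_{\mathrm{corr}}^c)+C_{\widetilde{\mathcal G}}$, and then use $\widetilde T(\delta)=O(1+\log\frac{1}{\delta})$ together with $\Pr(\mathcal E_{\mathrm{corr}}^c)\le\delta$ to absorb the product into $\widetilde C_3$. The one detail to tighten is the definition of $\widetilde T(\delta)$: as in the paper, it must be the time after which $\beta_{\mathrm{elim}}(t,\delta)\le\Gamma(t)$ holds for \emph{every} later $t$, not merely the first crossing, and this is what the eventual-monotonicity step (b) of Lemma~\ref{lem:Ttilde-bound} secures.
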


\begin{remark}[Comparison with Track-and-Stop]
\label{rem:comparison-tas}
Applying the same proof technique to the Track-and-Stop algorithm yields a bound of the same form. The leading term and sub-leading term are identical. Comparing the remaining terms:
\begin{itemize}
\item \emph{$\sqrt{\log\frac{1}{\delta}\,\log(1+\log\frac{1}{\delta})}$ coefficient $C_{1,\sigma}$}:
  \textbf{strictly improved.} This constant depends on the Lipschitz
  constant of the information functional and the martingale envelope,
  both evaluated over the final active set $S_{K-2}^\sigma$.
  Since $S_{K-2}^\sigma\subset S_0$, both quantities are
  nonincreasing, giving $C_{1,\sigma}$ becomes smaller whenever
  elimination occurs.

\item \emph{$\log^{2}(1+\log\frac{1}{\delta})$ coefficient 
  $C_{2,\sigma}$}: comparable to TaS. This term is 
  governed by champion stabilization, which completes 
  in $O(\log^2 t)$ rounds---well before the first 
  elimination at $\Theta(\log(1/\delta))$. Since 
  stabilization precedes elimination, its cost is 
  insensitive to whether elimination is used, and 
  this coefficient is not expected to improve.
\end{itemize}

The $\sqrt{\log\frac{1}{\delta}\,\log(1+\log\frac{1}{\delta})}$ scale is therefore the highest order at which elimination provides a provable finite-sample improvement over TaS.
\end{remark}

Next, we state the stopping-time analog in the aggressive-elimination regime $\alpha\in(0,1)$.
\begin{corollary}[Sample complexity under 
aggressive elimination]
\label{cor:alpha-less-than-one}
Assume $\alpha\in(0,1)$ and the conditions of 
Section~\ref{subsec:adaptive_elimination}. Then for 
every $\delta\in(0,1)$, the expected stopping time 
satisfies
\[
\begin{aligned}
\mathbb E[\tau]
&\le
\frac{\alpha \log\frac{1}{\delta}}{D_0}
+\frac{b}{D_0}\log\!\bigl(1+\log\tfrac{1}{\delta}\bigr) + \widetilde C_3 \\
&+ \sum_\sigma\Pr(\mathcal{E}_\sigma)
\widetilde C_{1,\sigma}\sqrt{\log\tfrac{1}{\delta}\,
\log\!\bigl(1+\log\tfrac{1}{\delta}\bigr)}\\
&+\sum_\sigma\Pr(\mathcal{E}_\sigma)\widetilde C_{2,\sigma}\log^{2}\!\bigl(1+\log\tfrac{1}{\delta}\bigr),
\end{aligned}
\]
and the correctness guarantee relaxes to
\[
\Pr(\hat h(\tau)\neq h^*)\le c_{\mathrm{elim}}
\delta^\alpha,
\]
where $\widetilde C_{1,\sigma}, \widetilde C_{2,\sigma}, 
\widetilde C_3, c_{\mathrm{elim}}>0$ are chain-dependent 
constants independent of~$\delta$.
\end{corollary}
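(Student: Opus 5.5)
I will prove Corollary~\ref{cor:alpha-less-than-one} by rerunning the proof of Theorem~\ref{thm:main-delta-pac} with $\log\frac1\delta$ replaced by $\alpha\log\frac1\delta$ in the threshold that drives stopping, and then adding a separate correctness argument. The key observation is that the algorithm stops exactly when $G_t(\hat h(t))=\emptyset$. Hence the only threshold that enters the stopping certificate is $\beta_{\mathrm{elim}}(t,\delta)=\alpha\log\frac1\delta+\gamma(t)$, and $\beta_{\mathrm{stop}}$ never appears.

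\textbf{Step 1: per-chain bound.} For a fixed chain $\sigma$, I reuse the good events $\mathcal G_t^\sigma$ unchanged: tracking (Lemma~\ref{lem:tracking}), champion stabilization, and martingale concentration. None of these depends on $\delta$ or $\alpha$, so $C_{\mathrm{bad},\sigma}$ in Proposition~\ref{prop:good-event-summable} is unchanged. In the explicit horizon of Proposition~\ref{prop:Tbar-sigma-explicit}, the defining inequality becomes
\[
D_0 t - C\sqrt{t\log t} - C'\log^2 t \ge \alpha\log\tfrac1\delta + b\log t + c .
\]
Inverting it as before gives
\[
\bar T_\sigma(\delta)\le \frac{\alpha\log(1/\delta)}{D_0}+\frac{b}{D_0}\log\!\bigl(1+\log\tfrac1\delta\bigr)+C_{1,\sigma}\sqrt{\log\tfrac1\delta\,\log(1+\log\tfrac1\delta)}+C_{2,\sigma}\log^2\!\bigl(1+\log\tfrac1\delta\bigr).
\]
Because $\alpha\le1$, the lower-order terms can be bounded using $\alpha\log\frac1\delta\le\log\frac1\delta$ and $\log(1+\alpha x)\le\log(1+x)$. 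This lets me keep the same chain constants, or enlarge them by a factor depending only on $\alpha$. The tail-sum split at $\bar T_\sigma(\delta)$ then gives the $\alpha$-version of Proposition~\ref{prop:fixed-chain}. Summing over chains exactly as in Proposition~\ref{prop:aggregation} handles $\mathbb E[\tau\mathbf 1_{\mathcal E_{\mathrm{corr}}}]$.

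\textbf{Step 2: the complement term.} I apply the argument preceding Theorem~\ref{thm:main-delta-pac} directly, since $\widetilde T(\delta)$ was already defined through $\beta_{\mathrm{elim}}$. This gives
\[
R_{\mathrm{comp}}(\delta)\le \widetilde T(\delta)\Pr(\mathcal E_{\mathrm{corr}}^c)+C_{\widetilde{\mathcal G}},
\qquad \widetilde T(\delta)=O(\log(1/\delta)).
\]
By Step 3 below, $\Pr(\mathcal E_{\mathrm{corr}}^c)\le c_{\mathrm{elim}}\delta^\alpha$, so the first term is $O(\delta^\alpha\log\frac1\delta)$. This is bounded uniformly in $\delta\in(0,1)$, because $\sup_{x>0}xe^{-\alpha x}=1/(e\alpha)$. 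It can therefore be absorbed into $\widetilde C_3$, at the cost of an $\alpha$-dependent constant.

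\textbf{Step 3: correctness (main obstacle).} The error event is contained in $\mathcal E_{\mathrm{corr}}^c$. On that event, at some time $t$ a wrong champion $i\ne h^*$ has $Z_t(i,h^*)\ge\alpha\log\frac1\delta+\gamma(t)$. I bound its probability by a union bound over $i\ne h^*$ combined with a time-uniform deviation inequality for the likelihood ratio $e^{-Z_t(h^*,i)}$, which is a nonnegative supermartingale with mean at most $1$ under $\mathbb P_{h^*}$.

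\begin{itemize}
\item If I use Ville's inequality at the time-dependent level $e^{\gamma(t)}\delta^{-\alpha}$, I need the same peeling or mixture device that was used to choose $b$ and $c$ in Theorem~\ref{thm:delta-pac-correctness}.
\item That argument gives $\Pr(\exists t: Z_t(i,h^*)\ge x+\gamma(t))\le e^{-x}$ up to a constant factor, valid for every $x>0$.
\item Taking $x=\alpha\log\frac1\delta$ and summing over the $K-1$ wrong candidates yields $c_{\mathrm{elim}}\delta^\alpha$ with $c_{\mathrm{elim}}$ of order $K-1$.
\end{itemize}

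The delicate point is to check that the proof of Theorem~\ref{thm:delta-pac-correctness} does not use $\alpha=1$ beyond plugging in $x=\log\frac1\delta$. I will also verify that candidate-specific elimination, where $h^*$ is removed only from $G(i)$ while $i$ is champion, is already covered by the event above. Finally, the displayed bound in the corollary uses $\Pr(\mathcal E_{\mathrm{corr}})\le1$ only for the leading terms, so the relaxed correctness guarantee and the stopping-time bound combine without further work.
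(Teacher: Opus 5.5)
Your proposal is correct and follows essentially the same route as the paper. It uses the per-chain certificate $\bar T_\sigma(\delta)$ with $\alpha L$ in place of $L$ (Proposition~\ref{prop:Tbar-sigma-explicit} is already stated for all $\alpha\in(0,1]$), aggregation over disjoint chains, the complement bound $\widetilde T(\delta)\Pr(\mathcal E_{\mathrm{corr}}^c)+C_{\widetilde{\mathcal G}}$ made uniform via boundedness of $(1+\log\frac1\delta)\delta^\alpha$, and the union-bound argument of Remark~\ref{rem:error-alpha-less-one} for $c_{\mathrm{elim}}\delta^\alpha$. One simplification in Step 3: no peeling or mixture is needed, because the hypotheses are simple, $e^{Z_t(i,h^*)}$ is a nonnegative supermartingale with mean at most $1$ under $\mathbb P_{h^*}$, and $\gamma(t)\ge c$ for $t\ge 1$; Ville's inequality at the constant level $e^{c}\delta^{-\alpha}$ then directly gives $c_{\mathrm{elim}}=(|\mathcal H|-1)e^{-c}$.
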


\begin{remark}[Speed--safety tradeoff]
\label{rem:alpha-tradeoff}
This bound concerns stopping time only. At $\alpha=1$, the algorithm is $\delta$-PAC with leading term $\log(1/\delta)/D_0$. Reducing $\alpha$ has two effects: it lowers the leading term to $\alpha\log(1/\delta)/D_0$ by decreasing the elimination threshold, but weakens the error 
guarantee from $\delta$ to $O(\delta^\alpha)$.
\end{remark}

\begin{figure}[t]
\centering
\begin{tikzpicture}[
    >=stealth,
    x=0.62cm, y=0.72cm,
    font=\scriptsize
]
\def\xmax{9.8}
\def\Tone{1.8}  \def\Ttwo{3.2}  \def\Tthree{4.3}
\def\TK{5.8}    \def\Tstop{8.8}
\def\Dzero{1.0} \def\Done{1.55} \def\Dtwo{2.0} \def\DK{2.8}
\def\yStage{4.2}


\fill[blue!7]  (0,\yStage-0.38) rectangle (\Tone,\yStage+0.38);
\fill[blue!14] (\Tone,\yStage-0.38) rectangle (\Ttwo,\yStage+0.38);
\fill[blue!21] (\Ttwo,\yStage-0.38) rectangle (\Tthree,\yStage+0.38);
\fill[gray!12] (\Tthree,\yStage-0.38) rectangle (\TK,\yStage+0.38);
\fill[blue!32] (\TK,\yStage-0.38) rectangle (\Tstop,\yStage+0.38);

\foreach \xa/\xb in {
    0/\Tone, \Tone/\Ttwo, \Ttwo/\Tthree,
    \Tthree/\TK, \TK/\Tstop}{
  \draw[gray!40, thin] (\xa,\yStage-0.38) rectangle (\xb,\yStage+0.38);
}

\node [font=\tiny, blue!70!black, inner sep=1pt]
    at ({0.5*\Tone}, \yStage) {\shortstack{$G_t(h^*)$\\$=S_0$}};
\node[font=\tiny, blue!70!black, inner sep=1pt]
    at ({0.5*\Tone+0.5*\Ttwo}, \yStage) {$S_1$};
\node[font=\tiny, blue!70!black, inner sep=1pt]
    at ({0.5*\Ttwo+0.5*\Tthree}, \yStage) {$S_2$};
\node[font=\tiny, gray!55, inner sep=1pt]
    at ({0.5*\Tthree+0.5*\TK}, \yStage) {$\cdots$};
\node[font=\tiny, blue!70!black, align=center, inner sep=1pt]
    at ({0.5*\TK+0.5*\Tstop}, \yStage) {\shortstack{$G_t(h^*)=S_{K-1}$}};

\node[left, font=\tiny, align=right] at (0, \yStage)
    {\textbf{active}\\\textbf{set}};

\foreach \x/\gl in {
    \Tone/{$g_{1}$}, \Ttwo/{$g_{2}$},
    \Tthree/{$g_{3}$}, \TK/{$g_{K}$}}{
  \draw[gray!45, dashed, thin] (\x,{\yStage-0.35}) -- (\x,{\yStage+0.35});
  \node[font=\tiny, red!55] at (\x,\yStage + 0.6)
      {\textbf{$\times$}};
  \node[font=\tiny, red!65] at (\x,\yStage+0.85) {\gl};
}

\draw[green!50!black, thin] (\Tstop,0) -- (\Tstop,{\yStage-0.85});
\draw[green!50!black, thin] (\Tstop,{\yStage-0.35}) -- (\Tstop,{\yStage+0.35});
\node[font=\tiny, green!50!black] at (\Tstop,\yStage+0.50)
    {\textbf{stop}};

\draw[gray!20] (0,\yStage-0.95) -- (\xmax,\yStage-0.95);


\draw[->] (0,0) -- (\xmax+0.4,0) node[right] {$t$};
\draw[->] (0,0) -- (0,3.3);

\node[rotate=90, font=\tiny, black] at (-0.3,1.8)
    {Information Rate $D_k$};

\foreach \d/\lab in {
    \Dzero/{$D_0$}, \Done/{$D_1$},
    \Dtwo/{$D_2$}, \DK/{$D_K$}}{
  \draw[gray!45, dashed, thin] (0,\d) -- ({\Tstop-0.15},\d);
  \node[anchor=west, font=\tiny, text=black,
        fill=white, fill opacity=0.85, text opacity=1, inner sep=1pt]
        at ({\Tstop+0.08},\d) {\lab};
}

\draw[blue!55, thick] (0,\Dzero) -- (\Tstop,\Dzero);
\node[font=\scriptsize\sffamily, text=blue!55,
      fill=white, fill opacity=0.85, text opacity=1, inner sep=1pt,
      anchor=south]
  at ({0.5*\Tstop},\Dzero) {baseline};

\fill[red!12]
    (0,0.20) -- (0.37,0.577) -- (0.73,0.85) -- (1.1,0.89)
    -- (1.47,0.907) -- (\Tone,0.94)
    -- (\Tone,\Dzero) -- (0,\Dzero) -- cycle;

\fill[green!12]
    (\Tone,\Dzero)
    -- (1.83,0.94) -- (2.2,1.123) -- (2.57,1.196) -- (2.93,1.225)
    -- (\Ttwo,1.315)
    -- (3.67,1.477) -- (4.03,1.537)
    -- (\Tthree,1.577)
    -- (4.77,1.638) -- (5.13,1.675) -- (5.5,1.698)
    -- (\TK,1.762)
    -- (6.23,2.007) -- (6.6,2.194) -- (6.97,2.337)
    -- (7.33,2.446) -- (7.7,2.529) -- (8.07,2.593)
    -- (8.43,2.642) -- (\Tstop,2.679)
    -- (\Tstop,\Dzero) -- cycle;

\draw[orange!85!red, very thick, smooth] plot coordinates {
    (0,0.20)(0.37,0.577)(0.73,0.85)(1.1,0.89)
    (1.47,0.907)(\Tone,0.94)
    (1.83,0.94)(2.2,1.123)(2.57,1.196)(2.93,1.225)
    (\Ttwo,1.315)
    (3.67,1.477)(4.03,1.537)
    (\Tthree,1.577)
    (4.77,1.638)(5.13,1.675)(5.5,1.698)
    (\TK,1.762)
    (6.23,2.007)(6.6,2.194)(6.97,2.337)
    (7.33,2.446)(7.7,2.529)(8.07,2.593)
    (8.43,2.642)(\Tstop,2.679)
};

\node[font=\tiny, red!55, align=center] at (1.0,0.55)
    {warm-up};

\draw[->, orange!70!red, thin] (7.7,3.0) -- (8.3,2.65);
\node[font=\tiny, orange!70!red] at (7.1,3.15)
    {approaches $D_K$};

\foreach \x in {\Tone,\Ttwo,\Tthree,\TK}{
  \draw[gray!45, dashed, thin] (\x,0) -- (\x,{\yStage-0.85});
}

\foreach \x/\lab in {
    \Tone/{$T_1$}, \Ttwo/{$T_2$}, \Tthree/{$T_3$},
    \TK/{$T_K$}, \Tstop/{$T_{\sigma}$}}{
  \node[below, font=\tiny, black] at (\x,0) {\lab};
}
\end{tikzpicture}

\caption{Elimination Mechanism.
\textbf{Upper}: the active-opponent set $G_t(h^*)$ shrinks as opponents
are eliminated ($\times$);
\textbf{Lower}: the information rate (orange) starts below $D_0$ during
warm-up (red), then rises after each elimination, approaching $D_K$.}

\label{fig:proof-roadmap}
\end{figure}

\section{EXPERIMENTAL VALIDATION}
\label{sec:experiments}
We evaluate the proposed elimination-based Track-and-Stop algorithm on
synthetic Gaussian active-hypothesis-testing instances.
The experiments are designed to answer three questions:
(i) whether elimination reduces the stopping time in the exact-confidence regime ($\alpha=1$),
(ii) how the elimination parameter $\alpha$ controls the time–error trade-off, and
(iii) whether the reduction in expected stopping time arises solely from earlier stopping or also from elimination-induced changes in the sampling allocation.

\subsection{Experimental Setup}

We consider a finite active hypothesis testing problem with
$|\mathcal H|=5$ hypotheses and $|\mathcal A|=5$ sensing actions.
For each action $a\in\mathcal A$ and hypothesis $h\in\mathcal H$, the
observation distribution is Gaussian: $O_t\sim \mathcal N(\mu_{a,h},\sigma^2).$

Unless otherwise stated, each reported stopping-time statistic is averaged
over $1000$ independent Monte Carlo trials.
We test three environments designed to exercise qualitatively different
elimination regimes:
a \emph{skewed} environment in which several hypotheses become easy to
eliminate early;
a \emph{hard-weak} environment in which the bottleneck competitors remain
active for most of the run;
and a \emph{degenerate} environment in which some actions are nearly
uninformative.

We compare four policies:
\begin{itemize}
    \item \textbf{Greedy}, which basically selects the action with the largest instantaneous divergence between the current champion and its most competitive rival~\cite{c12}.
    \item \textbf{Track-and-Stop (TaS)}, which tracks the oracle
    allocation computed against the full opponent set.
    \item \textbf{TaS + stopping-only elimination}, which is called \textit{StopElim} and keeps the TaS sampling rule but replaces the stopping rule with an elimination-based stopping.
    \item \textbf{TaS + elimination-aware sampling (ours)}, which will be called \textit{FullElim}, which uses
    both elimination-based stopping and active-set-aware oracle tracking.
\end{itemize}
The stopping-only elimination baseline is important for question (iii) because it separates the gain from earlier stopping from the additional gain due to recomputing the sampling allocation on the reduced active set.

\subsection*{Experiment I: $\delta$-Confidence Stopping Time Comparison}
\addcontentsline{toc}{subsection}{Experiment I: $\delta$-Confidence stopping-time comparison}

Here, we aim to determine whether elimination yields a finite-sample
stopping-time reduction while remaining consistent with the prescribed confidence target.
We evaluate this $\delta$-confidence regime by setting $\alpha=1$ and
sweeping the confidence level $\delta \in \{0.1, 0.05, 0.01, 0.005, 0.001\}$ across the three environments.  The results on average stopping time and error rate across these environments in Appendix A.
\begin{figure}[t]
    \centering
    \includegraphics[width=1\linewidth]{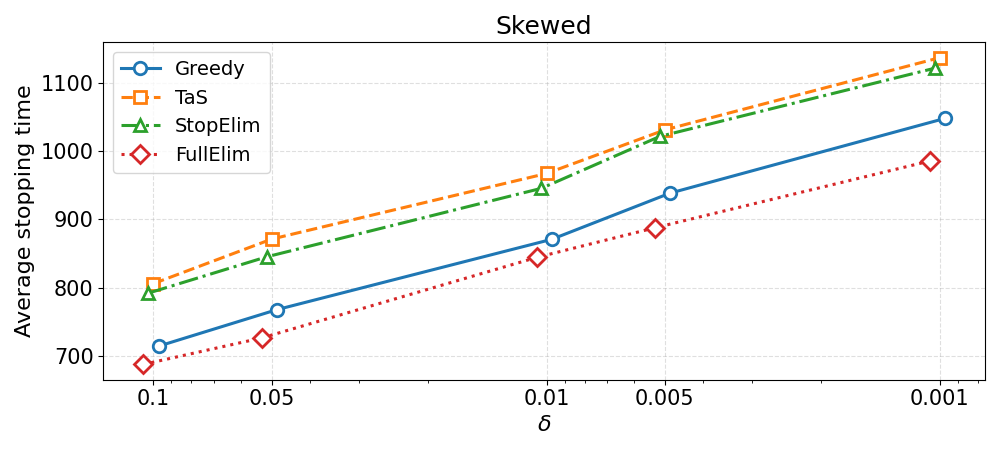}\\\vspace{-0.3em}
    \includegraphics[width=1\linewidth]{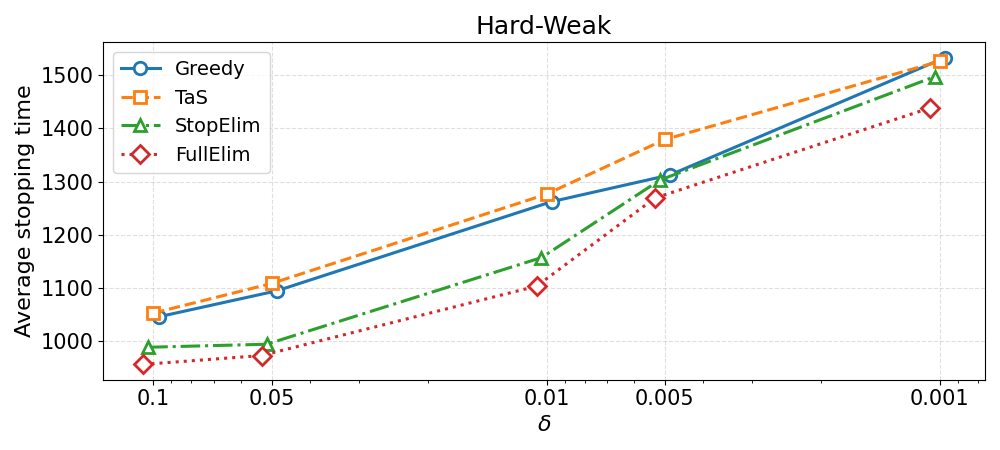}\\\vspace{-0.3em}
    \includegraphics[width=1\linewidth]{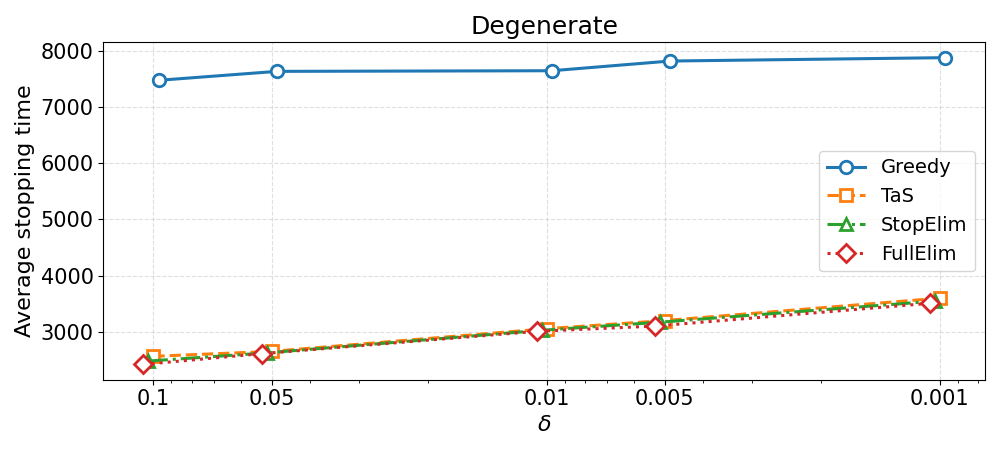}\\\vspace{-0.8em}
    \caption{Exact-confidence stopping-time comparison versus $\delta$ in the skewed, hard-weak, and degenerate environments.}
    \label{fig:exact_stopping}
\end{figure}

Figure 2 shows that FullElim consistently outperforms TaS. In the skewed environment, stopping time at $\delta = 0.05$ drops by $19.9\%$ (871 to 726), reflecting the acceleration from early elimination and the corresponding increase in effective information rate from $D_0$ to $D_K$. In the hard-weak environment, the gain is smaller ($13.8\%$, 1107 to 972), and StopElim closely matches FullElim, as persistent bottleneck rivals limit active-set contraction and reduce reallocation benefits. In the degenerate environment, Greedy requires over 7900 samples versus roughly 3500 for elimination-based methods and exhibits a high average error of 0.6, while elimination methods maintain the $\delta$-PAC guarantee. Greedy oversamples uninformative actions, whereas TaS-based methods focus effort on informative actions via the oracle allocation $w^*$.
\subsection*{Experiment II: Relaxed-Confidence Time-Error Trade-off}
\addcontentsline{toc}{subsection}{Experiment II: Relaxed-confidence time--error trade-off}
Next, we study the relaxed-confidence regime. This experiment is intended to test the predicted trade-off between speed and reliability under more aggressive elimination. Fixing $\delta=0.1$ in all three settings, we vary the elimination parameter $\alpha \in \{0.2,\,0.4,\,0.6,\, 0.8, \, 1\}$ and record both the stopping time and the error rate. Detailed result of all three settings is included in Appendix C (table III, IV, V).
\begin{figure}[t]
    \centering
    \includegraphics[width=\linewidth]{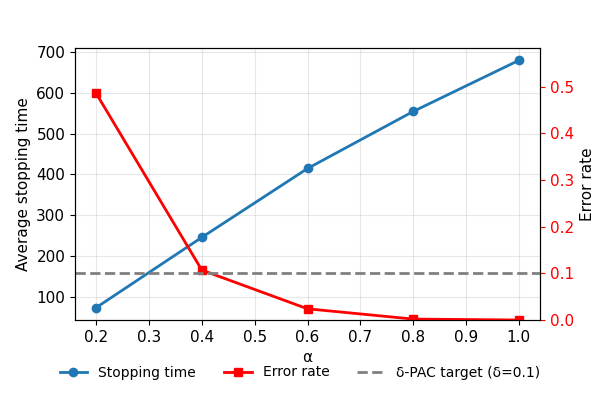}\\\vspace{-0.5em}
    \caption{\looseness=-1
    Relaxed-confidence trade-off in the skewed setting: stopping time grows and error rate decreases with $\alpha$.
    }
    \label{fig:tradeoff_alpha}
\end{figure}
Figure~\ref{fig:tradeoff_alpha} shows a clear monotone trade-off in the skewed setting (similar results hold for hard-weak and degenerate settings, as shown in appendix). Stopping time decreases from $679$ at $\alpha=1$ to $74$ at $\alpha=0.2$ (an $89\%$ reduction), while error increases from $\approx 0$ to $0.486$.
The error constraint $\Pr(\hat h\neq h^*)\le\delta=0.1$ is satisfied for $\alpha\ge 0.6$ and violated for $\alpha\le 0.4$. This transition is consistent with the theoretical bound: the elimination calibration guarantees $\Pr(C_{\mathrm{elim}}^c)\le c'\delta^\alpha$, so smaller $\alpha$ relaxes the elimination confidence requirement and increases the risk of a wrong elimination. The empirical threshold near $\alpha\approx 0.5$ identifies the operating point below which the elimination error dominates the stopping gain.

\subsection*{Experiment III: Elimination Mechanism Validation}
\addcontentsline{toc}{subsection}{Experiment III: Mechanism validation via active-set and allocation dynamics}
To illustrate how FullElim reduces stopping time, we record internal dynamics for a representative trial: active-set membership, empirical sampling allocation, evidence accumulation versus the elimination
threshold $\beta_{\mathrm{elim}}(t)$, and instantaneous information rate.

\begin{figure}[t]
    \centering
    \includegraphics[width=1\linewidth]{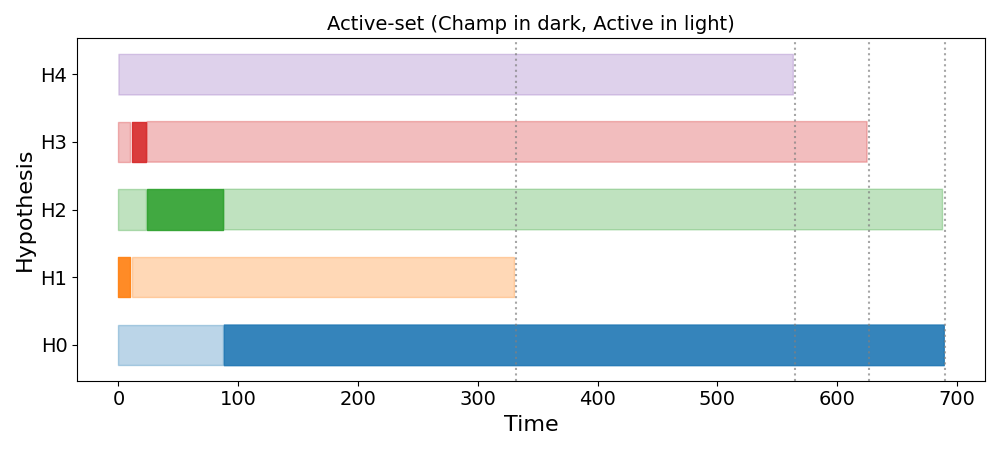}\\\vspace{-0.3em}   \includegraphics[width=1\linewidth]{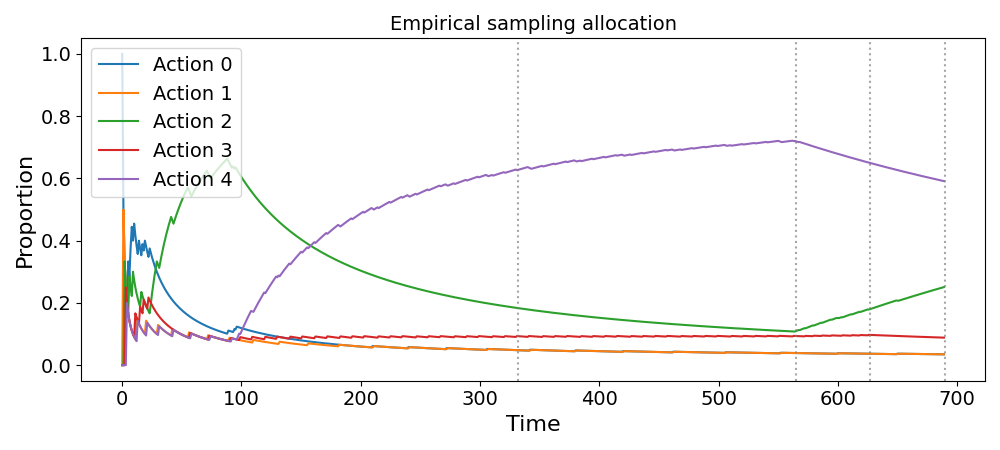}\\\vspace{-0.3em}
    \includegraphics[width=1\linewidth]{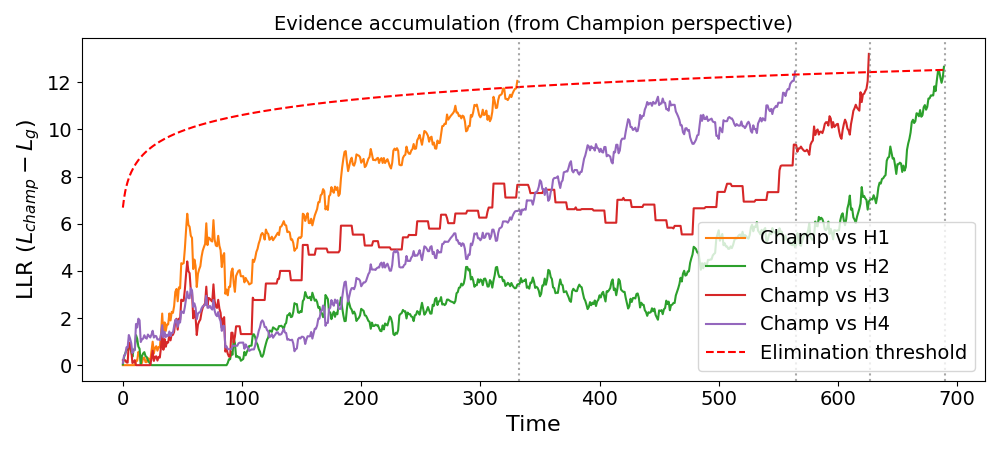}\\\vspace{-0.3em}\includegraphics[width=1\linewidth]{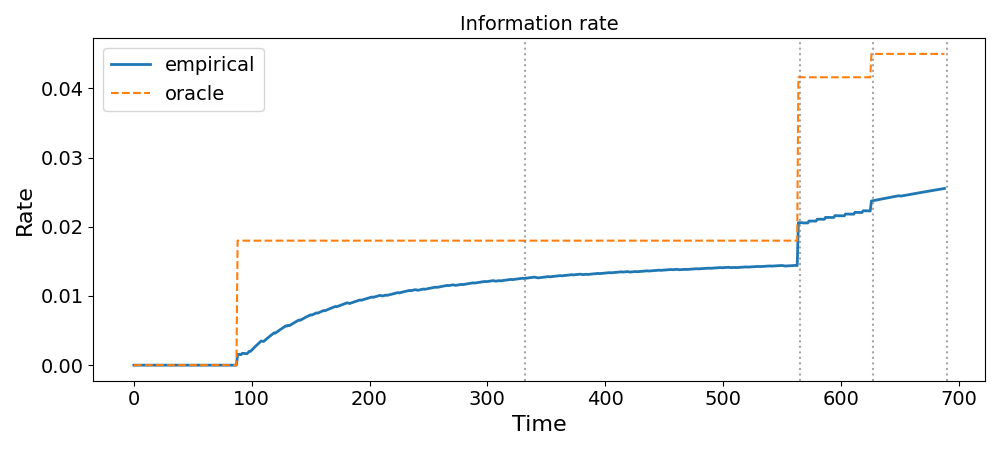}\\\vspace{-0.9em}
    \caption{Mechanism diagnostics with grey global elimination line.
    First: active-set dynamics over time.
    Second: empirical sampling allocation.
    Third: evidence accumulation of chamion against opponents in active set.
    Fourth: oracle vs empirical information rate.
    }
    \label{fig:mechanism}
\end{figure}
Figure~\ref{fig:mechanism} visualizes the shrinkage of the active set and the corresponding change in empirical sampling allocation. The oracle rate (fourth plot) exhibits discrete staircase jumps at $t\approx$ $560$, and $620$, each coinciding with an elimination event in the active-set panel. Each jump reflects the theoretical stage-wise increase $D_0\le D_1\le\cdots\le D_K$: as opponents are eliminated, the oracle solves a lower-dimensional max-min problem whose value is strictly higher. The empirical rate tracks these jumps, confirming that C-Tracking successfully re-targets the new optimal allocation in finite time.
 
The reallocation mechanism is visible in the sampling-proportion panel (second plot): after $H_1$ and $H_4$ are eliminated, the algorithm shifts effort from action $4$ to concentrates on action $2$ for resolving the remaining bottleneck. This mechanism underlies the stopping-time reductions observed in Experiment~I.

\section{CONCLUSION}
This paper proposed an elimination-augmented Track-and-Stop algorithm for active multi-hypothesis testing that progressively prunes the implausible hypotheses and reallocates sensing effort toward the surviving alternatives.
We proved a finite-sample upper bound on the expected stopping time, showing that elimination provably
reduces the $\sqrt{\log(1/\delta)\cdot\log\log(1/\delta)}$ coefficient. We also showed that an aggressiveness parameter $\alpha$ trades off a faster leading term $\alpha\log(1/\delta)/D_0$ against a relaxed error guarantee of order $O(\delta^\alpha)$. Our synthetic experiments demonstrate that elimination consistently reduces
stopping time, that the $\alpha$-parameter induces a monotone speed-reliability trade-off, and that the gain arises from both earlier stopping and reallocation of sensing effort to more informative actions. 

Several limitations should be noted. The lower-order constants in the bound were not optimized, and the cancellation at the $\log\log(1/\delta)$ scale may be an artifact of the matched-threshold design rather than a fundamental barrier. Also, our empirical study is limited to synthetic Gaussian instances. Interesting directions for future work include investigating whether separated elimination and stopping thresholds can break that cancellation and designing stage-dependent elimination schedules. Inspired by the system cost in anomaly detection~\cite{c13}, we can extend our framework to cost-aware action selection where sensing cost, elimination aggressiveness, and stopping time interact in a three-way trade-off.

\section*{APPENDIX}
\subsection{Experimental Design and Results}

We consider Gaussian observation models with unit variance. For each action $a \in \mathcal{A}$ and hypothesis $h \in \mathcal{H}$,
\[
O_t \mid (A_t = a, h^* = h) \sim \mathcal{N}(\mu_{a,h}, 1),
\]
where the mean matrix $\mu \in \mathbb{R}^{|\mathcal{A}| \times |\mathcal{H}|}$ fully specifies the environment.

We study three environments designed to capture different statistical structures of the identification problem: (i) a skewed environment with hypothesis-dependent informative actions, (ii) a hard-weak environment with small pairwise separations among a subset of hypotheses, and (iii) a degenerate environment with partially uninformative actions. The true hypothesis is $H_0$, for all three settings.

\paragraph{Skewed Environment}
\[
\mu =
\begin{bmatrix}
0.5 & 0.9 & 0.5 & 0.3 & 0.7 \\
0.3 & 0.5 & 0.3 & 0.5 & 0.3 \\
0.5 & 0.2 & 0.5 & 0.3 & 0.8 \\
0.7 & 0.3 & 0.7 & 0.1 & 0.5 \\
0.4 & 0.6 & 0.6 & 0.4 & 0.2
\end{bmatrix}
\]

\paragraph{Hard-Weak Environment}
\[
\mu =
\begin{bmatrix}
0.9 & 0.8 & 0.2 & 0.2 & 0.2 \\
0.8 & 0.65 & 0.2 & 0.2 & 0.2 \\
0.1 & 0.1 & 0.8 & 0.1 & 0.1 \\
0.2 & 0.2 & 0.1 & 0.8 & 0.2 \\
0.1 & 0.2 & 0.1 & 0.2 & 0.9
\end{bmatrix}
\]

\paragraph{Degenerate Environment}
\[
\mu =
\begin{bmatrix}
0.5 & 0.9 & 0.1 & 0.5 & 0.5 \\
0.5 & 0.1 & 0.9 & 0.5 & 0.5 \\
0.5 & 0.5 & 0.5 & 0.5 & 0.5 \\
0.5 & 0.5 & 0.5 & 0.5 & 0.5 \\
0.55 & 0.45 & 0.45 & 0.45 & 0.45
\end{bmatrix}
\]

\subsubsection{Results of Experiment I: $\delta$-Confidence stopping-time comparison}

We report the raw results across the three environments for varying confidence parameter $\delta$. Two tables are provided: average stopping time in table \ref{tab:stopping_time_exp1} and error rate in table \ref{tab:error_rate_exp1}.
\begin{table}[h!]
\centering
\caption{Average stopping time for different environments and $\delta$ (Experiment I).}
\label{tab:stopping_time_exp1}
\begin{tabular}{c|c|c|c|c|c}
\hline
Environment & $\delta$ & Greedy & TaS & StopElim & FullElim \\
\hline
\multirow{5}{*}{Skewed} 
 & 0.1   & 714.68 & 806.15 & 792.22 & 688.06 \\
 & 0.05  & 768.09 & 871.27 & 845.32 & 726.73 \\
 & 0.01  & 870.94 & 967.21 & 945.54 & 844.42 \\
 & 0.005 & 938.49 & 1030.98 & 1021.42 & 887.80 \\
 & 0.001 & 1047.74 & 1136.57 & 1121.13 & 985.43 \\
\hline
\multirow{5}{*}{Hard-Weak} 
 & 0.1   & 1044.90 & 1051.96 & 987.59 & 955.29 \\
 & 0.05  & 1093.98 & 1107.88 & 993.30 & 972.00 \\
 & 0.01  & 1262.06 & 1276.15 & 1156.55 & 1102.74 \\
 & 0.005 & 1312.46 & 1379.27 & 1302.67 & 1268.89 \\
 & 0.001 & 1533.43 & 1526.59 & 1497.60 & 1438.44 \\
\hline
\multirow{5}{*}{Degenerate} 
 & 0.1   & 7476.15 & 2521.90 & 2477.62 & 2418.23 \\
 & 0.05  & 7635.45 & 2649.66 & 2623.12 & 2609.48 \\
 & 0.01  & 7646.71 & 3053.86 & 3025.22 & 3006.29 \\
 & 0.005 & 7819.37 & 3178.59 & 3169.22 & 3150.54 \\
 & 0.001 & 7879.25 & 3597.51 & 3543.84 & 3535.65 \\
\hline
\end{tabular}
\end{table}
\begin{table}[h!]
\centering
\caption{Error rates for different environments and $\delta$ (Experiment I).}
\label{tab:error_rate_exp1}
\begin{tabular}{c|c|c|c|c|c}
\hline
Environment & $\delta$ & Greedy & TaS & StopElim & FullElim \\
\hline
\multirow{5}{*}{Skewed} 
 & 0.1   & 0.0030 & 0.0020 & 0.0010 & 0.0010 \\
 & 0.05  & 0.0020 & 0.0010 & 0.0010 & 0.0000 \\
 & 0.01  & 0.0020 & 0.0010 & 0.0000 & 0.0010 \\
 & 0.005 & 0.0000 & 0.0000 & 0.0000 & 0.0000 \\
 & 0.001 & 0.0000 & 0.0000 & 0.0000 & 0.0000 \\
\hline
\multirow{5}{*}{Hard-Weak} 
 & 0.1   & 0.0000 & 0.0000 & 0.0000 & 0.0000 \\
 & 0.05  & 0.0000 & 0.0000 & 0.0000 & 0.0000 \\
 & 0.01  & 0.0000 & 0.0000 & 0.0000 & 0.0000 \\
 & 0.005 & 0.0000 & 0.0000 & 0.0000 & 0.0000 \\
 & 0.001 & 0.0000 & 0.0000 & 0.0000 & 0.0000 \\
\hline
\multirow{5}{*}{Degenerate} 
 & 0.1   & 0.6490 & 0.0000 & 0.0020 & 0.0010 \\
 & 0.05  & 0.6380 & 0.0000 & 0.0000 & 0.0000 \\
 & 0.01  & 0.6170 & 0.0000 & 0.0020 & 0.0000 \\
 & 0.005 & 0.6070 & 0.0010 & 0.0000 & 0.0000 \\
 & 0.001 & 0.6590 & 0.0000 & 0.0010 & 0.0000 \\
\hline
\end{tabular}
\end{table}

\subsubsection{Results of Experiment II: Relaxed-Confidence Time-Error Trade-off}

In this experiment, we fix $\delta = 0.1$ and vary the elimination parameter $\alpha \in \{0.2,0.4,0.6,0.8,1.0\}$ under the all three environments. We record both the stopping time and the empirical terminal error rate and put the results into table \ref{tab:stopping_time_exp2_skewed}, \ref{tab:stopping_time_exp2_hw}, and \ref{tab:stopping_time_exp2_deg}.

\begin{table}[h!]
\centering
\caption{Stopping time for different $\alpha$ (Experiment II, Skewed environment, $\delta=0.1$).}
\label{tab:stopping_time_exp2_skewed}
\begin{tabular}{c|c|c}
\hline
$\alpha$ & Stopping time & Error rate \\
\hline
0.20 & 74.37 & 0.4860\\
0.40 & 246.13 & 0.1070\\
0.60 & 414.50 & 0.0240\\
0.80 & 553.91 & 0.0020\\
1.00 & 678.83 & 0.0000\\
\hline
\end{tabular}
\end{table}
\begin{table}[h!]
\centering
\caption{Stopping time for different $\alpha$ (Experiment II, Hard-Weak environment, $\delta=0.1$).}
\label{tab:stopping_time_exp2_hw}
\begin{tabular}{c|c|c}
\hline
$\alpha$ & Stopping time & Error rate \\
\hline
0.20 & 100.44 & 0.3580\\
0.40 & 335.88 & 0.0770\\
0.60 & 575.91 & 0.0160\\
0.80 & 824.90 & 0.0050\\
1.00 & 962.10 & 0.0000\\
\hline
\end{tabular}
\end{table}
\begin{table}[h!]
\centering
\caption{Stopping time for different $\alpha$ (Experiment II, Degenerate environment, $\delta=0.1$).}
\label{tab:stopping_time_exp2_deg}
\begin{tabular}{c|c|c}
\hline
$\alpha$ & Stopping time & Error rate \\
\hline
0.20 & 452.07 & 0.3300\\
0.40 & 981.36 & 0.0610\\
0.60 & 1425.22 & 0.0100\\
0.80 & 1959.07 & 0.0010\\
1.00 & 2427.38 & 0.0000\\
\hline
\end{tabular}
\end{table}
\subsection{Finite Sample Analysis}

This appendix provides complete proofs for all finite-sample results stated in 
Section~\ref{sec:main_theorem} of the main text. The structure is centered around establishing 
Theorem~\ref{thm:main-delta-pac} (the expectation bound under the $\delta$-PAC regime $\alpha=1$); along 
the way, the supporting propositions (Propositions~\ref{prop:fixed-chain} and 
\ref{prop:aggregation}) arise as intermediate or specialized results, with 
their proofs interleaved into the main argument.

We first establish the $\delta$-PAC correctness 
guarantee under $\alpha=1$. Then, for a fixed 
admissible chain $\sigma\in\mathfrak S$, we develop 
the finite-sample analysis directly for a general 
chain with $K-1$ elimination transitions. We aggregate 
the per-chain bounds over $\sigma\in\mathfrak S$, 
control the wrong-elimination complement, and conclude 
with a unified expectation bound for all $\alpha\in(0,1]$.

\begin{theorem}[$\delta$-PAC correctness]
\label{thm:delta-pac-correctness}
Assume $\alpha=1$, so that 
$\beta_{\mathrm{elim}}(t,\delta) 
= \beta_{\mathrm{stop}}(t,\delta)$. 
Then $\Pr(\hat h(\tau) \neq h^*) \le \delta$.
\end{theorem}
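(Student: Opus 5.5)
The plan is to reduce a wrong recommendation to a single threshold-crossing event for the statistic $Z_t(g,h^*)$, and then control that event uniformly in time with a likelihood-ratio martingale inequality.

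\emph{Reduction.} Suppose the algorithm stops at $\tau$ and outputs $\hat h(\tau)=g\neq h^*$. Then $G_\tau(g)=\emptyset$, so in particular $h^*\notin G_\tau(g)$. By the definition of the active-opponent set, there is some round $s\le\tau$ at which $\hat h(s)=g$ and
\[
Z_s(g,h^*)\ge\beta_{\mathrm{elim}}(s,\delta)=\log\tfrac1\delta+\gamma(s),
\]
using $\alpha=1$. Hence
\[
\{\hat h(\tau)\neq h^*\}\subseteq\bigcup_{g\neq h^*}\bigl\{\exists s\ge1:\ Z_s(g,h^*)\ge \log\tfrac1\delta+\gamma(s)\bigr\}.
\]
The right-hand side no longer refers to the stopping rule or to the elimination dynamics; it is a boundary-crossing event for the pairwise log-likelihood ratio against the truth. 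This is the same structure as the Chernoff stopping-rule correctness argument of Garivier and Kaufmann.

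\emph{Time-uniform bound.} For fixed $g$, under $\mathbb P_{h^*}$ the process $\Lambda_s:=\exp(Z_s(g,h^*))=\prod_{k\le s}p_{A_k}(O_k\mid g)/p_{A_k}(O_k\mid h^*)$ is a nonnegative martingale with mean $1$. This holds for any predictable sampling rule, because each factor has conditional expectation $1$ given the past and $A_k$; absolute continuity with respect to $\mu_a$ ensures the integral is at most one. Ville's maximal inequality gives
\[
\Pr\bigl(\exists s:\Lambda_s\ge x\bigr)\le 1/x .
\]
Since $\gamma(s)\ge c>0$, the threshold is at least $\log(1/\delta)+c$, so
\[
\Pr\bigl(\exists s: Z_s(g,h^*)\ge\log\tfrac1\delta+\gamma(s)\bigr)\le\delta e^{-c}.
\]
If one prefers to use the growing part $b\log s$, a peeling or union over $s$ with $\sum_s s^{-b}e^{-c}$ works as well.

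\emph{Union bound.} Summing over the $K-1$ wrong hypotheses gives $\Pr(\hat h(\tau)\neq h^*)\le (K-1)e^{-c}\delta$. Choosing $c\ge\log(K-1)$ makes this at most $\delta$. This is consistent with the statement that $b,c$ depend only on $|\mathcal H|$ and $|\mathcal A|$.

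The main obstacle is the reduction step. Champions switch over time and the sets $G_t(i)$ are candidate-specific, so I must check that emptiness of $G_\tau(g)$ really certifies an elimination of $h^*$ performed \emph{while $g$ was champion}. This follows from the update rule, since only the current champion's set is modified, and the recorded inequality at that moment is exactly a crossing of $\beta_{\mathrm{elim}}$ by $Z_s(g,h^*)$. One subtlety is the index shift between $\beta_{\mathrm{elim}}(t+1,\delta)$ and $Z_t$ in the text versus $Z_{t+1}$ in Algorithm~\ref{alg:tas-elim}. I would handle it by noting that $\gamma$ is increasing, so the crossing is at least $\log(1/\delta)+\gamma(s')$ for the relevant index $s'$; in any case the bound $\gamma\ge c$ suffices.
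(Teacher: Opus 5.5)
Your reduction matches the paper's steps (a)--(d). A wrong output $g$ forces $G_\tau(g)=\emptyset$, so $h^*$ was removed from $G(g)$ at some round $s\le\tau$ with $\hat h(s)=g$ and $Z_s(g,h^*)\ge\beta_{\mathrm{elim}}(s,\delta)$. After that, $s\le\tau$ is relaxed to all $s\ge 1$ and $\alpha=1$ is used. You diverge at the final step (e).

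The paper invokes an external time-uniform concentration inequality (Kaufmann--Koolen, Cor.~10; Tirinzoni--Degenne, Lemma~2.3) and says the thresholds are ``calibrated'' to it. Those results are built for generalized likelihood ratios over parameter sets. You instead prove the bound directly. The hypotheses here are simple with known densities, so $\exp(Z_s(g,h^*))$ is a nonnegative supermartingale under $\mathbb P_{h^*}$ for any predictable sampling rule. Ville's inequality then gives $\delta e^{-c}$ per opponent.

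Your route is self-contained and more informative. It shows the growing part $b\log t$ of $\gamma$ plays no role in correctness. It also makes the calibration explicit: $c\ge\log(K-1)$ with any $b\ge 0$ suffices. The paper leaves the admissible $(b,c)$ implicit, but keeps the threshold in the standard GLR form that its stopping-time analysis reuses. It also extends immediately to the $c_{\mathrm{elim}}\delta^\alpha$ claim for $\alpha<1$.

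Two small precision points:
\begin{itemize}
\item Call $\Lambda_s$ a supermartingale rather than a mean-one martingale. It is a martingale only when $P_g^a\ll P_{h^*}^a$; your ``at most one'' remark already covers this.
\item To avoid referring to $Z_s$ after the algorithm halts, apply Ville to the stopped process $\Lambda_{s\wedge\tau}$.
\end{itemize}
Neither affects the conclusion.
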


\begin{proof}
\[
\begin{aligned}
&\Pr(\hat h(\tau) \neq h^*)\\
&\overset{(a)}{\le}
\Pr\bigl(\exists\, g \neq h^* : 
G_\tau(g) = \emptyset\bigr)\\
&\overset{(b)}{\le}
\Pr\bigl(\exists\, g \neq h^*,\,
\exists\, t \le \tau :
Z_t(g, h^*) \ge 
\beta_{\mathrm{elim}}(t,\delta)\bigr)\\
&\overset{(c)}{\le}
\Pr\bigl(\exists\, g \neq h^*,\,
\exists\, t \ge 1 :
Z_t(g, h^*) \ge 
\beta_{\mathrm{elim}}(t,\delta)\bigr)\\
&\overset{(d)}{=}
\Pr\bigl(\exists\, g \neq h^*,\,
\exists\, t \ge 1 :
Z_t(g, h^*) \ge 
\beta_{\mathrm{stop}}(t,\delta)\bigr)\\
&\overset{(e)}{\le} \delta,
\end{aligned}
\]
where (a) follows from the stopping rule 
(the algorithm outputs $g$ only when 
$G_\tau(g) = \emptyset$); 
(b) holds because $G_\tau(g) = \emptyset$ 
requires $h^*$ to have been eliminated from 
$G_t(g)$ at some round $t \le \tau$, which 
needs $Z_t(g,h^*) \ge \beta_{\mathrm{elim}}(t,\delta)$; 
(c) relaxes $t \le \tau$ to $t \ge 1$; 
(d) uses $\alpha = 1$; and 
(e) is the time-uniform concentration inequality 
used to calibrate $\beta_{\mathrm{stop}}$ 
{\normalfont(Kaufmann \& Koolen, Cor.~10~\cite{c19}; 
Lemma~2.3, Tirinzoni \& Degenne~\cite{c9})}.
\end{proof}

\begin{remark}[Error guarantee under $\alpha<1$]
\label{rem:error-alpha-less-one}
When $\alpha<1$, the elimination threshold
$\beta_{\mathrm{elim}}(t,\delta)=\alpha L+b\log t+c$ is 
strictly below $\beta_{\mathrm{stop}}(t,\delta)$. Since
$\mathcal{E}_{\mathrm{corr}}^c=\{\hat h(\tau)\neq h^*\}$
(by Definition~5 and the stopping rule), steps (a)--(c) 
of Theorem~\ref{thm:delta-pac-correctness} still apply, 
giving
\[
\begin{aligned}
 \Pr(\mathcal E_{\mathrm{corr}}^c)
&\le \Pr\bigl(\exists g\neq h^*,\ \exists s\ge 1:\
 Z_s(g,h^*)\ge\beta_{\mathrm{elim}}(s,\delta)\bigr)\\
&\le c_{\mathrm{elim}}\,\delta^\alpha,   
\end{aligned}
\]
where $c_{\mathrm{elim}}>0$ absorbs the union-bound 
factor $|\mathcal H|-1$ and constants from the 
time-uniform inequality applied with the threshold 
shifted by $(1-\alpha)L$. In particular, 
$\Pr(\hat h(\tau)\neq h^*)\le c_{\mathrm{elim}}\,
\delta^\alpha$.
\end{remark}

\subsubsection{Basic ingredients and proof inputs}
\label{app:ingredients}

\begin{lemma}[Pairwise domination of the MLE champion]
\label{lem:champion-pairwise}
If $\hat h(t)\neq h^*$, then there exists $g\neq h^*$ 
such that $Z_t(h^*,g)\le 0$.
\end{lemma}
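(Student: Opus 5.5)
The plan is to argue directly from the definition of the maximum-likelihood champion. Suppose $\hat h(t)=g_0$ with $g_0\neq h^*$. Since $\hat h(t)\in\arg\max_{h\in\mathcal H}L_t(h)$, we have $L_t(g_0)\ge L_t(h)$ for every $h\in\mathcal H$. In particular this holds for $h=h^*$, so $L_t(g_0)\ge L_t(h^*)$.

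Recalling the definition $Z_t(h,g)=L_t(h)-L_t(g)$, we obtain $Z_t(h^*,g_0)=L_t(h^*)-L_t(g_0)\le 0$. Taking $g:=g_0=\hat h(t)$ therefore exhibits the required opponent $g\neq h^*$.

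The only point needing care is the tie-breaking convention. Ties are broken arbitrarily, so $\hat h(t)\neq h^*$ may occur even when $L_t(h^*)=L_t(\hat h(t))$. This is exactly why the statement uses the weak inequality $Z_t(h^*,g)\le 0$ rather than a strict one. The argument above only uses that $\hat h(t)$ is \emph{some} maximizer, so it is unaffected by how ties are broken.

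There is also a measurability and finiteness issue: the log-likelihoods could be $-\infty$ when densities vanish. I would treat this by adopting the convention that $L_t$ takes values in $[-\infty,\infty)$. Under $\mathbb P_{h^*}$, $L_t(h^*)>-\infty$ almost surely, so the difference $Z_t(h^*,g)$ is well defined in $(-\infty,\infty]$. The inequality still follows because $L_t(g_0)\ge L_t(h^*)>-\infty$.

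No step is a real obstacle; the lemma is a one-line consequence of the argmax definition. Its role later is to convert the event $\{\hat h(t)\neq h^*\}$ into a deviation event $\{\exists g\neq h^*: Z_t(h^*,g)\le 0\}$ for the martingale decomposition in the champion-stabilization part (ii) of the good event.
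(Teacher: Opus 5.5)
Your proof is correct and is exactly the paper's argument: take $g=\hat h(t)$, and the argmax property gives $L_t(\hat h(t))\ge L_t(h^*)$, i.e.\ $Z_t(h^*,\hat h(t))\le 0$. Your remarks on tie-breaking and on $-\infty$ log-likelihoods are harmless additions. One small correction: downstream, the paper uses this lemma in the champion-stabilization tail bound via Markov's inequality on $e^{-Z_t(h^*,g)/2}$ (a Bhattacharyya-affinity argument), not via the martingale decomposition.
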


\begin{proof}
Since $\hat h(t)$ maximizes the likelihood, 
$\sum_{i=1}^t \log p_{A_i}(O_i\mid \hat h(t))
\ge \sum_{i=1}^t \log p_{A_i}(O_i\mid h^*)$, which 
gives $Z_t(h^*,\hat h(t))\le 0$.
\end{proof}

\begin{lemma}[Forced exploration; Garivier \& 
Kaufmann, Lemma~7~\cite{c8}]
\label{lem:forced-exploration}
Under the C-Tracking selection rule, for all 
$a\in\mathcal A$ and all $t\ge 1$,
\[
N_a(t)\ge \sqrt{t+|\mathcal A|^2}-2|\mathcal A|.
\]
\end{lemma}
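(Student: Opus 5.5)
The plan is to follow the argument of Garivier and Kaufmann (Lemma~7 of~\cite{c8}) and read C-Tracking in its standard form. In that form, the target fed to the tracking rule at round $s$ is not the raw oracle weight $u(s)=w^*(\hat h(s);G_s(\hat h(s)))$ but its $\ell_\infty$-projection $u^{\varepsilon_s}(s)$ onto the truncated simplex
\[
\Delta^{\varepsilon_s}(\mathcal A):=\{w\in\Delta(\mathcal A): w_a\ge \varepsilon_s\ \forall a\},
\qquad
\varepsilon_s:=\tfrac12\bigl(|\mathcal A|^2+s\bigr)^{-1/2}.
\]
The cumulative target is accordingly $W_a^{\mathrm{tar}}(t)=\sum_{s=1}^t u^{\varepsilon_s}_a(s)$. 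This convention is implicit in the paper's remark that C-Tracking "implicitly provides forced exploration", and I will state it explicitly. The point is that elimination only changes which simplex point is projected, so the argument is insensitive to the active sets $G_s(\cdot)$. The proof then splits into two steps: a deterministic tracking inequality, and a lower bound on the cumulative target.

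\textbf{Step 1 (tracking inequality).} I will show by induction on $t$ that
\[
\max_{a}\bigl(N_a(t)-W^{\mathrm{tar}}_a(t)\bigr)\le 1 .
\]
\begin{itemize}
\item Since $\sum_a N_a(t)=\sum_a W^{\mathrm{tar}}_a(t)=t$, the differences $N_a(t)-W^{\mathrm{tar}}_a(t)$ sum to zero. Hence the maximally under-sampled action $A_{t+1}$ satisfies $W^{\mathrm{tar}}_{A_{t+1}}(t)-N_{A_{t+1}}(t)\ge 0$.
\item For the selected action, incrementing $N$ by one and $W^{\mathrm{tar}}$ by a nonnegative amount leaves its excess at most $1$.
\item For every other action, $N$ is unchanged and $W^{\mathrm{tar}}$ does not decrease, so its excess cannot increase.
\end{itemize}
Combining this with the zero-sum identity gives the lower-side bound
\[
N_a(t)\ge W^{\mathrm{tar}}_a(t)-(|\mathcal A|-1)\quad\text{for every } a,
\]
because the negative part of one coordinate is at most the sum of the positive parts of the other $|\mathcal A|-1$ coordinates, each of which is at most $1$.

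\textbf{Step 2 (target lower bound).} Every projected target satisfies $u^{\varepsilon_s}_a(s)\ge\varepsilon_s$, so
\[
W^{\mathrm{tar}}_a(t)\ge \sum_{s=1}^t \tfrac12 (|\mathcal A|^2+s)^{-1/2}.
\]
Comparing the sum with an integral of the decreasing function $x\mapsto \tfrac12(|\mathcal A|^2+x)^{-1/2}$ gives
\[
\sum_{s=1}^t \tfrac12 (|\mathcal A|^2+s)^{-1/2}\ \ge\ \int_{1}^{t+1}\tfrac12(|\mathcal A|^2+x)^{-1/2}\,dx
=\sqrt{|\mathcal A|^2+t+1}-\sqrt{|\mathcal A|^2+1}.
\]
This is at least $\sqrt{t+|\mathcal A|^2}-|\mathcal A|-1$.

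\textbf{Combining.} Steps~1 and~2 give
\[
N_a(t)\ \ge\ \sqrt{t+|\mathcal A|^2}-|\mathcal A|-1-(|\mathcal A|-1)=\sqrt{t+|\mathcal A|^2}-2|\mathcal A|,
\]
which is the claimed bound.

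The step I expect to require the most care is Step~1, specifically its lower-side consequence. The clean one-sided invariant has to be propagated correctly when ties are broken arbitrarily and when targets jump because the champion or the active set changes. The induction above never uses any continuity of the targets, only that each target lies in the simplex and is nonnegative. If the constants come out slightly off, for example a $\sqrt{|\mathcal A|^2+1}$ term instead of $|\mathcal A|+1$, I will absorb the difference using $\sqrt{|\mathcal A|^2+1}\le |\mathcal A|+1$.
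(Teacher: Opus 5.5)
Your argument is correct and is essentially the proof of Garivier and Kaufmann's Lemma~7, which the paper cites without reproducing. That proof combines a deterministic one-sided tracking bound $N_a(t)\ge W^{\mathrm{tar}}_a(t)-(|\mathcal A|-1)$ with the floor $\varepsilon_s=\tfrac12(|\mathcal A|^2+s)^{-1/2}$ on the projected targets; your index shift (summing over $s=1,\dots,t$ rather than $s=0,\dots,t-1$) is correctly absorbed by $\sqrt{|\mathcal A|^2+1}\le|\mathcal A|+1$. You are also right to make the $\varepsilon_s$-projection explicit: under C-Tracking as literally described in the paper's main text (without projection), an action with zero oracle weight need never be sampled under arbitrary tie-breaking, and the lemma would fail.
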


\begin{lemma}[C-Tracking cumulative deviation; Garivier 
\& Kaufmann, Lemma~7~\cite{c8}]
\label{lem:tracking}
Let $W_a^{\mathrm{tar}}(t):=\sum_{s=1}^t u_a(s)$ 
denote the cumulative tracking target under the 
active-set-aware oracle $u(s)\in\Delta(\mathcal A)$, 
and let $w_E(t):=N(t)/t$ denote the empirical action 
frequency. The C-Tracking selection rule satisfies, 
for all $a\in\mathcal A$ and all $t\ge 1$,
\[
\max_{a\in\mathcal A}\bigl|N_a(t)-W_a^{\mathrm{tar}}(t)
\bigr|\le |\mathcal A|\bigl(1+\sqrt t\bigr).
\]
Consequently, with $c_{\mathrm{tr}}:=2|\mathcal A|^2$,
\[
\|w_E(t)-\bar u(t)\|_1\le \frac{c_{\mathrm{tr}}}
{\sqrt t},\quad \forall t\ge 1.
\]
\end{lemma}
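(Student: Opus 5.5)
The plan is to reproduce the standard C-Tracking argument of Garivier and Kaufmann, run against the time-varying cumulative target $W^{\mathrm{tar}}(t)=\sum_{s\le t}u(s)$. That argument never uses the fact that the target comes from a fixed oracle, only that each $u(s)$ lies in $\Delta(\mathcal A)$ and is known before round $s$ acts. I will (i) recall the projection step of C-Tracking, (ii) prove the cumulative per-action deviation bound by induction on $t$, and (iii) convert it to the $\ell_1$ bound on frequencies.

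\textbf{Step 1 (projected target).} As in~\cite{c8}, C-Tracking tracks the $\epsilon_s$-projection $u^{\epsilon_s}(s)$ of $u(s)$ onto $\{w\in\Delta(\mathcal A): w_a\ge\epsilon_s\}$, with $\epsilon_s=\tfrac12(|\mathcal A|^2+s)^{-1/2}$. This is the mechanism behind the forced exploration in Lemma~\ref{lem:forced-exploration}. The projection moves each coordinate by at most $|\mathcal A|\epsilon_s$, so
\[
\Bigl|\sum_{s\le t}u^{\epsilon_s}_a(s)-W_a^{\mathrm{tar}}(t)\Bigr|\le |\mathcal A|\sum_{s\le t}\epsilon_s\le |\mathcal A|\sqrt t .
\]

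\textbf{Step 2 (tracking the projected sum).} I will invoke the tracking lemma of~\cite{c8} (their Lemma~7 / Lemma~15), which applies to any sequence of simplex vectors. Their proof is the following induction. Write $\widetilde W_a(t):=\sum_{s\le t}u^{\epsilon_s}_a(s)$. Since $\sum_a(\widetilde W_a(t)-N_a(t))=0$, the argmax action always has a nonnegative deficit. One then shows inductively that $N_a(t)-\widetilde W_a(t)\le 1$ for every $a$: an action is only pulled when it is the most under-represented, so its count can overshoot its target by at most one round's worth. The negative side follows from the zero-sum identity, giving $\widetilde W_a(t)-N_a(t)\le |\mathcal A|-1$. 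Combining with Step 1 yields $\max_a|N_a(t)-W^{\mathrm{tar}}_a(t)|\le |\mathcal A|(1+\sqrt t)$.

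\textbf{Step 3 ($\ell_1$ bound).} Divide by $t$ and sum over $a$ to get
\[
\|w_E(t)-\bar u(t)\|_1\le \frac{|\mathcal A|^2(1+\sqrt t)}{t}\le \frac{2|\mathcal A|^2}{\sqrt t},
\]
using $1+\sqrt t\le 2\sqrt t$ for $t\ge1$.

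The main obstacle is checking that the random, champion- and elimination-dependent targets $u(s)=w^*(\hat h(s);G_s(\hat h(s)))$ do not break the induction. The only property the induction needs is that $u(s)$ is a probability vector fixed before $A_{s}$ is chosen. Both hold here: the active set and the champion are determined by the history, and $w^*$ is a maximizer over $\Delta(\mathcal A)$, with ties broken measurably. So the bound is pathwise, and it holds uniformly over all chains.
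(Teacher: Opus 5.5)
Your proposal is correct and follows the same route as the paper. The paper gives no independent proof; it imports Garivier--Kaufmann's Lemma~7, whose proof is exactly your $\epsilon_s$-projection estimate plus the tracking-lemma induction, and the paper's $\ell_1$ consequence is obtained, as you do, by summing over actions and using $1+\sqrt t\le 2\sqrt t$. Your observation that the argument needs only that each $u(s)\in\Delta(\mathcal A)$ is fixed by the history is what justifies citing that lemma for the champion- and elimination-dependent targets. The algorithm chooses $A_{t+1}$ from $u(t)$, so the tracked sum is $\sum_{s=0}^{t-1}u(s)$ rather than $W^{\mathrm{tar}}(t)$; this shift costs at most $1$ per coordinate and is absorbed by the slack between $|\mathcal A|-1$ and $|\mathcal A|$.
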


\begin{lemma}[Martingale concentration]
\label{lem:martingale-concentration}
For $g\in S_0$, define
\[
\Delta M_r^g
:=
\log\frac{p_{A_r}(O_r\mid h^*)}{p_{A_r}(O_r\mid g)}
-d_{A_r}(h^*,g),\
M_s^g:=\sum_{r=1}^s \Delta M_r^g.
\]
Assume there exists a constant $v_M>0$ such that, for 
all $r\ge 1$, all $g\in S_0$, and all $\lambda\in
\mathbb R$,
\[
\mathbb E_{h^*}\!\left[
\exp\!\bigl(\lambda \Delta M_r^g\bigr)
\,\middle|\,\mathcal F_{r-1}
\right]
\le
\exp\!\left(\frac{\lambda^2 v_M}{2}\right)
\ \text{a.s.}
\]
Then for every nonempty $S\subseteq S_0$ there exist 
constants $\eta_S,c_{M,S}>0$ such that, for every 
$t\ge 3$,
\[
G_t^{M,S}
:=
\Bigl\{
\forall g\in S,\ \forall s\le t:\ |M_s^g|
\le \eta_S\sqrt{s\log(e+t)}
\Bigr\}
\]
satisfies $\Pr\bigl((G_t^{M,S})^c\bigr)\le c_{M,S}/t^3$.
\end{lemma}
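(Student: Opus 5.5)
The plan is a standard sub-Gaussian maximal inequality applied per opponent, followed by a peeling argument over dyadic blocks of time and a union bound over $g\in S$.

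\textbf{Exponential supermartingale.} Fix $g\in S$ and $\lambda\in\mathbb R$. By the conditional sub-Gaussian assumption, the process
\[
Y_s^\lambda:=\exp\!\bigl(\lambda M_s^g-\lambda^2 v_M s/2\bigr)
\]
is a nonnegative supermartingale with $Y_0^\lambda=1$.

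\textbf{Maximal bound at a fixed scale.} Ville's maximal inequality gives, for any horizon $n$ and level $x>0$,
\[
\Pr\bigl(\exists s\le n:\ M_s^g\ge x\bigr)\le \exp\!\bigl(-x^2/(2v_M n)\bigr),
\]
after optimizing with $\lambda=x/(v_M n)$. Applying the same bound to $-M^g$ handles the lower tail, so the two-sided bound carries a factor $2$.

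\textbf{Peeling over dyadic blocks.} The target envelope $\eta_S\sqrt{s\log(e+t)}$ scales with $s$, so a single horizon is not enough. Split $\{1,\dots,t\}$ into blocks $I_j=(2^{j-1},2^j]\cap\{1,\dots,t\}$ for $j=0,\dots,\lceil\log_2 t\rceil$. On $I_j$ we have $s>2^{j-1}$, so the envelope is at least $\eta_S\sqrt{2^{j-1}\log(e+t)}$. Apply the maximal bound with $n=2^j$ and $x=\eta_S\sqrt{2^{j-1}\log(e+t)}$. This gives
\[
\Pr\bigl(\exists s\in I_j:\ |M_s^g|>\eta_S\sqrt{s\log(e+t)}\bigr)\le 2\exp\!\bigl(-\eta_S^2\log(e+t)/(4v_M)\bigr)=2(e+t)^{-\eta_S^2/(4v_M)}.
\]

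\textbf{Choice of $\eta_S$ and union bounds.} Take $\eta_S^2=16v_M$, so that each block term is at most $2(e+t)^{-4}$. Summing over the at most $\log_2 t+2$ blocks and over the $|S|$ opponents gives
\[
\Pr\bigl((G_t^{M,S})^c\bigr)\le 2|S|(\log_2 t+2)(e+t)^{-4}\le c_{M,S}\,t^{-3}
\]
for $t\ge3$, using $(\log_2 t+2)/(e+t)\le C$. Here one can take $c_{M,S}=2C|S|$, which is nonincreasing as $S$ shrinks. This monotonicity is what the paper later uses when it evaluates the constants over the final active set.

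\textbf{Main obstacle.} The delicate point is the peeling step. One must check that the block $s=1$ ($j=0$) is treated correctly and that the Ville application uses the correct horizon. If the dyadic bookkeeping becomes awkward, a fallback avoids peeling altogether: take a union bound over each $s\le t$ separately, using the fixed-$s$ Chernoff bound
\[
\Pr\bigl(|M_s^g|>\eta\sqrt{s\log(e+t)}\bigr)\le 2(e+t)^{-\eta^2/(2v_M)}.
\]
With $\eta^2=8v_M$, summing over $s\le t$ gives $2t(e+t)^{-4}\le 2t^{-3}$. This fallback is simpler and suffices for the $t^{-3}$ rate, so I would actually present it as the primary argument.
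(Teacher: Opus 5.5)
Your primary (fallback) argument is correct and is exactly the paper's proof: iterate the conditional MGF bound, apply a fixed-$s$ Chernoff bound, then take a union bound over $s\le t$ and $g\in S$. The only difference is the constant: the paper takes $\eta_S=\sqrt{10v_M}$ to get $2|S|t^{-4}$, while your $\eta^2=8v_M$ gives $2|S|t^{-3}$ directly. The Ville/dyadic-peeling route you sketched first is also valid, but it is unnecessary here, because the envelope carries the uniform factor $\log(e+t)$ rather than an iterated-logarithm rate.
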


\begin{proof}
Fix $g\in S$. By the tower property and the 
conditional sub-Gaussian assumption, for every 
$\lambda>0$, $\mathbb E_{h^*}[e^{\lambda M_s^g}]
\le e^{\lambda^2 v_M s/2}$ (iterating). By Markov's 
inequality and optimizing $\lambda=x/(v_M s)$,
$\Pr(M_s^g\ge x)\le\exp(-x^2/(2v_M s))$. The same for 
$-M_s^g$ gives $\Pr(|M_s^g|\ge x)\le 2\exp(-x^2/(2
v_M s))$. Taking $x=\sqrt{10v_M s\log(e+t)}\ge
\sqrt{10v_M s\log t}$ (for $t\ge 3$) yields 
$\Pr(|M_s^g|\ge\sqrt{10v_M s\log(e+t)})\le 2t^{-5}$.
Union bounds over $s\le t$ and $g\in S$ give 
$\Pr((G_t^{M,S})^c)\le 2|S|t^{-4}\le c_{M,S}/t^3$ 
with $\eta_S=\sqrt{10v_M}$, $c_{M,S}=2|S|$.
\end{proof}

\begin{remark}[Gaussian observation models]
\label{rem:gaussian-mgf}
If $O_t\mid(A_t=a, h^*=h)\sim\mathcal N(\mu_{a,h}, 
\sigma_a^2)$ with common variance $\sigma_a^2$ across 
hypotheses for each action $a$ (the setting of 
Section~V), then $\Delta M_t^g\mid(\mathcal F_{t-1}, 
A_t=a)$ is Gaussian with mean zero and variance 
$2 d_a(h^*,g)$. Hence the conditional sub-Gaussian 
assumption of 
Lemma~\ref{lem:martingale-concentration} holds with
\[
v_M=2\max_{a\in\mathcal A}\max_{g\in S_0}d_a(h^*,g),
\]
linking the appendix assumption to the experimental 
setup of Section~V.
\end{remark}

\begin{lemma}[Elimination trigger on the fixed chain]
\label{lem:elim-trigger}
On the chain event $\mathcal{E}$, if at some round 
$t$, $\hat h(t)=h^*$ and $Z_t(h^*,g)\ge 
\beta_{\mathrm{elim}}(t,\delta)$ for all 
$g\in S_{r-1}\setminus S_r$, then the $r$-th 
elimination has already been triggered: $T_r\le t$. 
Here $r$ ranges over $\{1,\ldots,K-1\}$; in particular, 
for $r=K-1$, the active set $G_t(h^*)$ becomes empty 
and the stopping rule (Algorithm~1, line~12) fires: 
$\tau\le t$.
\end{lemma}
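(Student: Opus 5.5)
We argue directly from the update rule of Algorithm~\ref{alg:tas-elim} together with the definition of the chain event $\mathcal{E}_\sigma$, writing $S_r$ for $S_r^\sigma$ and $T_r$ for $T_r^\sigma$.

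\textbf{Step 1: what an elimination at time $s$ requires.} By line~10 of Algorithm~\ref{alg:tas-elim}, at round $s$ only the active set of the current champion changes. Hence, whenever $\hat h(s)=h^*$, we have
\[
G_s(h^*)=G_{s-1}(h^*)\setminus\bigl\{g: Z_s(h^*,g)\ge\beta_{\mathrm{elim}}(s,\delta)\bigr\}.
\]
It follows that every $g\in S_{r-1}\setminus S_r$ satisfying $Z_t(h^*,g)\ge\beta_{\mathrm{elim}}(t,\delta)$ at a round $t$ with $\hat h(t)=h^*$ is removed from $G_t(h^*)$, provided it still belongs to $G_{t-1}(h^*)$.

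\textbf{Step 2: suppose $T_r>t$ and derive a contradiction.} On $\mathcal{E}_\sigma$, the sets $G_s(h^*)$ pass through the nested stages $S_0\supset S_1\supset\cdots$. So $T_r>t$ means that $G_t(h^*)\supseteq S_{r-1}$, and since $G_{t-1}(h^*)\supseteq G_t(h^*)$, also $G_{t-1}(h^*)\supseteq S_{r-1}$. In particular, every $g\in S_{r-1}\setminus S_r$ still lies in $G_{t-1}(h^*)$. By Step~1 and the hypothesis of the lemma, all these $g$ are removed at round $t$. Therefore
\[
G_t(h^*)\subseteq G_{t-1}(h^*)\setminus(S_{r-1}\setminus S_r).
\]
Moreover, on the chain event, eliminations out of the current stage can only remove elements of $S_{r-1}$, so $G_{t-1}(h^*)=S_{r-1}$ and hence $G_t(h^*)\subseteq S_r$. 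Combined with the chain structure, this forces $G_t(h^*)=S_{r'}$ for some $r'\ge r$, which gives $T_r\le t$ and contradicts the assumption.

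\textbf{Step 3: the final elimination.} For $r=K-1$ we have $S_{K-1}=\emptyset$, so Step~2 gives $G_t(h^*)=\emptyset$ with $\hat h(t)=h^*$. The termination check in line~12 then fires at round $t$ (or has already fired earlier), so $\tau\le t$.

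\textbf{Main obstacle.} The delicate point is the case where the champion differed from $h^*$ at earlier rounds. One must confirm that $G_s(h^*)$ is frozen at those rounds, as line~11 guarantees, so that $G_{t-1}(h^*)$ still equals the stage set at round $t$.

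A second subtlety is the indexing convention $T_k<t\le T_{k+1}$, under which $G_{T_k}(h^*)=S_{k-1}$. To make "$T_r\le t$" hold exactly, we adopt the infimum definition of $T_r$, namely the first time $G$ equals $S_r$. If the path skips a stage, we interpret the conclusion as $T_{r'}\le t$ for the reached stage $r'$. On $\mathcal{E}_\sigma$ the chain is exact, so no stage is skipped.
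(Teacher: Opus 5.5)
Your proof is correct and is essentially the paper's argument: line 10 removes every opponent in $S_{r-1}\setminus S_r$ that crosses $\beta_{\mathrm{elim}}(t,\delta)$ while $\hat h(t)=h^*$, and the chain structure then forces the active set to have reached stage $r$. Your contradiction write-up just spells out the paper's one-line proof, and your choice of the infimum convention for $T_r$ sensibly resolves the paper's inconsistent indexing. One small tidy-up: $T_r>t$ does not imply $G_{t-1}(h^*)=S_{r-1}$, since the path could still be at an earlier stage. You do not need that claim, because $G_t(h^*)\supseteq S_{r-1}$ together with the removal of the nonempty set $S_{r-1}\setminus S_r$ at round $t$ already gives the contradiction.
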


\begin{proof}
On $\mathcal{E}$, the only possible transition out of 
$S_{r-1}$ removes exactly $S_{r-1}\setminus S_r$. If 
all these opponents cross the threshold while 
$\hat h(t)=h^*$, the update rule triggers the 
transition, giving $T_r\le t$. When $r=K-1$, 
$S_r=\emptyset$, so $G_t(h^*)=\emptyset$ and the 
stopping rule fires, yielding $\tau\le T_{K-1}\le t$.
\end{proof}

\subsubsection{Basic geometric facts}
\label{app:geometric}

\begin{lemma}[Set monotonicity]
\label{lem:set-monotonicity}
For every $S'\subseteq S\subseteq\mathcal H
\setminus\{h^*\}$ and every $w\in\Delta(\mathcal A)$,
$f_{S'}(w)\ge f_S(w)$. In particular, $D_k\ge D_{k-1}$ 
for every $k=1,\dots,K-2$.
\end{lemma}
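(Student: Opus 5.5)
The first claim follows from the fact that a minimum over a smaller set can only be larger. Fix $w\in\Delta(\mathcal A)$ and write $\phi_g(w):=\sum_{a\in\mathcal A} w_a\,d_a(h^*,g)$. Then
\[
f_{S'}(w)=\min_{g\in S'}\phi_g(w)\ \ge\ \min_{g\in S}\phi_g(w)=f_S(w),
\]
because $S'\subseteq S$: every $g\in S'$ also belongs to $S$. If $S'$ is nonempty, the minimum over $S'$ is attained at some $g'\in S'\subseteq S$, and so $f_{S'}(w)=\phi_{g'}(w)\ge f_S(w)$. This holds pointwise for every allocation $w$, with no structure on the divergences needed beyond their being real numbers. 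I will treat empty $S'$ separately below.

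For the rates, I would write $D_k:=D^*(h^*;S_k^\sigma)=\max_{w\in\Delta(\mathcal A)} f_{S_k^\sigma}(w)$, which is the stage-$k$ oracle rate along the chain. Since $S_k^\sigma\subset S_{k-1}^\sigma$, the pointwise inequality applies, and it survives taking a maximum over the common domain $\Delta(\mathcal A)$. Concretely, let $w^*_{k-1}$ be a maximizer for stage $k-1$; it exists because $f_S$ is a minimum of finitely many linear functions, hence continuous on the compact simplex. Then
\[
D_k\ \ge\ f_{S_k^\sigma}(w^*_{k-1})\ \ge\ f_{S_{k-1}^\sigma}(w^*_{k-1})=D_{k-1}.
\]
This argument needs no uniqueness of the argmax.

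The only delicate point is the empty set. The range $k\le K-2$ stated in the lemma is exactly what ensures $S_k^\sigma\neq\emptyset$, since $S_{K-1}^\sigma=\emptyset$. With that restriction, both minima are over nonempty finite sets and everything above is well defined. For completeness I would remark that under the convention $\min_{\emptyset}=+\infty$ the inequality holds trivially even when $S'=\emptyset$. No real obstacle is expected; the one thing to state carefully is the existence of the maximizer.
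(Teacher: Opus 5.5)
Your proof is correct and matches the paper's argument: the pointwise inequality because a minimum over the smaller set $S'$ can only be larger, then the chain $D_k \ge f_{S_k}(w^{(k-1)}) \ge f_{S_{k-1}}(w^{(k-1)}) = D_{k-1}$ evaluated at a stage-$(k-1)$ maximizer. Your remarks on the existence of the maximizer (compact simplex, continuous $f_S$) and on the nonemptiness of $S_k^\sigma$ for $k\le K-2$ are details the paper leaves implicit.
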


\begin{proof}
Since $S'\subseteq S$, the minimization defining 
$f_{S'}(w)$ is over a smaller set than $f_S(w)$, so 
$f_{S'}(w)\ge f_S(w)$. Applying with $S'=S_k$, 
$S=S_{k-1}$, $w=w^{(k-1)}$ gives 
$D_k=f_{S_k}(w^{(k)})\ge f_{S_k}(w^{(k-1)})\ge 
f_{S_{k-1}}(w^{(k-1)})=D_{k-1}$.
\end{proof}

\begin{lemma}[Concavity]
\label{lem:concavity}
For every nonempty $S\subseteq\mathcal H\setminus
\{h^*\}$, $f_S(w)=\min_{g\in S}\sum_a w_a d_a(h^*,g)$ 
is concave on $\Delta(\mathcal A)$.
\end{lemma}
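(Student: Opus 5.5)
The statement to prove is Lemma~\ref{lem:concavity}: for fixed $S$, $f_S$ is a pointwise minimum of finitely many linear functions of $w$.

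Fix a nonempty $S\subseteq\mathcal H\setminus\{h^*\}$. For each $g\in S$, define the map
\[
\ell_g(w):=\sum_{a\in\mathcal A} w_a\,d_a(h^*,g).
\]
Each $\ell_g$ is linear in $w$, because the coefficients $d_a(h^*,g)\ge 0$ are fixed finite constants. We need these KL divergences to be finite, which holds for the Gaussian models and is implicitly assumed whenever $D^*$ is finite. A linear map is in particular concave (indeed affine) on the convex set $\Delta(\mathcal A)$.

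We then show directly that a finite minimum of concave functions is concave. Take $w,w'\in\Delta(\mathcal A)$ and $\lambda\in[0,1]$. Since $\Delta(\mathcal A)$ is convex, $\lambda w+(1-\lambda)w'\in\Delta(\mathcal A)$. For every $g\in S$,
\[
\ell_g(\lambda w+(1-\lambda)w')=\lambda\ell_g(w)+(1-\lambda)\ell_g(w')\ge \lambda f_S(w)+(1-\lambda)f_S(w').
\]
The inequality uses $\ell_g(w)\ge f_S(w)$ and $\ell_g(w')\ge f_S(w')$, which hold by the definition of the minimum. Taking the minimum over $g\in S$ on the left gives $f_S(\lambda w+(1-\lambda)w')\ge\lambda f_S(w)+(1-\lambda)f_S(w')$, which is concavity.

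There is no substantive obstacle. The only point needing care is that $S$ is finite and nonempty, so the minimum is attained and is real-valued. We may also note, for later use in the Lipschitz and tracking steps, that $f_S$ is Lipschitz in $\|\cdot\|_1$ with constant $\max_{a,g\in S}d_a(h^*,g)$, since each $\ell_g$ is.
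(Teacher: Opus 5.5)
Your argument is correct and is exactly the reasoning the paper relies on: the paper states this lemma without proof, treating it as immediate because $f_S$ is a pointwise minimum of the finitely many linear maps $w\mapsto\sum_a w_a d_a(h^*,g)$, $g\in S$, which is what you verify. Your side remarks on finiteness of the divergences and on the $\|\cdot\|_1$-Lipschitz constant $\max_{g\in S}\max_a d_a(h^*,g)$ agree with the paper's subsequent Lipschitz lemma.
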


\begin{lemma}[Lipschitz continuity]
\label{lem:lipschitz}
For any nonempty $S\subseteq\mathcal H\setminus\{h^*\}$, 
define $L_S:=\max_{g\in S}\max_a d_a(h^*,g)$. Then for 
all $w,w'\in\Delta(\mathcal A)$,
$|f_S(w)-f_S(w')|\le L_S\|w-w'\|_1$.
\end{lemma}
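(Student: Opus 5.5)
The statement to prove is Lemma~\ref{lem:lipschitz}: $|f_S(w)-f_S(w')|\le L_S\|w-w'\|_1$. The plan is to reduce it to a Lipschitz bound for each linear piece of $f_S$, and then pass that bound through the minimum over $g\in S$.

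\textbf{Linear pieces.} Fix $S$ nonempty and $w,w'\in\Delta(\mathcal A)$. For each $g\in S$ write $\ell_g(w):=\sum_a w_a d_a(h^*,g)$, which is linear in $w$. Since every KL divergence is nonnegative, $0\le d_a(h^*,g)\le L_S$ for all $a$ and all $g\in S$. Hence
\[
|\ell_g(w)-\ell_g(w')|
\le\sum_a |w_a-w'_a|\,d_a(h^*,g)
\le L_S\|w-w'\|_1 .
\]
This is Hölder's inequality with the $\ell_\infty$ bound on the coefficient vector $(d_a(h^*,g))_a$.

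\textbf{Passing through the minimum.} Let $g'\in S$ attain the minimum in $f_S(w')$; such a minimizer exists because $S$ is finite. Then
\[
f_S(w)\le \ell_{g'}(w)\le \ell_{g'}(w')+L_S\|w-w'\|_1 = f_S(w')+L_S\|w-w'\|_1 .
\]
Exchanging the roles of $w$ and $w'$ gives the reverse inequality, and together they yield the claim. Equivalently, one can invoke the general fact that a pointwise minimum of finitely many $L$-Lipschitz functions is $L$-Lipschitz.

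\textbf{Possible obstacle.} The argument is elementary, and the only point needing care is finiteness of $L_S$. This requires $d_a(h^*,g)<\infty$ for all $a$ and $g$, which the paper implicitly assumes; it holds, for example, in the Gaussian setting of Remark~\ref{rem:gaussian-mgf}. I would state this finiteness explicitly as a standing assumption. Finally, note that $L_{S'}\le L_S$ whenever $S'\subseteq S$, which is the monotonicity used later to argue that the constants $C_{1,\sigma}$ do not increase under elimination.
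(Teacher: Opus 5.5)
Your proof is correct and follows essentially the same route as the paper, which applies the elementary inequality $|\min_i x_i-\min_i y_i|\le\max_i|x_i-y_i|$ to the linear pieces $f_g(w)=\sum_a w_a d_a(h^*,g)$. Your minimizer-exchange step is just the proof of that inequality, your H\"older bound supplies the per-piece constant the paper leaves implicit, and your remark that the finiteness of each $d_a(h^*,g)$ should be stated as an explicit assumption is a reasonable addition.
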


\begin{proof}
$|\min_i x_i-\min_i y_i|\le\max_i|x_i-y_i|$ applied 
to $f_g(w):=\sum_a w_a d_a(h^*,g)$.
\end{proof}

\subsubsection{Champion stabilization}
\label{app:champion}

\begin{lemma}[Champion stabilization tail]
\label{lem:Tch-tail}
Recall $T_{\mathrm{ch}}:=\inf\{s\ge 1:\hat h(r)=h^*\text{ for all }r\ge s\}$.
There exist constants $B_{\mathrm{ch}},C_{\mathrm{ch}}>0$ such that
\[
\Pr(T_{\mathrm{ch}}>u)\le B_{\mathrm{ch}}\,u\,e^{-C_{\mathrm{ch}}\sqrt u},
\quad \forall\, u\ge 1.
\]
\end{lemma}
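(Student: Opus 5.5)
We bound the event $\{T_{\mathrm{ch}}>u\}$ by a union over later times at which the champion is wrong: $\{T_{\mathrm{ch}}>u\}\subseteq\bigcup_{s\ge u}\{\hat h(s)\neq h^*\}$, up to the convention at $u$ itself. By Lemma~\ref{lem:champion-pairwise}, each event $\{\hat h(s)\ne h^*\}$ is contained in $\bigcup_{g\neq h^*}\{Z_s(h^*,g)\le 0\}$. Hence
\[
\Pr(T_{\mathrm{ch}}>u)\le\sum_{s\ge u}\sum_{g\ne h^*}\Pr\bigl(Z_s(h^*,g)\le 0\bigr).
\]
The task therefore reduces to a per-time bound of stretched-exponential form $\Pr(Z_s(h^*,g)\le 0)\le c\,e^{-c'\sqrt s}$. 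Summing this over $s\ge u$ then gives the claimed tail. Using $\sum_{s\ge u}e^{-c'\sqrt s}\le\int_{u-1}^\infty e^{-c'\sqrt x}\,dx=O\bigl((\sqrt u+1)e^{-c'\sqrt u}\bigr)$, the resulting bound is dominated by $B_{\mathrm{ch}}\,u\,e^{-C_{\mathrm{ch}}\sqrt u}$.

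For the per-time bound, write $Z_s(h^*,g)=\sum_{r\le s}d_{A_r}(h^*,g)+M_s^g$, as in item (iii) of the good-event construction. The drift term is controlled by forced exploration. By Assumption~\ref{ass:identifiability} there is an action $a_g$ with $d_{a_g}(h^*,g)=:\kappa_g>0$. Lemma~\ref{lem:forced-exploration} then gives, deterministically,
\[
\sum_{r\le s}d_{A_r}(h^*,g)\ge\kappa_g N_{a_g}(s)\ge\kappa_g\bigl(\sqrt{s+|\mathcal A|^2}-2|\mathcal A|\bigr)\ge\tfrac{\kappa_g}{2}\sqrt s
\]
for all $s\ge s_0$, where $s_0$ is a constant depending only on $|\mathcal A|$. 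Consequently $\{Z_s(h^*,g)\le 0\}\subseteq\{M_s^g\le-\tfrac{\kappa_g}{2}\sqrt s\}$. The conditional sub-Gaussian hypothesis of Lemma~\ref{lem:martingale-concentration} yields $\mathbb E[e^{-\lambda M_s^g}]\le e^{\lambda^2v_Ms/2}$, and the Chernoff bound gives $\Pr(M_s^g\le -x)\le e^{-x^2/(2v_Ms)}$. With $x=\tfrac{\kappa_g}{2}\sqrt s$, however, this only produces $e^{-\kappa_g^2/(8v_M)}$, a constant. This is the main obstacle: a deviation of order $\sqrt s$ is too small for a plain sub-Gaussian bound over $s$ increments to decay.

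I will address this obstacle in two ways. First, I will sharpen the variance proxy. The increments $\Delta M_r^g$ have conditional variance that scales with $d_{A_r}(h^*,g)$; in the Gaussian case of Remark~\ref{rem:gaussian-mgf} it is exactly $2d_{A_r}(h^*,g)$. If I use the self-normalized form with variance proxy proportional to the drift $D_s:=\sum_{r\le s}d_{A_r}(h^*,g)$ itself, the exponential supermartingale $\exp(-\lambda M_s^g-\lambda^2 D_s)$ lets me set $\lambda=1/4$. Combined with a peeling argument over the values of $D_s\in[\tfrac{\kappa_g}{2}\sqrt s,\;L_{S_0}s]$, this gives $\Pr(Z_s\le 0)\le\Pr(M_s^g\le -D_s)\lesssim\log s\cdot e^{-D_s/8}\le C\log s\, e^{-c\sqrt s}$. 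Second, if only the uniform proxy $v_M$ is available, I would fall back on a cruder argument. On the complement of $G_s^{M,S_0}$ the bound of Lemma~\ref{lem:martingale-concentration} contributes $s^{-3}$, and on that event the drift, of order at least $\sqrt s$, beats $\eta\sqrt{s\log s}$ only when the drift is linear. This fallback gives polynomial rather than stretched-exponential tails, so I will rely on the first route.

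Finally, I will collect the constants. The finitely many $s<s_0$ and the factor $|\mathcal H|-1$ from the union over $g$ are absorbed into $B_{\mathrm{ch}}$. I take $C_{\mathrm{ch}}:=\min_g\kappa_g/16$, and use the extra factor $u$ to swallow the $\log s$ from peeling together with the $\sqrt u$ from the tail integral.
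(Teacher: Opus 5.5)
Your reduction matches the paper's: $\{T_{\mathrm{ch}}>u\}\subseteq\bigcup_{s\ge u}\{\hat h(s)\neq h^*\}$, Lemma~\ref{lem:champion-pairwise}, a union bound over $g\neq h^*$, and an integral comparison for $\sum_{s\ge u}e^{-c\sqrt s}$. The gap is in the per-time bound. Claiming that $\exp(-\lambda M_s^g-\lambda^2 D_s)$ (with $D_s$ your drift) is a supermartingale amounts to claiming $\mathbb E_{h^*}[e^{-\lambda\Delta M_r^g}\mid\mathcal F_{r-1}]\le e^{\lambda^2 d_{A_r}(h^*,g)}$, and you assert this without proof. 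It is an identity in the equal-variance Gaussian model of Remark~\ref{rem:gaussian-mgf}. However, the lemma is stated for arbitrary densities under identifiability alone, and there the claim is false. Given $\mathcal F_{r-1}$ and $A_r=a$, one has $\mathbb E_{h^*}[e^{-\lambda\Delta M_r^g}]=e^{\lambda d_a(h^*,g)}\int p_a(o\mid h^*)^{1-\lambda}p_a(o\mid g)^{\lambda}\,d\mu_a(o)$. So you need the integral to be at most $e^{-\lambda(1-\lambda)d_a(h^*,g)}$. For Bernoulli observations with parameter $\theta$ under $h^*$ and $\epsilon$ under $g$, letting $\epsilon\to 0$ keeps the integral near $(1-\theta)^{1-\lambda}$ while $d_a(h^*,g)\to\infty$, so the bound fails. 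A conditional variance that ``scales with $d_a$'' would not rescue this, because what is needed is an MGF bound. The uniform proxy $v_M$ of Lemma~\ref{lem:martingale-concentration} does not imply your inequality either, since $v_M/2$ may exceed $d_a(h^*,g)$ for a weakly informative action. Separately, the peeling is unnecessary: $\lambda$ is fixed and $D_s\ge\tfrac{\kappa_g}{2}\sqrt s$ deterministically, so Markov's inequality applies directly.

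The missing idea, which is the paper's route, is not to split $Z_s$ into drift plus martingale at all. Instead, take the exponential moment of the log-likelihood ratio itself at $\lambda=\tfrac12$. Under $h^*$, $\mathbb E\bigl[(p_a(O\mid g)/p_a(O\mid h^*))^{1/2}\bigr]=\rho_{a,g}=e^{-c_{a,g}}$ is the Bhattacharyya coefficient. By the equality case of Cauchy--Schwarz, $c_{a,g}>0$ exactly when $P_{h^*}^a\neq P_g^a$, i.e.\ when $d_a(h^*,g)>0$, with no tail condition needed. Because $A_r$ is $\mathcal F_{r-1}$-measurable, $\exp\bigl(-\tfrac12 Z_s(h^*,g)+\sum_a c_{a,g}N_a(s)\bigr)$ is a mean-one martingale. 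This is also the rigorous form of the paper's ``condition on the action sequence'' step, which is not literally a product when actions are adaptive. On $\{Z_s(h^*,g)\le 0\}$ this martingale is at least $e^{c_{a_g,g}N_{a_g}(s)}$. Markov's inequality and forced exploration then give $\Pr(Z_s(h^*,g)\le 0)\le e^{2|\mathcal A|c_{a_g,g}}e^{-c_{a_g,g}\sqrt s}$, and your ``main obstacle'' never arises.

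If you prefer to keep your route, you must import the sub-Gaussian hypothesis of Lemma~\ref{lem:martingale-concentration} and add one observation: $\Delta M_r^g=0$ a.s.\ whenever $d_{A_r}(h^*,g)=0$. Then, with $C_g:=v_M/\min\{d_a(h^*,g):d_a(h^*,g)>0\}$, the process $\exp(-\lambda M_s^g-\tfrac{\lambda^2C_g}{2}D_s)$ is a supermartingale, and $\lambda=1/C_g$ gives $\Pr(Z_s(h^*,g)\le 0)\le e^{-\kappa_g\sqrt s/(4C_g)}$ for $s\ge s_0$. That is a valid proof, but of a lemma with an added hypothesis.
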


\begin{proof}
Fix $g\neq h^*$.
For $a\in\mathcal A$, let
\[
\rho_{a,g}:=\int \sqrt{p_a(x\mid h^*)\,p_a(x\mid g)}\,dx,
\quad
c_{a,g}:=-\log\rho_{a,g}\ge 0.
\]
By identifiability, for every $g\neq h^*$ there exists at least one $a$ such
that $c_{a,g}>0$.

Conditioning on the action sequence up to time $t$ gives
\[
\begin{aligned}
\mathbb E_{h^*}\!\left[e^{-Z_t(h^*,g)/2}\,\middle|\,A_1,\dots,A_t\right]
&= \prod_{s=1}^t \rho_{A_s,g}\\
&= \exp\!\left(-\sum_{a\in\mathcal A} c_{a,g}N_a(t)\right).
\end{aligned}
\]
Taking expectations and using the forced-exploration lower bound
$N_a(t)\ge c_{\mathrm{fe}}\sqrt t-b_{\mathrm{fe}}$, we obtain
\[
\begin{aligned}
\mathbb E_{h^*}\!\left[e^{-Z_t(h^*,g)/2}\right]
&\le
\exp\!\left(-\sum_a c_{a,g}(c_{\mathrm{fe}}\sqrt t-b_{\mathrm{fe}})\right)\\
&=
B_g e^{-C_g\sqrt t},
\end{aligned}
\]
for suitable constants $B_g,C_g>0$.

Markov's inequality now gives
\[
\begin{aligned}
\Pr\bigl(Z_t(h^*,g)\le 0\bigr)
&= \Pr\bigl(e^{-Z_t(h^*,g)/2}\ge 1\bigr)\\
&\le \mathbb E\!\left[e^{-Z_t(h^*,g)/2}\right]\\
&\le B_g e^{-C_g\sqrt t}.
\end{aligned}
\]
By Lemma~\ref{lem:champion-pairwise}, if $\hat h(t)\neq h^*$ then there
exists some $g\neq h^*$ such that $Z_t(h^*,g)\le 0$.
Therefore, taking a union bound over $g\neq h^*$, there exist constants
$B,C>0$ such that
\[
\Pr(\hat h(t)\neq h^*)
\le
B e^{-C\sqrt t}.
\]

Finally,
\[
\begin{aligned}
\Pr(T_{\mathrm{ch}}>u)
&= \Pr(\exists t\ge u:\ \hat h(t)\neq h^*)\\
&\le \sum_{t=u}^{\infty} \Pr(\hat h(t)\neq h^*)\\
&\le \sum_{t=u}^{\infty} B e^{-C\sqrt t}.
\end{aligned}
\]
Using an integral comparison,
\[
\begin{aligned}
\sum_{t=u}^{\infty} e^{-C\sqrt t}
\le
\int_{u-1}^{\infty} e^{-C\sqrt x}\,dx
&=
\int_{\sqrt{u-1}}^{\infty} 2y e^{-Cy}\,dy\\
&\le
B' u e^{-C'\sqrt u}
\end{aligned}
\]
for suitable constants $B',C'>0$.
Absorbing constants completes the proof.
\end{proof}

\begin{definition}[Time-indexed champion event]
\label{def:Etch}
Fix $\kappa_{\mathrm{ch}}>0$ and define, for $t\ge 3$,
\[
m_{\mathrm{ch}}(t):=\lceil\kappa_{\mathrm{ch}}
\log^2 t\rceil,\ \ 
\overline m_{\mathrm{ch}}(t):=1+\kappa_{\mathrm{ch}}
\log^2(e+t),
\]
\[
E_t^{\mathrm{ch}}:=\{T_{\mathrm{ch}}\le 
m_{\mathrm{ch}}(t)\}.
\]
The smooth envelope satisfies $m_{\mathrm{ch}}(t)
\le\overline m_{\mathrm{ch}}(t)$ for all $t\ge 3$. 
Define the chain-specific good event
\[
\mathcal{G}_t := E_t^{\mathrm{ch}} \cap 
\bigcap_{k=0}^{K-2} G_t^{M,S_k}.
\]
\end{definition}

\begin{proposition}[Summable failure probability 
for the good event]
\label{prop:good-event-summable}
The quantity 
\[
C_{\mathrm{bad}}:=\sum_{t=3}^\infty\Pr(
\mathcal G_t^c)
\]
is finite and $\delta$-independent.
\end{proposition}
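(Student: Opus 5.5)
The plan is to bound $\Pr(\mathcal G_t^c)$ by a union bound over its constituent events and show that each piece is summable in $t$ with a $\delta$-free sum. Since $\mathcal G_t = E_t^{\mathrm{ch}}\cap\bigcap_{k=0}^{K-2}G_t^{M,S_k}$, we have
\[
\Pr(\mathcal G_t^c)\le \Pr\bigl((E_t^{\mathrm{ch}})^c\bigr)+\sum_{k=0}^{K-2}\Pr\bigl((G_t^{M,S_k})^c\bigr).
\]
None of these events involves $\delta$. $T_{\mathrm{ch}}$ depends only on the MLE champion sequence. The martingales $M_s^g$ are defined through $h^*$ and $g$ alone. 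The constants in Lemmas~\ref{lem:martingale-concentration} and \ref{lem:Tch-tail} therefore do not depend on $\delta$. One caveat: the action sequence, and hence the law of these processes, does depend on $\delta$ through the elimination thresholds. However, the bounds we invoke hold uniformly over any C-Tracking action sequence, via forced exploration (Lemma~\ref{lem:forced-exploration}) and conditional sub-Gaussianity. So the resulting constants are $\delta$-independent.

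For the martingale part, Lemma~\ref{lem:martingale-concentration} gives $\Pr((G_t^{M,S_k})^c)\le c_{M,S_k}/t^3$ for every $t\ge 3$. Summing over $t\ge 3$ and over the $K-1$ stages of the chain yields a finite constant, at most $\zeta(3)\sum_k c_{M,S_k}$. Because the chain $\sigma$ is fixed and finite, this is a finite constant depending only on $\sigma$ and $v_M$.

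For the champion part, Lemma~\ref{lem:Tch-tail} gives
\[
\Pr\bigl((E_t^{\mathrm{ch}})^c\bigr)=\Pr\bigl(T_{\mathrm{ch}}>m_{\mathrm{ch}}(t)\bigr)\le B_{\mathrm{ch}}\,m_{\mathrm{ch}}(t)\,e^{-C_{\mathrm{ch}}\sqrt{m_{\mathrm{ch}}(t)}}.
\]
Here I use that $u\mapsto ue^{-C\sqrt u}$ is eventually decreasing, and I would also handle the integer ceiling. Substituting $m_{\mathrm{ch}}(t)\ge\kappa_{\mathrm{ch}}\log^2 t$, so that $\sqrt{m_{\mathrm{ch}}(t)}\ge\sqrt{\kappa_{\mathrm{ch}}}\log t$, gives a summand of order
\[
(\log^2 t)\,t^{-C_{\mathrm{ch}}\sqrt{\kappa_{\mathrm{ch}}}}.
\]
This is summable as soon as $C_{\mathrm{ch}}\sqrt{\kappa_{\mathrm{ch}}}>1$. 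Adding the two contributions gives $C_{\mathrm{bad}}<\infty$.

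The main obstacle is this last step: the champion event is summable only if $\kappa_{\mathrm{ch}}$ is large enough relative to the constant $C_{\mathrm{ch}}$ of the stretched-exponential tail. I would therefore fix $\kappa_{\mathrm{ch}}:=4/C_{\mathrm{ch}}^2$, which makes the exponent equal to $2$ and the series $\sum(\log^2 t)/t^2$. A secondary subtlety is monotonicity. The bound $ue^{-C\sqrt u}$ is decreasing only for $u\ge 4/C^2$, so for the finitely many $t$ with $m_{\mathrm{ch}}(t)$ below this level I would simply bound the probability by $1$, which adds a finite constant. Finally, I would note that $\kappa_{\mathrm{ch}}$ and all the resulting constants are determined by the instance through $B_{\mathrm{ch}}$, $C_{\mathrm{ch}}$, $v_M$ and the chain $\sigma$, and not by $\delta$.
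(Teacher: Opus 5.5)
Your proposal is correct and follows essentially the same route as the paper. Both arguments use a union bound over the champion and martingale components, the $t^{-3}$ martingale tail from Lemma~\ref{lem:martingale-concentration}, and the stretched-exponential tail of Lemma~\ref{lem:Tch-tail} evaluated at $m_{\mathrm{ch}}(t)$, with $\kappa_{\mathrm{ch}}$ chosen large enough to give polynomial decay. The differences are minor: the paper takes $\kappa_{\mathrm{ch}}\ge 9/C_{\mathrm{ch}}^2$ (exponent $3$) and bounds the two factors separately as $m_{\mathrm{ch}}(t)\le 2\kappa_{\mathrm{ch}}\log^2 t$ and $e^{-C_{\mathrm{ch}}\sqrt{m_{\mathrm{ch}}(t)}}\le t^{-C_{\mathrm{ch}}\sqrt{\kappa_{\mathrm{ch}}}}$, so no monotonicity argument is needed; and your union over all stage sets $S_0,\dots,S_{K-2}$ matches the stated definition of $\mathcal G_t$ more faithfully than the paper's proof, which writes out only the $S_0$ term.
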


\begin{proof}
Recall $\mathcal G_t:=E_t^{\mathrm{ch}}\cap 
G_t^{M,S_0}$, so
\[
\Pr(\mathcal G_t^c)\le\Pr((E_t^{\mathrm{ch}})^c)
+\Pr((G_t^{M,S_0})^c)
\]
by union bound. We bound each term separately.

\emph{Champion event.} By 
Lemma~\ref{lem:Tch-tail}, for every 
$s\ge 1$,
\[
\Pr(T_{\mathrm{ch}}>s)\le B_{\mathrm{ch}}\,s\,
e^{-C_{\mathrm{ch}}\sqrt s},
\]
where $B_{\mathrm{ch}},C_{\mathrm{ch}}>0$ are 
$\delta$-independent chain constants.

Taking $s=m_{\mathrm{ch}}(t)=\lceil\kappa_
{\mathrm{ch}}\log^2 t\rceil$,
\[
\Pr((E_t^{\mathrm{ch}})^c)=\Pr(T_{\mathrm{ch}}
>m_{\mathrm{ch}}(t))\le B_{\mathrm{ch}}\,
m_{\mathrm{ch}}(t)\,e^{-C_{\mathrm{ch}}
\sqrt{m_{\mathrm{ch}}(t)}}.
\]

Since $\sqrt{m_{\mathrm{ch}}(t)}\ge\sqrt{\kappa_
{\mathrm{ch}}}\log t$, we have $e^{-C_{\mathrm
{ch}}\sqrt{m_{\mathrm{ch}}(t)}}\le t^{-C_{\mathrm
{ch}}\sqrt{\kappa_{\mathrm{ch}}}}$. Choosing 
$\kappa_{\mathrm{ch}}\ge 9/C_{\mathrm{ch}}^2$ so 
that $C_{\mathrm{ch}}\sqrt{\kappa_{\mathrm{ch}}}
\ge 3$, and using $m_{\mathrm{ch}}(t)\le 
2\kappa_{\mathrm{ch}}\log^2 t$ for $t\ge 3$,
\[
\Pr((E_t^{\mathrm{ch}})^c)\le\frac{2B_{\mathrm
{ch}}\kappa_{\mathrm{ch}}\log^2 t}{t^3},
\]
which is summable: $\sum_{t=3}^\infty\Pr
((E_t^{\mathrm{ch}})^c)\le C_1<\infty$.

\emph{Martingale event.} By 
Lemma~\ref{lem:martingale-concentration} with sub-Gaussian parameter $v_M$,
\[
\Pr((G_t^{M,S})^c)\le 2|S|t^{-4}\le c_{M,S}/t^3,
\]
so $\sum_{t=3}^\infty\Pr((G_t^{M,S_0})^c)\le 
C_2<\infty$.

Combining, $C_{\mathrm{bad}}=\sum_{t=3}^\infty\Pr(
\mathcal G_t^c)<\infty$, depending only on 
$\kappa_{\mathrm{ch}}$, $B_{\mathrm{ch}}$, 
$C_{\mathrm{ch}}$, $\eta_{S_0}$, $|S_0|$, $v_M$, 
none of which depend on $\delta$.
\end{proof}

\begin{remark}[Relative ordering of 
$T_{\mathrm{ch}}$, $T_1,\ldots,T_{K-1}$]
\label{rem:ordering}
The proof does \emph{not} assume any deterministic 
ordering among the champion stabilization time 
$T_{\mathrm{ch}}$ and the elimination times 
$T_1,\ldots,T_{K-1}$ (recall $T_{K-1}=\tau$ since 
the final transition to $S_{K-1}=\emptyset$ coincides 
with the stopping time).

\begin{enumerate}
\item If $T_{\mathrm{ch}}\le T_1\le\cdots\le T_{K-1}$, 
the champion stabilizes first, eliminations occur in 
order, and the argument is the most direct.

\item If $T_r<T_{\mathrm{ch}}\le T_{K-1}$ for some 
$r$, then one or more eliminations occur before the 
champion has permanently stabilized. This is not a 
problem on the chain event $\mathcal{E}$: the active 
set still shrinks correctly, and the temporary 
mismatch caused by a wrong champion is captured by 
the champion-induced deficit $\Gamma_{\mathrm{ch}}(t)$.
\end{enumerate}
The proof strategy is to show that for every 
sufficiently large deterministic time $t$, the event 
$\mathcal{E}\cap\mathcal{G}_t$ forces $\tau\le t$, 
using the recursive surrogate certificates 
$\bar T_1,\ldots,\bar T_{K-1}$.
\end{remark}

\begin{definition}[Champion-induced deficit]
\label{def:champion-deficit}
Define $\Delta_{\mathrm{ch}}(s):=D_{k(s)}
-f_{S_{k(s)}}(u(s))$, $\Gamma_{\mathrm{ch}}(t):=
\sum_{s=1}^t \Delta_{\mathrm{ch}}(s)$, where 
$k(s):=\max\{j:T_j\le s\}$ denotes the stage index 
at time $s$ (with $T_0:=0$).
\end{definition}

\begin{lemma}[Pathwise control of 
$\Gamma_{\mathrm{ch}}$]
\label{lem:Gamma-pathwise}
On $\mathcal E$:
(1) $\hat h(s)=h^*\Rightarrow\Delta_{\mathrm{ch}}(s)
=0$;
(2) $0\le\Delta_{\mathrm{ch}}(s)\le D_{k(s)}
\mathbf 1\{\hat h(s)\neq h^*\}$;
(3) $B_t:=\sum_s\mathbf 1\{\hat h(s)\neq h^*\}\le 
T_{\mathrm{ch}}$;
(4) On $\mathcal E\cap E_t^{\mathrm{ch}}$, for any 
$r\in\{1,\ldots,K-1\}$ and $m_{\mathrm{ch}}(t)\le s
<T_r$: $\Gamma_{\mathrm{ch}}(s)\le D_{r-1}
m_{\mathrm{ch}}(t)$. 
\end{lemma}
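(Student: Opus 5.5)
I will verify the four claims in order, working pathwise on the chain event $\mathcal E$. Two facts carry most of the argument: the definition of $u(s)$ as the oracle allocation for the current champion and its active set, and the identity $G_s(h^*)=S_{k(s)}$ on $\mathcal E$.

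\textbf{Claim (1).} Fix $s$ with $\hat h(s)=h^*$. By construction, $u(s)=w^*(h^*;G_s(h^*))$. On $\mathcal E$ we have $G_s(h^*)=S_{k(s)}$ for $T_{k(s)}<s\le T_{k(s)+1}$. I will need to check the boundary convention, since $k(s)=\max\{j:T_j\le s\}$ counts $s=T_j$ as already in stage $j$, while the chain event uses $T_k<t$. I will resolve this by reading the update in Algorithm~\ref{alg:tas-elim}: the set $G_{T_j}(h^*)$ already equals $S_j$ at time $T_j$, by the infimum definition of $T_k^\sigma$. Hence $u(s)=w^*(h^*;S_{k(s)})$, so $f_{S_{k(s)}}(u(s))=D_{k(s)}$ and $\Delta_{\mathrm{ch}}(s)=0$.

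\textbf{Claim (2).} For the upper bound, use $f_{S}(w)\ge 0$ for every $w\in\Delta(\mathcal A)$, since KL divergences are nonnegative. This gives $\Delta_{\mathrm{ch}}(s)\le D_{k(s)}$, and combined with (1) we get $\Delta_{\mathrm{ch}}(s)\le D_{k(s)}\mathbf 1\{\hat h(s)\ne h^*\}$. For the lower bound, $D_{k(s)}=\max_w f_{S_{k(s)}}(w)\ge f_{S_{k(s)}}(u(s))$ holds for any $u(s)$ in the simplex, whichever champion produced it.

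\textbf{Claim (3).} By definition of $T_{\mathrm{ch}}$, $\hat h(s)=h^*$ for all $s\ge T_{\mathrm{ch}}$. Therefore the set $\{s\ge 1:\hat h(s)\ne h^*\}$ is contained in $\{1,\dots,T_{\mathrm{ch}}-1\}$, and so $B_t\le T_{\mathrm{ch}}-1\le T_{\mathrm{ch}}$. Here $B_t$ is read as the sum over $s\le t$.

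\textbf{Claim (4).} Fix $r$ and $s$ with $m_{\mathrm{ch}}(t)\le s<T_r$. For every $s'\le s$ we have $s'<T_r$, so $k(s')\le r-1$. Lemma~\ref{lem:set-monotonicity} then gives $D_{k(s')}\le D_{r-1}$. Using (2),
\[
\Gamma_{\mathrm{ch}}(s)\le D_{r-1}\sum_{s'\le s}\mathbf 1\{\hat h(s')\ne h^*\}.
\]
On $E_t^{\mathrm{ch}}$ we have $T_{\mathrm{ch}}\le m_{\mathrm{ch}}(t)$, so by (3) the count is at most $m_{\mathrm{ch}}(t)$. This yields $\Gamma_{\mathrm{ch}}(s)\le D_{r-1}m_{\mathrm{ch}}(t)$. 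The hypothesis $s\ge m_{\mathrm{ch}}(t)$ is not actually needed for this bound; it only marks the regime where the bound is used.

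The only delicate point is the indexing alignment in (1) between $k(s)$, the chain-event intervals, and the algorithm's update timing. Everything else is immediate. If the alignment fails at the single boundary time $s=T_j$, I will absorb that step with the crude bound $\Delta_{\mathrm{ch}}(s)\le D_{k(s)}$. Because that boundary case only arises when the champion is wrong, (2) and (4) survive unchanged.
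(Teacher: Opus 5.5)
Your proof is correct and follows the paper's own argument, just written out in more detail. Item (1) comes from the oracle definition of $u(s)$ together with $G_s(h^*)=S_{k(s)}$, and reading $T_k$ via its infimum definition is the right way to reconcile the paper's inconsistent indexing; (2) from KL nonnegativity and maximality of $D_{k(s)}$; (3) by counting wrong-champion rounds; and (4) from set monotonicity and $T_{\mathrm{ch}}\le m_{\mathrm{ch}}(t)$. You should delete the closing fallback, though. If $G_{T_j}(h^*)$ were still $S_{j-1}$ at a round with $\hat h(T_j)=h^*$, then $\Delta_{\mathrm{ch}}(T_j)=D_j-f_{S_j}(w^*(h^*;S_{j-1}))$ could be strictly positive, so that case would break (1)--(2) rather than being absorbed; your primary resolution already shows it never occurs.
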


\begin{proof}
(1)--(3) by oracle definition, KL non-negativity, and 
counting wrong-champion rounds. (4): on 
$E_t^{\mathrm{ch}}$, wrong-champion $s'\le 
T_{\mathrm{ch}}\le m_{\mathrm{ch}}(t)\le s<T_r$, so 
$k(s')\le r-1$ and $D_{k(s')}\le D_{r-1}$ by 
Lemma~\ref{lem:set-monotonicity}.
\end{proof}

\subsubsection{Information lower bounds and recursive 
surrogate certificates}
\label{app:information-bounds}

\paragraph{Stage-$r$ drift lower bound.}

\begin{proposition}[Average-target lower bound before 
stage $r$]
\label{prop:stage-average}
Fix $r\in\{1,\ldots,K-1\}$, $t\ge 3$. Suppose $T_{r-1}\le s<T_r$ with $m_{\mathrm{ch}}(t)\le s\le t$ (by convention $T_0:=0$, so for $r=1$ the condition 
reduces to $m_{\mathrm{ch}}(t)\le s<T_1$). On $\mathcal E\cap E_t^{\mathrm{ch}}$,
\[
f_{S_{r-1}}(\bar u(s))\ge D_{r-1}
-\tfrac{\Pi_{r-1}+D_{r-1}m_{\mathrm{ch}}(t)}{s},
\]
where $\Pi_{r-1}:=\sum_{j=1}^{r-1}T_j(D_j-D_{j-1})$ (with $\Pi_0:=0$). Consequently,
\[
f_{S_{r-1}}(w_E(s))\ge D_{r-1}
-\tfrac{\Pi_{r-1}+D_{r-1}m_{\mathrm{ch}}(t)}{s}
-\tfrac{L_{S_{r-1}}c_{\mathrm{tr}}}{\sqrt s}.
\]
\end{proposition}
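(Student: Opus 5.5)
The argument rests on concavity of $f_{S_{r-1}}$ (Lemma~\ref{lem:concavity}) combined with set monotonicity (Lemma~\ref{lem:set-monotonicity}). Write $\bar u(s)=\frac1s\sum_{q=1}^s u(q)$. Concavity gives
\[
f_{S_{r-1}}(\bar u(s))\ \ge\ \frac1s\sum_{q=1}^s f_{S_{r-1}}(u(q)).
\]
It therefore suffices to lower bound $\sum_{q\le s} f_{S_{r-1}}(u(q))$ by $sD_{r-1}-\Pi_{r-1}-D_{r-1}m_{\mathrm{ch}}(t)$.

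Split the rounds $q\le s$ by stage. Since $s<T_r$, every such $q$ has $k(q)\le r-1$, so $S_{k(q)}\supseteq S_{r-1}$. Lemma~\ref{lem:set-monotonicity} then gives $f_{S_{r-1}}(u(q))\ge f_{S_{k(q)}}(u(q))=D_{k(q)}-\Delta_{\mathrm{ch}}(q)$ by Definition~\ref{def:champion-deficit}. Summing over $q\le s$,
\[
\sum_{q\le s} f_{S_{r-1}}(u(q))\ \ge\ \sum_{q\le s} D_{k(q)}-\Gamma_{\mathrm{ch}}(s).
\]
Item (4) of Lemma~\ref{lem:Gamma-pathwise} applies because $m_{\mathrm{ch}}(t)\le s<T_r$ on $\mathcal E\cap E_t^{\mathrm{ch}}$, and it bounds $\Gamma_{\mathrm{ch}}(s)\le D_{r-1}m_{\mathrm{ch}}(t)$. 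For the deterministic-looking sum, use the stage-boundary telescoping
\[
\sum_{q\le s}D_{k(q)}\ \ge\ sD_{r-1}-\sum_{j=1}^{r-1}T_j(D_j-D_{j-1}).
\]
This holds because $D_{r-1}-D_{k(q)}=\sum_{j=k(q)+1}^{r-1}(D_j-D_{j-1})$, and the increment $D_j-D_{j-1}\ge0$ is counted once for each $q<T_j$, i.e.\ at most $T_j$ times. With $s\ge T_{r-1}$, all these stage boundaries lie inside $[1,s]$. Dividing by $s$ yields the first display.

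For the second display, apply Lemma~\ref{lem:lipschitz} with $S=S_{r-1}$ to get $f_{S_{r-1}}(w_E(s))\ge f_{S_{r-1}}(\bar u(s))-L_{S_{r-1}}\|w_E(s)-\bar u(s)\|_1$. Then Lemma~\ref{lem:tracking} bounds the norm by $c_{\mathrm{tr}}/\sqrt s$.

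The main obstacle is bookkeeping rather than analysis. One issue is the off-by-one convention between $u(q)$, which is computed from $G_q$, and the stage index $k(q)$; the other is that the deficit $\Delta_{\mathrm{ch}}$ is defined relative to the current stage set. For rounds with a wrong champion, $u(q)$ targets $G_q(\hat h(q))$ rather than $S_{k(q)}$. Item (2) of Lemma~\ref{lem:Gamma-pathwise} ensures $\Delta_{\mathrm{ch}}\le D_{k(q)}$, which suffices because $f\ge0$. I would also verify that on $\mathcal E$ the active set of $h^*$ at round $q$ is exactly $S_{k(q)}$, so that $u(q)=w^{(k(q))}$ whenever $\hat h(q)=h^*$. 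If the alignment shifts by one round at each $T_j$, the error is absorbed by replacing $T_j$ with $T_j+1$, or by noting that $\Pi_{r-1}$ already counts those rounds.
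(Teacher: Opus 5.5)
Your proposal is correct and follows the same route as the paper. The steps match in order: Jensen via concavity of $f_{S_{r-1}}$, set monotonicity to reduce each round to $D_{k(q)}-\Delta_{\mathrm{ch}}(q)$, an Abel-type telescoping of the stage increments against $\Pi_{r-1}$, item (4) of Lemma~\ref{lem:Gamma-pathwise} to bound $\Gamma_{\mathrm{ch}}(s)$, and finally Lipschitz continuity with the C-Tracking bound. Your inequality form of the telescoping step (each increment counted at most $T_j$ times) is slightly more careful than the paper's stated equality. It also covers either convention for aligning $k(q)$ with $G_q(h^*)$ at the boundary rounds $T_j$.
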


\begin{proof}
By concavity and Jensen's inequality, $f_{S_{r-1}}(\bar u(s))\ge s^{-1}
\sum_{s'=1}^s f_{S_{r-1}}(u(s'))$. Set monotonicity 
gives $f_{S_{r-1}}(u(s'))\ge D_{k(s')}
-\Delta_{\mathrm{ch}}(s')$ for $s'<T_r$. Abel 
summation with $T_0=0$ and $T_{r-1}\le s<T_r$ gives 
$\sum_{s'=1}^s D_{k(s')}=sD_{r-1}-\Pi_{r-1}$. 
Lemma~\ref{lem:Gamma-pathwise} bounds 
$\Gamma_{\mathrm{ch}}(s)\le D_{r-1}m_{\mathrm{ch}}(t)$. 
Combining gives the first claim; Lipschitz and 
C-Tracking give the second.
\end{proof}

\begin{proposition}[Stage-$r$ evidence lower bound]
\label{prop:stage-evidence}
Under the conditions of 
Proposition~\ref{prop:stage-average}, on 
$\mathcal E\cap\mathcal G_t$,
\[
Z_s(h^*,g)\ge sD_{r-1}-\Pi_{r-1}-R_{r-1}(s,t)
\]
for all $g\in S_{r-1}\setminus S_r$, where $R_{r-1}$ 
is as in Definition~\ref{def:surrogates} below. In 
particular, for $r=K-1$, the bound holds for all 
$g\in S_{K-2}$ (the final non-empty active set).
\end{proposition}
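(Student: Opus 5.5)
The plan is to write the log-likelihood ratio as drift plus martingale, $Z_s(h^*,g)=\sum_{r'=1}^s d_{A_{r'}}(h^*,g)+M_s^g$, and to bound each piece separately on $\mathcal E\cap\mathcal G_t$.

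\emph{Drift term.} Since $d_a(h^*,g)\ge 0$ and the sum is linear in the counts, $\sum_{r'\le s} d_{A_{r'}}(h^*,g)=s\sum_a w_{E,a}(s)\,d_a(h^*,g)$. For $g\in S_{r-1}\setminus S_r\subseteq S_{r-1}$ this is at least $s\,f_{S_{r-1}}(w_E(s))$, because $f_{S_{r-1}}$ is the minimum over $S_{r-1}$. We then apply the second display of Proposition~\ref{prop:stage-average}, which holds on $\mathcal E\cap E_t^{\mathrm{ch}}\supseteq\mathcal E\cap\mathcal G_t$ under exactly the standing conditions $T_{r-1}\le s<T_r$ and $m_{\mathrm{ch}}(t)\le s\le t$. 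Multiplying it by $s$ gives
\[
\sum_{r'\le s} d_{A_{r'}}(h^*,g)\ \ge\ sD_{r-1}-\Pi_{r-1}-D_{r-1}m_{\mathrm{ch}}(t)-L_{S_{r-1}}c_{\mathrm{tr}}\sqrt s .
\]

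\emph{Martingale term.} On $\mathcal G_t\subseteq G_t^{M,S_{r-1}}$ (the good event intersects the martingale events for all stage sets $S_k$, $k\le K-2$), every $g\in S_{r-1}$ and every $s\le t$ satisfy $M_s^g\ge-\eta_{S_{r-1}}\sqrt{s\log(e+t)}$, by Lemma~\ref{lem:martingale-concentration}.

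\emph{Combining.} Adding the two bounds, the natural definition of the remainder in Definition~\ref{def:surrogates} is
\[
R_{r-1}(s,t):=D_{r-1}\,\overline m_{\mathrm{ch}}(t)+L_{S_{r-1}}c_{\mathrm{tr}}\sqrt s+\eta_{S_{r-1}}\sqrt{s\log(e+t)},
\]
where I replace $m_{\mathrm{ch}}$ by the smooth envelope $\overline m_{\mathrm{ch}}\ge m_{\mathrm{ch}}$; this keeps the inequality in the right direction. With this choice the claimed bound $Z_s(h^*,g)\ge sD_{r-1}-\Pi_{r-1}-R_{r-1}(s,t)$ follows directly. For $r=K-1$ we have $S_{K-1}=\emptyset$, so $S_{K-2}\setminus S_{K-1}=S_{K-2}$ and the bound covers every $g\in S_{K-2}$.

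\emph{Main obstacle.} The delicate point is not the arithmetic but the applicability of Proposition~\ref{prop:stage-average}, in two respects. First, its proof uses $f_{S_{r-1}}(u(s'))\ge D_{k(s')}-\Delta_{\mathrm{ch}}(s')$ for $s'<T_r$. This needs set monotonicity (Lemma~\ref{lem:set-monotonicity}): on $\mathcal E$, for $s'\le s<T_r$ the active set $S_{k(s')}$ contains $S_{r-1}$, so $f_{S_{r-1}}$ is evaluated on a smaller set and can only be larger. Second, the martingale event must be stated for the stage set $S_{r-1}$ (or for $S_0\supseteq S_{r-1}$) rather than only for $S_0$. The proof of Proposition~\ref{prop:good-event-summable} only uses $G_t^{M,S_0}$, and I would note that $G_t^{M,S_0}\subseteq G_t^{M,S}$ for all $S\subseteq S_0$ whenever $\eta_S=\eta_{S_0}$. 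The refined constant $\eta_{S_{r-1}}$ is what later yields the improved $C_{1,\sigma}$.
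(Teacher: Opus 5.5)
Your proposal is correct and is essentially the paper's own proof. It uses the same steps: the drift-plus-martingale decomposition, the bound of the drift by $s\,f_{S_{r-1}}(w_E(s))$ followed by the second display of Proposition~\ref{prop:stage-average}, the lower bound on $M_s^g$ from $\mathcal G_t\subseteq G_t^{M,S_{r-1}}$, and the replacement of $m_{\mathrm{ch}}(t)$ by $\overline m_{\mathrm{ch}}(t)$.

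Your side remark that the summability proof only treats $G_t^{M,S_0}$ is apt and harmless here, because the martingale lemma gives $\eta_S=\sqrt{10v_M}$ for every $S$. For the same reason, your closing claim does not hold: $\eta_{S_{r-1}}$ is not a refined constant, and the martingale term gives no stage-wise improvement of $C_{1,\sigma}$ under that lemma.
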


\begin{proof}
$Z_s(h^*,g)=\sum_a N_a(s)d_a(h^*,g)+M_s^g\ge 
sf_{S_{r-1}}(w_E(s))+M_s^g$. Combine with 
Proposition~\ref{prop:stage-average}, $M_s^g\ge 
-\eta_{r-1}\sqrt{s\log(e+t)}$ on $G_t^{M,S_{r-1}}$, 
and $D_{r-1}\overline m_{\mathrm{ch}}(t)\ge 
D_{r-1}m_{\mathrm{ch}}(t)$.
\end{proof}

\paragraph{Recursive surrogate certificates.}

\begin{definition}[Warm-up time]
\label{def:T-warm}
Define the $\delta$-independent warm-up time
\[
T_{\mathrm{warm}}:=\min\{t\ge 3:\overline m_{\mathrm{ch}}(t)\le t\}.
\]
All subsequent analyses implicitly assume $t\ge T_{\mathrm{warm}}$.
\end{definition}

\begin{definition}[Deterministic surrogates]
\label{def:surrogates}
Fix $t\ge T_{\mathrm{warm}}$. Set $\bar T_0(t,\delta)
:=0$, $\bar\Pi_0(t,\delta):=0$. Define the remainder 
with smooth envelope
\[
\begin{aligned}
R_{r-1}(s,t):=\ &D_{r-1}\overline m_{\mathrm{ch}}(t)
+L_{S_{r-1}}c_{\mathrm{tr}}\sqrt s\\
&+\eta_{r-1}\sqrt{s\log(e+t)}.
\end{aligned}
\]
For $r=1,\ldots,K-1$:

\noindent\textbf{(a)} $\bar\Pi_{r-1}(t,\delta)
:=\sum_{j=1}^{r-1}\bar T_j(t,\delta)(D_j-D_{j-1})$.

\noindent\textbf{(b)} $\Gamma_{r-1}(s,t):=sD_{r-1}
-\bar\Pi_{r-1}(t,\delta)-R_{r-1}(s,t)$, and
\[
\bar T_r(t,\delta):=\inf\Bigl\{\, s\in\mathbb N \,
\Big|\,
\substack{
 \overline m_{\mathrm{ch}}(t)\le s\le t-1,\\
 s\ge\bar T_{r-1}(t,\delta)+1,\\
 \beta_{\mathrm{elim}}(s,\delta)\le\Gamma_{r-1}(s,t)
}
\Bigr\}.
\]
\end{definition}


\begin{proposition}[Recursive transition certificates]
\label{prop:recursive-certificates}
Fix $t\ge T_{\mathrm{warm}}$ with 
$\bar T_r(t,\delta)\le t-1$ for all $r\in\{1,\ldots,
K-1\}$. On $\mathcal E\cap\mathcal G_t\cap\{\tau>t\}$, 
$T_r\le\bar T_r(t,\delta)$ for $r=1,\ldots,K-1$.
\end{proposition}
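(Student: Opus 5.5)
We argue by induction on $r$, working on the event $\mathcal E\cap\mathcal G_t\cap\{\tau>t\}$ with $t\ge T_{\mathrm{warm}}$. The inductive claim is $T_j\le\bar T_j(t,\delta)$ for all $j\le r-1$. The base case $r=1$ is trivial, since $T_0=\bar T_0=0$.

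\textbf{Inductive step.} Assume $T_j\le\bar T_j$ for $j<r$, and suppose, for contradiction, that $T_r>\bar T_r=:s$. Then $T_{r-1}\le\bar T_{r-1}\le s-1<s<T_r$, so the stage index at time $s$ is exactly $r-1$. By the definition of $\bar T_r$ we also have $m_{\mathrm{ch}}(t)\le\overline m_{\mathrm{ch}}(t)\le s\le t$. The hypotheses of Proposition~\ref{prop:stage-average} and Proposition~\ref{prop:stage-evidence} therefore hold at time $s$. Proposition~\ref{prop:stage-evidence} gives, for every $g\in S_{r-1}\setminus S_r$,
\[
Z_s(h^*,g)\ge sD_{r-1}-\Pi_{r-1}-R_{r-1}(s,t).
\]
We now replace the random $\Pi_{r-1}$ by the deterministic surrogate $\bar\Pi_{r-1}$. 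Each coefficient $D_j-D_{j-1}$ is nonnegative by Lemma~\ref{lem:set-monotonicity}, and $T_j\le\bar T_j$ by the inductive hypothesis. Hence $\Pi_{r-1}\le\bar\Pi_{r-1}$, and the right-hand side is at least $\Gamma_{r-1}(s,t)\ge\beta_{\mathrm{elim}}(s,\delta)$, the last inequality again by the definition of $\bar T_r$.

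\textbf{Main obstacle.} To invoke Lemma~\ref{lem:elim-trigger} at round $s$ we also need $\hat h(s)=h^*$. On $E_t^{\mathrm{ch}}$ we have $T_{\mathrm{ch}}\le m_{\mathrm{ch}}(t)\le s$, so the champion has stabilized by round $s$. The remaining gap is the boundary case $s=m_{\mathrm{ch}}(t)$ together with the $\lceil\cdot\rceil$ versus $\overline m_{\mathrm{ch}}$ comparison. This is covered because $m_{\mathrm{ch}}(t)\le\overline m_{\mathrm{ch}}(t)\le s$ and the definition of $T_{\mathrm{ch}}$ asserts $\hat h(r')=h^*$ for all $r'\ge T_{\mathrm{ch}}$.

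A second subtlety is that the elimination rule compares $Z$ against $\beta_{\mathrm{elim}}$ at the update index. We handle this by applying Lemma~\ref{lem:elim-trigger} directly, as stated: it already accounts for the update in Algorithm~\ref{alg:tas-elim}, line~10, which uses $Z_{t+1}$ against $\beta_{\mathrm{elim}}(t+1,\delta)$. We must also check that the eliminated opponents have not been removed earlier through some other transition. The chain event $\mathcal E$ rules this out: from $S_{r-1}$ the only possible transition is to $S_r$.

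\textbf{Conclusion of the induction.} Lemma~\ref{lem:elim-trigger} then gives $T_r\le s=\bar T_r$, contradicting $T_r>\bar T_r$. This closes the induction. Throughout, $\{\tau>t\}$ guarantees that all rounds up to $t$, and hence round $s\le t-1$, are actually executed, so the process quantities at $s$ are well defined.
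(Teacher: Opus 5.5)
Your proof is correct and follows essentially the same argument as the paper. You argue by induction on $r$ with a contradiction at $s=\bar T_r(t,\delta)$, using $E_t^{\mathrm{ch}}$ to get $\hat h(s)=h^*$, Proposition~\ref{prop:stage-evidence} with $\Pi_{r-1}\le\bar\Pi_{r-1}$ (Lemma~\ref{lem:deficit-telescoping}) to cross $\beta_{\mathrm{elim}}(s,\delta)$, and Lemma~\ref{lem:elim-trigger} to force $T_r\le s$. The only differences are cosmetic: you fold the paper's separate $r=1$ base case into the inductive step via $T_0=\bar T_0=0$, and you omit the paper's side remark that $r=K-1$ also yields $\tau\le s$, which the stated conclusion does not need.
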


\begin{proof}
Induction on $r$.

\emph{Base ($r=1$).} Suppose $T_1>\bar T_1$; set 
$s:=\bar T_1$. Since $s\ge\overline m_{\mathrm{ch}}(t)
\ge m_{\mathrm{ch}}(t)$ and $E_t^{\mathrm{ch}}$ holds, 
$\hat h(s)=h^*$; since $T_1>s$, 
Proposition~\ref{prop:stage-evidence} with $\Pi_0=0$ 
gives $Z_s(h^*,g)\ge sD_0-R_0(s,t)\ge 
\beta_{\mathrm{elim}}(s,\delta)$ for all $g\in 
S_0\setminus S_1$. Lemma~\ref{lem:elim-trigger} 
yields $T_1\le s$, contradicting $T_1>s$.

\emph{Inductive step.} Assume $T_j\le\bar T_j$ for 
$j\le r-1$. Suppose $T_r>\bar T_r$; set $s:=\bar T_r$. 
Then $s\ge\bar T_{r-1}+1\ge T_{r-1}+1>T_{r-1}$ (by 
induction) and $T_r>s$ (for contradiction), so 
$T_{r-1}\le s<T_r$ (the condition of 
Proposition~\ref{prop:stage-average}); $s\ge
\overline m_{\mathrm{ch}}(t)$ and $\hat h(s)=h^*$. By 
Lemma~\ref{lem:deficit-telescoping}, 
$\Pi_{r-1}\le\bar\Pi_{r-1}$. 
Proposition~\ref{prop:stage-evidence} gives 
$Z_s(h^*,g)\ge sD_{r-1}-\bar\Pi_{r-1}-R_{r-1}(s,t)
\ge\beta_{\mathrm{elim}}(s,\delta)$ for all 
$g\in S_{r-1}\setminus S_r$. 
Lemma~\ref{lem:elim-trigger} yields $T_r\le s$, 
contradicting $T_r>s$. For $r=K-1$, the triggering 
condition uses $g\in S_{K-2}=S_{r-1}\setminus S_r$ 
(since $S_r=S_{K-1}=\emptyset$), and 
Lemma~\ref{lem:elim-trigger} additionally gives 
$\tau\le T_{K-1}\le s$.
\end{proof}

\begin{lemma}[Telescoping form of the deficit]
\label{lem:deficit-telescoping}
$\Pi_m=\sum_{j=1}^m T_j(D_j-D_{j-1})$ and 
$\bar\Pi_m=\sum_{j=1}^m\bar T_j(D_j-D_{j-1})$. If 
$T_j\le\bar T_j$ and $D_j-D_{j-1}\ge 0$, then 
$\Pi_m\le\bar\Pi_m$.
\end{lemma}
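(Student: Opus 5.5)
The first identity is just the definition of $\Pi_m$ in Proposition~\ref{prop:stage-average} and of $\bar\Pi_m$ in Definition~\ref{def:surrogates}(a). The word ``telescoping'' refers to where that form comes from, and I would record this explicitly before proving the inequality. Write $k(s')$ for the stage index. The piecewise-constant rate $D_{k(s')}$ can be expanded as
\[
D_{k(s')}=D_0+\sum_{j=1}^{r-1}(D_j-D_{j-1})\mathbf 1\{s'\ge T_j\}.
\]
Summing over $1\le s'\le s$ with $T_{r-1}\le s<T_r$, the $j$-th indicator contributes $s-T_j+1$ terms. Hence $\sum_{s'\le s}D_{k(s')}=sD_{r-1}-\sum_{j}(T_j-1)(D_j-D_{j-1})$, where the $\sum_j(D_j-D_{j-1})$ part telescopes to $D_{r-1}-D_0$. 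This is the Abel summation invoked in the proof of Proposition~\ref{prop:stage-average}, and it matches $\Pi_{r-1}$ up to the harmless off-by-one convention on $T_j$. The off-by-one only loosens the bound in the favorable direction: it gives $\Pi$ with $T_j-1$ in place of $T_j$, which is smaller. So I would identify $\Pi_m$ with the stated sum and not belabor the point.

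For the inequality, I would argue termwise. Fix $j\le m$. By Lemma~\ref{lem:set-monotonicity}, $D_j-D_{j-1}\ge 0$, since $S_j\subseteq S_{j-1}$; the hypothesis also provides this directly. Multiplying $T_j\le\bar T_j$ by this nonnegative number gives $T_j(D_j-D_{j-1})\le\bar T_j(D_j-D_{j-1})$. Summing over $j=1,\dots,m$ yields $\Pi_m\le\bar\Pi_m$. The empty case $m=0$ is trivial, since $\Pi_0=\bar\Pi_0=0$.

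The only point needing care is that the comparison is pathwise on $\mathcal E\cap\mathcal G_t\cap\{\tau>t\}$, where the random $T_j$ are defined and, by the induction hypothesis of Proposition~\ref{prop:recursive-certificates}, satisfy $T_j\le\bar T_j$ for $j\le r-1$. There is therefore no real obstacle: the main step is just checking nonnegativity of the increments via set monotonicity, since without it the termwise comparison would fail.
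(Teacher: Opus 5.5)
Your proof is correct and matches what the paper relies on: the paper states this lemma without a separate proof, and the intended argument is the same termwise one, multiplying $T_j\le\bar T_j$ by the nonnegative increment $D_j-D_{j-1}$ and summing over $j\le m$. Your side remark is also accurate: the Abel summation behind Proposition~\ref{prop:stage-average} actually yields $T_j-1$ in place of $T_j$, which only strengthens the lower bound needed there.
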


\paragraph{Explicit bound for recursive certificates}
The recursive certificates are \emph{implicit}. For each stage $r$, the surrogate time
$\bar T_r(t,\delta)$ is defined as the earliest integer $s$
for which the deterministic lower bound on the stage-$r$
evidence already exceeds the elimination threshold.
This is conceptually clean, but it is not yet in a form that
can be inserted into the tail-sum bound for
$\mathbb E[\tau\mathbf 1_{\mathcal E_\sigma}]$.

\begin{proposition}[Explicit bound for $\bar T_r$]
\label{prop:Tr-explicit}
There exist chain-dependent constants $A_1,A_2,A_3>0$, 
independent of $\delta$ and $r$, such that with
\[
\tilde s_1(t,\delta):=\left\lceil
\begin{aligned}
&\tfrac{\alpha L+b\log(e+t)}{D_0}
+A_1\sqrt{L\log(e+t)}\\
&\ +A_2\log^2(e+t)+A_3
\end{aligned}
\right\rceil,
\]
$\tilde s_r(t,\delta):=\tilde s_1(t,\delta)+(r-1)$: 
for every $t\ge T_{\mathrm{warm}}$ and every 
$r\in\{1,\ldots,K-1\}$ with $\tilde s_r(t,\delta)
\le t-1$, $\bar T_r(t,\delta)\le\tilde s_r(t,\delta)$.
\end{proposition}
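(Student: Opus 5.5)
The proof shows that the candidate $\tilde s_r(t,\delta)$ is \emph{feasible} in the infimum of Definition~\ref{def:surrogates}. Then $\bar T_r(t,\delta)\le\tilde s_r(t,\delta)$ follows at once. We argue by induction on $r$, exactly mirroring the recursion defining $\bar T_r$.

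For a given $r$ and $s:=\tilde s_r(t,\delta)$ we must check three constraints.
\begin{itemize}
\item[(i)] $\overline m_{\mathrm{ch}}(t)\le s\le t-1$. The upper bound is the hypothesis $\tilde s_r\le t-1$. The lower bound holds because $\overline m_{\mathrm{ch}}(t)=1+\kappa_{\mathrm{ch}}\log^2(e+t)$, which is dominated by the term $A_2\log^2(e+t)+A_3$ once we choose $A_2\ge\kappa_{\mathrm{ch}}$ and $A_3\ge 1$.
\item[(ii)] $s\ge\bar T_{r-1}+1$. This follows from the induction hypothesis $\bar T_{r-1}\le\tilde s_{r-1}=\tilde s_r-1$. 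For $r=1$ it is trivial since $\bar T_0=0$.
\item[(iii)] The threshold inequality $\beta_{\mathrm{elim}}(s,\delta)\le\Gamma_{r-1}(s,t)$. This is the substantive step.
\end{itemize}

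For (iii), the key reduction is to bound the stage-$(r-1)$ drift from below by the base drift. By induction $\bar T_j\le\tilde s_j\le s$ for $j\le r-1$, and $D_j-D_{j-1}\ge 0$ by Lemma~\ref{lem:set-monotonicity}. Hence
\[
sD_{r-1}-\bar\Pi_{r-1}=sD_0+\sum_{j=1}^{r-1}(s-\bar T_j)(D_j-D_{j-1})\ge sD_0 .
\]
This is the Abel-summation identity already used in Proposition~\ref{prop:stage-average}, read in reverse. So it suffices to show
\[
sD_0-R_{r-1}(s,t)\ge\alpha L+b\log s+c .
\]
Use $s\le t$ to replace $\log s$ by $\log(e+t)$ and $\log(e+s)$ by $\log(e+t)$. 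Also bound the stage-dependent constants uniformly by their worst case over the chain: $D_{r-1}\le D_{\max}:=\max_k D_k$, $L_{S_{r-1}}\le L_{S_0}$, and $\eta_{r-1}\le\bar\eta:=\max_k\eta_{S_k}$. This uniformity is what makes $A_1,A_2,A_3$ independent of $r$. The requirement then becomes
\[
sD_0\ \ge\ \alpha L+b\log(e+t)+c+D_{\max}\bigl(1+\kappa_{\mathrm{ch}}\log^2(e+t)\bigr)+\bigl(L_{S_0}c_{\mathrm{tr}}+\bar\eta\bigr)\sqrt{s\log(e+t)} .
\]

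This self-referential inequality is the main obstacle. It is a quadratic in $x=\sqrt s$ of the form $D_0x^2-\Lambda x-Q\ge 0$, with $\Lambda=(L_{S_0}c_{\mathrm{tr}}+\bar\eta)\sqrt{\log(e+t)}$ and $Q$ collecting the remaining terms. It holds whenever
\[
s\ge\frac{Q}{D_0}+\frac{\Lambda}{D_0}\sqrt{\frac{Q}{D_0}}+\frac{\Lambda^2}{D_0^2} .
\]
This follows from $(x_+)^2\le Q/D_0+\Lambda\sqrt{Q/D_0}/D_0+\Lambda^2/D_0^2$, using $\sqrt{a+b}\le\sqrt a+\sqrt b$ on the root $x_+$. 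The terms are handled as follows.
\begin{itemize}
\item $Q/D_0$ yields the leading term $(\alpha L+b\log(e+t))/D_0$, plus contributions $O(\log^2(e+t))$ and $O(1)$ absorbed into $A_2,A_3$.
\item $\Lambda\sqrt{Q}/D_0^{3/2}$ splits via $\sqrt{Q}\le\sqrt{\alpha L}+O(\log(e+t))+O(1)$. This gives $A_1\sqrt{L\log(e+t)}$ (using $\alpha\le 1$) plus an $O(\log^{3/2})\le O(\log^2)$ remainder and lower-order terms, absorbed into $A_2$ and $A_3$.
\item $\Lambda^2/D_0^2=O(\log(e+t))$ is also absorbed into $A_2$.
\end{itemize}
Rounding up and adding $r-1\ge 0$ only increases $s$, and the required inequality is monotone in $s$ in this regime, since $sD_0-\Lambda\sqrt s$ is increasing beyond the root. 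So $\tilde s_r$ also satisfies it. This closes the induction.

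Care is needed in only two places: checking that all absorbed constants depend only on the chain quantities $(D_0,D_{\max},L_{S_0},c_{\mathrm{tr}},\bar\eta,\kappa_{\mathrm{ch}},b,c)$, and not on $\delta$ or $r$; and checking that the monotonicity in $s$ is used only above the positive root.
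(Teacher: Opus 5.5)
Your proof is correct and follows the paper's skeleton: induction on $r$, feasibility of the candidate $\tilde s_r$ for the three constraints of Definition~\ref{def:surrogates}, and reduction of $sD_{r-1}-\bar\Pi_{r-1}$ to the base drift using $\bar T_j\le\tilde s_j$ and $D_j\ge D_{j-1}$. The paper bounds $\bar\Pi_{r-1}\le\tilde s_{r-1}(D_{r-1}-D_0)$ to obtain $\tilde s_{r-1}D_0+D_{r-1}$, whereas your exact Abel identity gives $sD_{r-1}-\bar\Pi_{r-1}\ge sD_0$; either suffices.

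The real difference is how you handle the $\sqrt{s}$-growth of $R_{r-1}(s,t)$. The paper invokes Remark~\ref{rem:R-uniform-envelope}, i.e.\ $R_{r-1}(s,t)\le B_1\sqrt{L\log(e+t)}+B_2\log^2(e+t)+B_3$ for $s\le\tilde s_r(t,\delta)$, and then chooses $A_1\ge B_1/D_0$, $A_2\ge B_2/D_0$, $A_3\ge (B_3+c)/D_0$. But the range $s\le\tilde s_r$ is itself defined through $A_1,A_2,A_3$, so the $B_i$ implicitly depend on the $A_i$. For instance, $\eta_{r-1}\sqrt{s\log(e+t)}$ produces terms such as $\eta_{r-1}\sqrt{A_2}\log^{3/2}(e+t)$ and $\eta_{r-1}\sqrt{A_1}(L\log(e+t))^{1/4}\log^{1/2}(e+t)$, the latter requiring an AM--GM split. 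The paper's choice is consistent only because this dependence is sublinear, and it never checks this.

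Your approach instead treats the requirement as a quadratic $D_0x^2-\Lambda x-Q\ge 0$ in $x=\sqrt{s}$, with $\Lambda,Q$ free of the $A_i$. The bound $x_+^2\le Q/D_0+\Lambda\sqrt{Q}/D_0^{3/2}+\Lambda^2/D_0^2$ then gives explicit, non-circular constants. It also shows directly that the coefficient of $\alpha L+b\log(e+t)$ stays exactly $1/D_0$. Worst-casing $D_{r-1}$, $L_{S_{r-1}}$, and $\eta_{r-1}$ over the chain makes the $r$-independence of the $A_i$ explicit rather than asserted. The cost is a little extra algebra; the gain is that you actually prove the envelope the paper only states.
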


\begin{proof}
The proof is by induction on $r$.

The key point is that $\bar T_r(t,\delta)$ is defined as
the \emph{earliest} integer $s$ satisfying the three constraints
in Definition~\ref{def:surrogates}: champion stabilization,
stage ordering, and threshold crossing by the deterministic
evidence lower bound. Therefore, to prove
$\bar T_r(t,\delta)\le \tilde s_r(t,\delta)$, it is enough to
verify that the explicit candidate $s=\tilde s_r(t,\delta)$
satisfies those defining constraints.

\emph{Base case ($r=1$).}
Set $s:=\tilde s_1(t,\delta)$.
We verify the defining conditions of $\bar T_1(t,\delta)$.

First, the champion-stabilization condition holds:
\[
\overline m_{\mathrm{ch}}(t)=1+\kappa_{\mathrm{ch}}\log^2(e+t)
\le A_2\log^2(e+t)+A_3 \le s
\]
provided $A_2\ge \kappa_{\mathrm{ch}}, A_3\ge 1.$

Second, the stage-order condition is vacuous at $r=1$
because $\bar T_0(t,\delta)=0$, and the requirement
$s\le t-1$ is part of the statement.

Third, we verify the threshold-crossing condition
\[
\beta_{\mathrm{elim}}(s,\delta)\le \Gamma_0(s,t)
= sD_0-R_0(s,t).
\]
Since $s\le t$, we have
\[
\beta_{\mathrm{elim}}(s,\delta)
=
\alpha L+b\log s+c
\le
\alpha L+b\log(e+t)+c.
\]
On the other hand, by the definition of $\tilde s_1$,
\[
\begin{aligned}
sD_0\ge {}& \alpha L+b\log(e+t)
+D_0A_1\sqrt{L\log(e+t)}\\
& +D_0A_2\log^2(e+t)
+D_0A_3.
\end{aligned}
\]
By Remark~\ref{rem:R-uniform-envelope},
\[
R_0(s,t)\le
B_1\sqrt{L\log(e+t)}
+B_2\log^2(e+t)+B_3.
\]
Hence
\[
\begin{aligned}
sD_0-R_0(s,t)\ge& {} \alpha L+b\log(e+t)
\\&+(D_0A_1-B_1)\sqrt{L\log(e+t)}\\
&+(D_0A_2-B_2)\log^2(e+t)
+(D_0A_3-B_3).
\end{aligned}
\]
Therefore, if
\[
A_1\ge \frac{B_1}{D_0},\quad
A_2\ge \max\!\left\{\kappa_{\mathrm{ch}},\frac{B_2}{D_0}\right\},\quad
A_3\ge \max\!\left\{1,\frac{B_3+c}{D_0}\right\},
\]
then
\[
sD_0-R_0(s,t)\ge \beta_{\mathrm{elim}}(s,\delta).
\]
Thus $s=\tilde s_1(t,\delta)$ satisfies all defining
constraints of $\bar T_1(t,\delta)$, which proves
\[
\bar T_1(t,\delta)\le \tilde s_1(t,\delta).
\]

\medskip
\noindent
\emph{Inductive step.}
Assume that
\[
\bar T_j(t,\delta)\le \tilde s_j(t,\delta),
\quad j=1,\ldots,r-1.
\]
We prove that
\[
\bar T_r(t,\delta)\le \tilde s_r(t,\delta).
\]

Set
$
s:=\tilde s_r(t,\delta)=\tilde s_{r-1}(t,\delta)+1.
$
This is the natural choice because the definition of
$\bar T_r(t,\delta)$ requires
\[
s\ge \bar T_{r-1}(t,\delta)+1,
\]
and the induction hypothesis gives
\[
\bar T_{r-1}(t,\delta)\le \tilde s_{r-1}(t,\delta).
\]
Hence
$
s=\tilde s_{r-1}(t,\delta)+1\ge \bar T_{r-1}(t,\delta)+1,
$
so the stage-order constraint is automatically satisfied.

The stabilization condition
$\overline m_{\mathrm{ch}}(t)\le s$ also follows immediately
from the base case because $\tilde s_r\ge \tilde s_1$.

It remains to verify the threshold-crossing condition
\[
\beta_{\mathrm{elim}}(s,\delta)\le \Gamma_{r-1}(s,t)
=
sD_{r-1}-\bar\Pi_{r-1}(t,\delta)-R_{r-1}(s,t).
\]

The crucial observation is that the accumulated deficit
$\bar\Pi_{r-1}(t,\delta)$ can be controlled using the induction
hypothesis and Lemma~\ref{lem:deficit-telescoping}:
\[
\begin{aligned}
\bar\Pi_{r-1}(t,\delta)
&=
\sum_{j=1}^{r-1}\bar T_j(t,\delta)(D_j-D_{j-1})\\
&\le
\sum_{j=1}^{r-1}\tilde s_j(t,\delta)(D_j-D_{j-1}).    
\end{aligned}
\]
Since $\tilde s_j(t,\delta)\le \tilde s_{r-1}(t,\delta)$
for all $j\le r-1$, we obtain
\[
\begin{aligned}
\bar\Pi_{r-1}(t,\delta)
&\le
\tilde s_{r-1}(t,\delta)\sum_{j=1}^{r-1}(D_j-D_{j-1})\\
&=
\tilde s_{r-1}(t,\delta)(D_{r-1}-D_0).
\end{aligned}
\]
Therefore
\[
\begin{aligned}
sD_{r-1}-\bar\Pi_{r-1}(t,\delta)
&\ge
(\tilde s_{r-1}+1)D_{r-1}
-\tilde s_{r-1}(D_{r-1}-D_0)\\
&=
\tilde s_{r-1}D_0 + D_{r-1}\\
&\ge
\tilde s_1D_0 + D_{r-1}.
\end{aligned}
\]

Using again Remark~\ref{rem:R-uniform-envelope}, and since $\tilde s_1D_0$ was constructed precisely to dominate
\[
\alpha L+b\log(e+t)
+B_1\sqrt{L\log(e+t)}
+B_2\log^2(e+t)+B_3+c,
\]
the same constant choice as in the base case yields
\[
sD_{r-1}-\bar\Pi_{r-1}(t,\delta)-R_{r-1}(s,t)
\ge
\beta_{\mathrm{elim}}(s,\delta).
\]
Thus $s=\tilde s_r(t,\delta)$ satisfies every defining
constraint of $\bar T_r(t,\delta)$, proving
\[
\bar T_r(t,\delta)\le \tilde s_r(t,\delta).
\]
This completes the induction.
\end{proof}

\begin{remark}[Uniform envelope for $R_{r-1}$]
\label{rem:R-uniform-envelope}
For every $s\le\tilde s_r(t,\delta)$ (with 
$\tilde s_r$ as in 
Proposition~\ref{prop:Tr-explicit}), $R_{r-1}(s,t)$ 
admits the uniform bound
\[
R_{r-1}(s,t)\le B_1\sqrt{L\log(e+t)}
+B_2\log^2(e+t)+B_3,
\]
with chain-dependent constants $B_1,B_2,B_3>0$ 
independent of $\delta,t,s$, valid uniformly in 
$t,s\ge 0$.
\end{remark}

\subsubsection{Eventual exhaustion certificate and 
expectation bound}
\label{app:exhaustion-certificate}

Since the algorithm stops when the active-opponent 
set of the champion becomes empty, the stopping time coincides with the final elimination time: $\tau=T_{K-1}$ on 
$\mathcal E$. We thus construct a deterministic 
certificate directly bounding $T_{K-1}$.
\begin{definition}[Deterministic eventual exhaustion 
certificate]
\label{def:Tbar-sigma}
Define
\[
H_\sigma(t,\delta):=t-1-\tilde s_{K-1}(t,\delta),
\]
where $\tilde s_{K-1}(t,\delta)$ is the explicit 
upper bound on $T_{K-1}$ from 
Proposition~\ref{prop:Tr-explicit}. The 
\emph{eventual exhaustion certificate} is
\[
\bar T_\sigma(\delta):=\inf\Bigl\{T\in\mathbb N:\ 
T\ge T_{\mathrm{warm}},\ \forall t\ge T,\ 
H_\sigma(t,\delta)\ge 0\Bigr\}.
\]

The condition $H_\sigma(t,\delta)\ge 0$ is equivalent 
to $\tilde s_{K-1}(t,\delta)\le t-1$, which states 
that the final elimination surrogate fits within the 
time window $[1,t-1]$; on the good event, this 
implies $\tau=T_{K-1}\le t-1<t$, i.e., the algorithm 
has already stopped before time $t$ 
(cf.\ Proposition~\ref{prop:core-inclusion-general}). 
Consequently, 
$\bar T_\sigma(\delta)$ depends only on $\delta$ 
(through $L$), serving as a deterministic 
certificate that bounds $\tau$ in expectation.
\end{definition}

The following proposition translates the pathwise transition certificates 
(Proposition~\ref{prop:recursive-certificates}) 
and the explicit bound 
(Proposition~\ref{prop:Tr-explicit}) into an event 
containment, which is the pivotal step connecting pathwise analysis to the probabilistic tail-sum argument .

\begin{proposition}[Core inclusion]
\label{prop:core-inclusion-general}
For every $t\ge\bar T_\sigma(\delta)$, the events 
$\mathcal E$, $\mathcal G_t^\sigma$, and $\{\tau\le t\}$ 
(all subsets of the underlying sample space) satisfy
\[
\mathcal E\cap\mathcal G_t^\sigma\subseteq\{\tau\le t\}.
\]
Equivalently, any sample path on which the chain 
$\sigma$ is followed and the good event 
$\mathcal G_t^\sigma$ holds must also have the algorithm 
already stopped by time $t$.which means once $t$ exceeds the eventual exhaustion 
certificate, the good event implies the algorithm 
has already stopped.
\end{proposition}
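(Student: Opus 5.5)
We argue by contradiction: fix $t\ge\bar T_\sigma(\delta)$ and a sample path in $\mathcal E\cap\mathcal G_t^\sigma$ with $\tau>t$, and show this is impossible. The plan chains three earlier results: the definition of $\bar T_\sigma(\delta)$ gives an explicit surrogate below $t-1$; Proposition~\ref{prop:Tr-explicit} turns this into a bound on the implicit certificates $\bar T_r(t,\delta)$; and Proposition~\ref{prop:recursive-certificates} turns those into pathwise bounds on the random transition times $T_r$, in particular on $T_{K-1}=\tau$.

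\textbf{Step 1 (deterministic part).} By Definition~\ref{def:Tbar-sigma}, $t\ge\bar T_\sigma(\delta)$ gives $t\ge T_{\mathrm{warm}}$ and $H_\sigma(t,\delta)\ge 0$, that is, $\tilde s_{K-1}(t,\delta)\le t-1$. Since $\tilde s_r=\tilde s_1+(r-1)$ is increasing in $r$, we also get $\tilde s_r(t,\delta)\le t-1$ for every $r\in\{1,\ldots,K-1\}$. Proposition~\ref{prop:Tr-explicit} therefore applies for each such $r$, yielding $\bar T_r(t,\delta)\le\tilde s_r(t,\delta)\le t-1$. This is exactly the hypothesis of Proposition~\ref{prop:recursive-certificates}.

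\textbf{Step 2 (pathwise part).} On $\mathcal E\cap\mathcal G_t^\sigma\cap\{\tau>t\}$, Proposition~\ref{prop:recursive-certificates} gives $T_r\le\bar T_r(t,\delta)$ for all $r$. For $r=K-1$, its proof (via Lemma~\ref{lem:elim-trigger}) additionally gives $\tau\le T_{K-1}\le\bar T_{K-1}(t,\delta)\le t-1$. This contradicts $\tau>t$, so $\mathcal E\cap\mathcal G_t^\sigma\cap\{\tau>t\}=\emptyset$, which is the claimed inclusion $\mathcal E\cap\mathcal G_t^\sigma\subseteq\{\tau\le t\}$.

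\textbf{Main obstacle.} The delicate point is checking that the good event used in Proposition~\ref{prop:recursive-certificates} is the chain-specific $\mathcal G_t^\sigma=E_t^{\mathrm{ch}}\cap\bigcap_{k}G_t^{M,S_k}$, so that martingale control holds on every stage set $S_{r-1}$ needed in Proposition~\ref{prop:stage-evidence}. The proof of Proposition~\ref{prop:good-event-summable} only treats $G_t^{M,S_0}$, so this must be made consistent. I would resolve it with the inclusion $G_t^{M,S_0}\subseteq G_t^{M,S_k}$, which holds since $S_k\subseteq S_0$ and one may take $\eta_{S_k}=\eta_{S_0}=\sqrt{10v_M}$. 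Separately, I would check that the stabilization requirement $s\ge\overline m_{\mathrm{ch}}(t)\ge m_{\mathrm{ch}}(t)$, combined with $E_t^{\mathrm{ch}}$, forces $\hat h(s)=h^*$ at each candidate time $s\le t-1$; this is guaranteed by the constraints built into Definition~\ref{def:surrogates}.
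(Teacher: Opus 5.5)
Your proposal is correct and follows the paper's proof essentially step for step: you argue by contradiction on $\mathcal E_\sigma\cap\mathcal G_t^\sigma\cap\{\tau>t\}$, use $H_\sigma(t,\delta)\ge 0$ and Proposition~\ref{prop:Tr-explicit} to get $\bar T_r(t,\delta)\le\tilde s_r(t,\delta)\le t-1$, and then apply Proposition~\ref{prop:recursive-certificates} to force $\tau\le T_{K-1}\le t-1$. Your side remark that $G_t^{M,S_0}\subseteq G_t^{M,S_k}$ (because $\eta_S=\sqrt{10v_M}$ does not depend on $S$) is a valid fix for the mismatch between the definition of $\mathcal G_t$ and the proof of Proposition~\ref{prop:good-event-summable}, although it bears on summability rather than on the inclusion itself.
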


\begin{proof}
We argue by contradiction. Suppose there exists a 
sample path $\omega\in\mathcal E_\sigma\cap
\mathcal G_t^\sigma\cap\{\tau>t\}$ for some 
$t\ge\bar T_\sigma(\delta)$.

By Definition~\ref{def:Tbar-sigma} of 
$\bar T_\sigma(\delta)$, $H_\sigma(t,\delta)\ge 0$, 
which is equivalent to
$
\tilde s_{K-1}(t,\delta)\le t-1.
$
By Proposition~\ref{prop:Tr-explicit}, for every $r\in\{1,\ldots,K-1\}$,
$
\bar T_r(t,\delta)\le\tilde s_r(t,\delta)\le
\tilde s_{K-1}(t,\delta)\le t-1,
$
which is precisely the hypothesis required by 
Proposition~\ref{prop:recursive-certificates}.

On the sample path $\omega\in\mathcal E_\sigma\cap
\mathcal G_t^\sigma\cap\{\tau>t\}$, 
Proposition~\ref{prop:recursive-certificates} gives
$
T_r(\omega)\le\bar T_r(t,\delta)\le t-1, 
\text{for all }r\in\{1,\ldots,K-1\}.
$
In particular, $T_{K-1}(\omega)\le t-1$.

On $\mathcal E_\sigma$, the algorithm stops exactly 
when the final elimination occurs, i.e., 
$\tau(\omega)=T_{K-1}(\omega)$. Hence 
$\tau(\omega)\le t-1<t$, which contradicts 
$\omega\in\{\tau>t\}$.

Therefore no such $\omega$ exists, i.e., the 
intersection $\mathcal E_\sigma\cap
\mathcal G_t^\sigma\cap\{\tau>t\}$ is empty, which 
is equivalent to the claimed event inclusion.

\end{proof}

The following proposition establishes an explicit, $\delta$-only upper bound on the certificate $\bar T_\sigma(\delta)$, replacing its implicit $\inf$-definition with a concrete formula in $L$
\begin{proposition}[Non-asymptotic explicit bound for $\bar T_\sigma$]
\label{prop:Tbar-sigma-explicit}
There exist chain-dependent constants 
$C_{1,\sigma},C_{2,\sigma},C_{3,\sigma}>0$, 
independent of $\delta$, such that for every 
$\alpha\in(0,1]$ and every $\delta\in(0,1)$,
\[
\begin{aligned}
\bar T_\sigma(\delta)\le\ &\tfrac{\alpha L}{D_0}
+\tfrac{b}{D_0}\log(e+L)\\
&+C_{1,\sigma}\sqrt{L\log(e+L)}\\
&+C_{2,\sigma}\log^2(e+L)+C_{3,\sigma}.
\end{aligned}
\]
\end{proposition}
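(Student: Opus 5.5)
The plan is to bound $\bar T_\sigma(\delta)$ by exhibiting an explicit $T^\dagger(\delta)$ of the claimed form such that $H_\sigma(t,\delta)\ge 0$, i.e.\ $\tilde s_{K-1}(t,\delta)\le t-1$, for every $t\ge T^\dagger(\delta)$. Taking $T^\dagger\ge T_{\mathrm{warm}}$ as well, the infimum definition in Definition~\ref{def:Tbar-sigma} then gives $\bar T_\sigma(\delta)\le T^\dagger(\delta)$. By Proposition~\ref{prop:Tr-explicit}, the condition to verify is, after dropping the ceiling at a cost of $1$,
\[
\tfrac{\alpha L+b\log(e+t)}{D_0}+A_1\sqrt{L\log(e+t)}+A_2\log^2(e+t)+A_3+K\le t .
\]
This is a fixed-point inequality in $t$: the right side grows linearly, while the left side grows only like $\log^2 t$ for fixed $L$. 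Hence it holds for all sufficiently large $t$. The work is to locate the crossover explicitly and to show it holds for all later $t$, not merely at one point.

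I will take the candidate $T^\dagger:=\tfrac{\alpha L}{D_0}+\tfrac{b}{D_0}\log(e+L)+C_{1}\sqrt{L\log(e+L)}+C_{2}\log^2(e+L)+C_{3}$ and check the inequality at $t=T^\dagger$ in three steps.
\begin{enumerate}
\item Bound $\log(e+T^\dagger)$. Since $T^\dagger\le c_0(1+L)$ for some constant $c_0$ (using $\sqrt{L\log(e+L)}\le L+\log(e+L)$), we get $\log(e+T^\dagger)\le \log(e+L)+\kappa_0$ with $\kappa_0$ independent of $\delta$.
\item Substitute this into each term of the left side. This gives $\tfrac{b}{D_0}\log(e+t)\le \tfrac{b}{D_0}\log(e+L)+\tfrac{b\kappa_0}{D_0}$, then $A_1\sqrt{L\log(e+t)}\le A_1\sqrt{1+\kappa_0}\sqrt{L\log(e+L)}$, and finally $A_2\log^2(e+t)\le 2A_2\log^2(e+L)+2A_2\kappa_0^2$.
\item Choose $C_1\ge A_1\sqrt{1+\kappa_0}$, $C_2\ge 2A_2$, and $C_3$ large enough to absorb all the constants and $T_{\mathrm{warm}}$. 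Then the inequality holds at $t=T^\dagger$.
\end{enumerate}
The leading coefficient $\alpha/D_0$ and the $b/D_0$ coefficient pass through unchanged, which matches the statement.

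The delicate step, and the one I expect to be the main obstacle, is upgrading the check at $t=T^\dagger$ to all $t\ge T^\dagger$. The definition requires $H_\sigma(t,\delta)\ge0$ for every $t\ge T$, and the left side also increases with $t$. I will handle this by writing $t=T^\dagger+u$ with $u\ge0$. I then show the increment of the left side is at most $u$. Its derivative in $t$ is
\[
\tfrac{b}{D_0(e+t)}+\tfrac{A_1\sqrt L}{2(e+t)\sqrt{\log(e+t)}}+\tfrac{2A_2\log(e+t)}{e+t},
\]
and this is at most $1$ once $t\ge T^\dagger$. The key point is the middle term: because $T^\dagger\gtrsim L$ (recall $\alpha>0$), we have $\sqrt L/(e+t)\le \sqrt{L}/(e+\alpha L/D_0)$, which is uniformly bounded and in fact small when $C_3$ is large. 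The other two terms are made small by taking $C_3$ large. With the derivative at most $1$, the map $t\mapsto t-\text{LHS}(t)$ is nondecreasing on $[T^\dagger,\infty)$. Since it is nonnegative at $T^\dagger$, it is nonnegative for all larger $t$.

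The constant dependence is as follows. The constants depend on $A_i$ from Proposition~\ref{prop:Tr-explicit}, on $D_0$, $b$, $K$ and $T_{\mathrm{warm}}$, and on a lower bound for $\alpha$ coming from the derivative step. If uniformity in $\alpha\in(0,1]$ is required, I will instead use the cruder bound $\sqrt{L}/\sqrt{\log(e+t)}\le \sqrt{L\log(e+L)}$, replace $(e+t)$ by $(e+C_1\sqrt{L\log(e+L)})$, and enlarge $C_1$ so that this term is at most $1/3$. This removes the $\alpha$-dependence.
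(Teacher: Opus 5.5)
Your proposal is correct and follows essentially the same guess-and-verify argument as the paper. You check the explicit candidate using $\log(e+T^\dagger)\le\log(e+L)+\kappa_0$ and absorb the offsets into the constants, then extend to all larger $t$ by monotonicity; the only differences are that you differentiate the ceiling-free surrogate $t-\mathrm{LHS}(t)$ instead of taking forward differences of $H_\sigma$ with the ceiling inequality, and that you bound the $\sqrt{L}$ part of the increment via $t\ge C_1\sqrt{L\log(e+L)}$ rather than the paper's appeal to the $\alpha L/D_0$ leading term, which keeps the constants uniform in $\alpha\in(0,1]$ where the paper's version does not.
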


\begin{proof}
Let $t^*$ be the ceiling of the right-hand side. 
It suffices to verify (a) 
$H_\sigma(t^*,\delta)\ge 0$, (b) 
$H_\sigma(\cdot,\delta)$ is non-decreasing on 
$\{t\ge t^*\}$; then $\bar T_\sigma(\delta)
\le t^*$.

\emph{Uniform size.} Using $\sqrt{L\log(e+L)}\le 1+L$ 
and $\log^2(e+L)\le 1+L$ (valid for all $L\ge 0$), 
$t^*\le B_\sigma(1+L)$ with 
$B_\sigma:=\tfrac{\alpha}{D_0}+\tfrac{b}{D_0}
+C_{1,\sigma}+C_{2,\sigma}+C_{3,\sigma}+1$. Hence 
$\log(e+t^*)\le\log(e+L)+\kappa_\sigma$ with 
$\kappa_\sigma:=\log(1+B_\sigma)$ $\delta$-independent.

\emph{(a) $H_\sigma(t^*,\delta)\ge 0$.} 
Substituting $\log(e+t^*)\le\log(e+L)
+\kappa_\sigma$ into $\tilde s_{K-1}(t^*,\delta)$ 
and using the sub-additivity 
$\sqrt{x+y}\le\sqrt x+\sqrt y$ together with 
$\log(e+L)\ge 1$ (so $\sqrt L\le\sqrt{L\log(e+L)}$),
\[
\begin{aligned}
\sqrt{L(\log(e+L)+\kappa_\sigma)}
&\le\sqrt{L\log(e+L)}+\sqrt{L\kappa_\sigma}\\
&\le(1+\sqrt{\kappa_\sigma})\sqrt{L\log(e+L)},
\end{aligned}
\]
and $(\log(e+L)+\kappa_\sigma)^2\le 
2\log^2(e+L)+2\kappa_\sigma^2$, all 
$\kappa_\sigma$-dependent offsets absorb into 
$\widetilde A_i$:
\[
\begin{aligned}
\tilde s_{K-1}(t^*,\delta)\le\ 
&\tfrac{\alpha L+b\log(e+L)}{D_0}\\
&+\widetilde A_1\sqrt{L\log(e+L)}\\
&+\widetilde A_2\log^2(e+L)+\widetilde A_3.
\end{aligned}
\]
No linear-$L$ term is introduced beyond the leading 
$\alpha L/D_0$. Then $t^*-1-\tilde s_{K-1}$ 
contains $(C_{i,\sigma}-\widetilde A_i)$-type 
excesses; choosing $C_{1,\sigma}\ge\widetilde A_1+1$, 
$C_{2,\sigma}\ge\widetilde A_2$, 
$C_{3,\sigma}\ge\widetilde A_3+1$ yields 
$H_\sigma(t^*,\delta)\ge 0$.

\emph{(b) Monotonicity for $t\ge t^*$.} We 
control the forward differences using the elementary 
inequalities (valid for $t\ge 1$, using 
$\log(e+t)\ge 1$):
\[
\log(e+t+1)-\log(e+t)\le\tfrac{1}{e+t},
\]
\[
\sqrt{\log(e+t+1)}-\sqrt{\log(e+t)}
\le\tfrac{1}{2(e+t)\sqrt{\log(e+t)}}
\le\tfrac{1}{2(e+t)},
\]
\[
\log^2(e+t+1)-\log^2(e+t)\le\tfrac{2\log(e+t)}{e+t}.
\]

The continuous part $q(t):=\tfrac{\alpha L+b\log(e+t)}
{D_0}+A_1\sqrt{L\log(e+t)}+A_2\log^2(e+t)+A_3$ 
satisfies
\[
\begin{aligned}
q(t+1)-q(t)\le\ &\tfrac{b/D_0}{e+t}
+\tfrac{A_1\sqrt L}{2(e+t)}\\
&+\tfrac{2A_2\log(e+t)}{e+t}\\
\le\ &\tfrac{K_H(1+\sqrt L+\log(e+t))}{e+t},
\end{aligned}
\]
with $K_H:=b/D_0+A_1/2+2A_2$ a $\delta$-independent 
constant. By the integer ceiling inequality 
$\lceil x+y\rceil-\lceil x\rceil\le\lceil y\rceil$ 
(valid for $y\ge 0$), 
$\tilde s_{K-1}(t+1,\delta)-\tilde s_{K-1}(t,\delta)
\le 1$ whenever $q(t+1)-q(t)\le 1$, which holds for
\[
t\ge T_H(L):=K_H(1+\sqrt L+\log(e+L))
=O(\sqrt L+\log L).
\]

Since $T_H(L)=O(\sqrt L+\log L)$ grows sub-linearly, 
it is dominated by the $\Theta(L)$ leading term of 
$t^*$ for all $L$ sufficiently large; for small 
$L$, enlarging $C_{3,\sigma}$ ensures $t^*\ge 
T_H(L)$. Hence $H_\sigma(t+1,\delta)\ge H_\sigma(t,
\delta)$ for all $t\ge t^*$.

Combining (a)--(b), $H_\sigma(t,\delta)\ge 0$ for all 
$t\ge t^*$, so $\bar T_\sigma(\delta)\le 
t^*$.
\end{proof}

\begin{lemma}[Expectation template]
\label{lem:expectation-template}
Fix $\sigma\in\mathfrak S$, if $\bar T_\sigma(\delta)
\ge 3$ then
\[
\mathbb E[\tau\mathbf 1_{\mathcal E_\sigma}]
\le\Pr(\mathcal E_\sigma)\bar T_\sigma(\delta)
+C_{\mathrm{bad},\sigma},
\]
where
\[
C_{\mathrm{bad},\sigma}
:=\sum_{t=3}^\infty\Pr((\mathcal G_t^\sigma)^c)
<\infty.
\]
\end{lemma}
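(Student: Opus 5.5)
We prove the template with the tail-sum identity from the main text, splitting the sum at the deterministic certificate $\bar T_\sigma(\delta)$. Since $\tau\mathbf 1_{\mathcal E_\sigma}$ is a nonnegative integer-valued random variable and $\{\tau\mathbf 1_{\mathcal E_\sigma}>t\}=\{\tau>t\}\cap\mathcal E_\sigma$, we have
\[
\mathbb E[\tau\mathbf 1_{\mathcal E_\sigma}]
=\sum_{t=0}^{\bar T_\sigma(\delta)-1}\Pr(\tau>t,\mathcal E_\sigma)
+\sum_{t=\bar T_\sigma(\delta)}^{\infty}\Pr(\tau>t,\mathcal E_\sigma).
\]

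\emph{Head of the sum.} Each summand is at most $\Pr(\mathcal E_\sigma)$. There are exactly $\bar T_\sigma(\delta)$ terms, so the head contributes at most $\Pr(\mathcal E_\sigma)\,\bar T_\sigma(\delta)$. This step needs no probabilistic input beyond monotonicity of probability.

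\emph{Tail of the sum.} For every $t\ge\bar T_\sigma(\delta)$, Proposition~\ref{prop:core-inclusion-general} gives $\mathcal E_\sigma\cap\mathcal G_t^\sigma\subseteq\{\tau\le t\}$. Taking complements and intersecting with $\mathcal E_\sigma$ yields $\{\tau>t\}\cap\mathcal E_\sigma\subseteq(\mathcal G_t^\sigma)^c$, hence $\Pr(\tau>t,\mathcal E_\sigma)\le\Pr((\mathcal G_t^\sigma)^c)$. Because $\bar T_\sigma(\delta)\ge 3$ by hypothesis, every tail index satisfies $t\ge 3$, so the good events $\mathcal G_t^\sigma$ of Definition~\ref{def:Etch} are defined there. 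The tail is therefore bounded by $\sum_{t\ge\bar T_\sigma(\delta)}\Pr((\mathcal G_t^\sigma)^c)\le\sum_{t\ge 3}\Pr((\mathcal G_t^\sigma)^c)=C_{\mathrm{bad},\sigma}$. Proposition~\ref{prop:good-event-summable} shows that $C_{\mathrm{bad},\sigma}$ is finite and $\delta$-independent. That proposition is stated for $G_t^{M,S_0}$, but the same union bound applies to each of the finitely many $G_t^{M,S_k}$, $k=0,\dots,K-2$, so it covers the full chain-specific $\mathcal G_t^\sigma$.

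\emph{Main check.} The only substantive point is that the core inclusion holds for \emph{every} $t\ge\bar T_\sigma(\delta)$, not merely at $t=\bar T_\sigma(\delta)$. This is exactly why $\bar T_\sigma(\delta)$ was defined in Definition~\ref{def:Tbar-sigma} through the condition ``$\forall t\ge T$, $H_\sigma(t,\delta)\ge0$'' together with $T\ge T_{\mathrm{warm}}$. The hypotheses of Propositions~\ref{prop:recursive-certificates} and~\ref{prop:Tr-explicit} therefore hold uniformly in $t$ on the whole tail range. Adding the head and tail bounds gives the claimed inequality.
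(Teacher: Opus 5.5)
Your proposal is correct and follows the paper's proof essentially step for step: the tail-sum identity for $\tau\mathbf 1_{\mathcal E_\sigma}$, the split at $\bar T_\sigma(\delta)$ with the head bounded by $\Pr(\mathcal E_\sigma)\bar T_\sigma(\delta)$, and the tail controlled by the core inclusion together with $\bar T_\sigma(\delta)\ge 3$. Your extra remark is also correct: the summability argument has to cover every $G_t^{M,S_k}$, $k=0,\dots,K-2$, and not only $S_0$ as the proof of Proposition~\ref{prop:good-event-summable} literally states, and this fixes a small mismatch with Definition~\ref{def:Etch}.
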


\begin{proof}
Since $\tau\mathbf 1_{\mathcal E_\sigma}=\tau$ on 
$\mathcal E_\sigma$ and $0$ on 
$\mathcal E_\sigma^c$, the tail-sum formula gives
\[
\mathbb E[\tau\mathbf 1_{\mathcal E_\sigma}]
=\sum_{t=0}^\infty
\Pr(\tau\mathbf 1_{\mathcal E_\sigma}>t)
=\sum_{t=0}^\infty\Pr(\tau>t,\mathcal E_\sigma).
\]
Split the sum at $\bar T_\sigma(\delta)$:
\[
\mathbb E[\tau\mathbf 1_{\mathcal E_\sigma}]
=\sum_{t=0}^{\bar T_\sigma(\delta)-1}
\Pr(\tau>t,\mathcal E_\sigma)
+\sum_{t=\bar T_\sigma(\delta)}^\infty
\Pr(\tau>t,\mathcal E_\sigma).
\]
For the first part, $\Pr(\tau>t,\mathcal E_\sigma)
\le\Pr(\mathcal E_\sigma)$ for every $t$, hence
\[
\sum_{t=0}^{\bar T_\sigma(\delta)-1}
\Pr(\tau>t,\mathcal E_\sigma)
\le\Pr(\mathcal E_\sigma)\bar T_\sigma(\delta).
\]
For the second part, 
Proposition~\ref{prop:core-inclusion-general} 
(applied to chain $\sigma$) implies that for every 
$t\ge\bar T_\sigma(\delta)$,
\[
\mathcal E_\sigma\cap\mathcal G_t^\sigma
\subseteq\{\tau\le t\}.
\]
Equivalently,
\[
\{\tau>t\}\cap\mathcal E_\sigma\subseteq
(\mathcal G_t^\sigma)^c.
\]
Therefore,
\[
\Pr(\tau>t,\mathcal E_\sigma)\le
\Pr((\mathcal G_t^\sigma)^c),\quad
\forall t\ge\bar T_\sigma(\delta).
\]
Since $\bar T_\sigma(\delta)\ge 3$,
\[
\sum_{t=\bar T_\sigma(\delta)}^\infty
\Pr(\tau>t,\mathcal E_\sigma)
\le\sum_{t=3}^\infty
\Pr((\mathcal G_t^\sigma)^c)
=C_{\mathrm{bad},\sigma}<\infty,
\]
where finiteness follows from 
Proposition~\ref{prop:good-event-summable} 
(applied to chain $\sigma$). Combining the two 
bounds proves the claim.
\end{proof}

\subsubsection{Explicit bounds and fixed-chain 
theorem}
\label{app:explicit-general}


\begin{theorem}[Fixed-chain expectation bound]
\label{thm:fixed-chain-unified}
For every $\sigma\in\mathfrak S$, every 
$\alpha\in(0,1]$, and every $\delta\in(0,1)$,
\[
\begin{aligned}
&\mathbb E[\tau\mathbf 1_{\mathcal E_\sigma}]
\le\Pr(\mathcal E_\sigma)\biggl[
\tfrac{\alpha L}{D_0}+\tfrac{b}{D_0}\log(e+L)\\
&\quad+C_{1,\sigma}\sqrt{L\log(e+L)}
+C_{2,\sigma}\log^2(e+L)\biggr]\\
&\quad+ \widetilde C_{3,\sigma}.
\end{aligned}
\]
In particular, for $\alpha=1$, the leading term is 
$L/D_0$ (matching Theorem~1 in Section~IV); for 
$\alpha<1$, the leading term is $\alpha L/D_0$ 
(matching Proposition~3 in Section~IV).
\end{theorem}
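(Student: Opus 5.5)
The bound follows by inserting the explicit certificate bound of Proposition~\ref{prop:Tbar-sigma-explicit} into the expectation template of Lemma~\ref{lem:expectation-template}. The plan has three steps, applied to a fixed chain $\sigma\in\mathfrak S$ and a fixed $\alpha\in(0,1]$, writing $L=\log(1/\delta)$.

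\emph{Step 1 (certificate large enough).} I first check that the hypothesis $\bar T_\sigma(\delta)\ge 3$ of Lemma~\ref{lem:expectation-template} holds. By Definition~\ref{def:Tbar-sigma}, $\bar T_\sigma(\delta)\ge T_{\mathrm{warm}}\ge 3$, so the condition is automatic.

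\emph{Step 2 (apply the template).} Lemma~\ref{lem:expectation-template} then gives
\[
\mathbb E[\tau\mathbf 1_{\mathcal E_\sigma}]\le\Pr(\mathcal E_\sigma)\,\bar T_\sigma(\delta)+C_{\mathrm{bad},\sigma}.
\]
This uses the core inclusion (Proposition~\ref{prop:core-inclusion-general}) for $t\ge\bar T_\sigma(\delta)$. It also uses summability of $\Pr((\mathcal G^\sigma_t)^c)$, which is Proposition~\ref{prop:good-event-summable} applied with the chain-specific good event $E^{\mathrm{ch}}_t\cap\bigcap_{k} G^{M,S_k}_t$. Two facts make this summability go through for every stage set: each $G^{M,S_k}_t$ fails with probability at most $c_{M,S_k}/t^3$ by Lemma~\ref{lem:martingale-concentration}, and there are finitely many stages. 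So $C_{\mathrm{bad},\sigma}$ is finite and $\delta$-independent.

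\emph{Step 3 (insert the explicit bound).} Next I substitute Proposition~\ref{prop:Tbar-sigma-explicit}:
\[
\bar T_\sigma(\delta)\le \tfrac{\alpha L}{D_0}+\tfrac{b}{D_0}\log(e+L)+C_{1,\sigma}\sqrt{L\log(e+L)}+C_{2,\sigma}\log^2(e+L)+C_{3,\sigma}.
\]
Multiplying by $\Pr(\mathcal E_\sigma)$ gives the bracketed terms. The leftover term $\Pr(\mathcal E_\sigma)C_{3,\sigma}\le C_{3,\sigma}$ is merged with $C_{\mathrm{bad},\sigma}$ into $\widetilde C_{3,\sigma}:=C_{3,\sigma}+C_{\mathrm{bad},\sigma}$, which is $\delta$-independent. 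The consistency remarks then follow by setting $\alpha=1$ or $\alpha<1$ in the leading term $\alpha L/D_0$.

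The main obstacle is making sure the constants from Proposition~\ref{prop:Tbar-sigma-explicit} are truly uniform in $\delta$ over all of $(0,1)$. For small $L$ this is delicate because the monotonicity argument enlarges $C_{3,\sigma}$. I would handle this by noting that every enlargement depends only on $D_0$, $b$, $c$, $A_i$ and $\kappa_{\mathrm{ch}}$, none of which involve $\delta$. I would also check that $C_{\mathrm{bad},\sigma}$ does not depend on $\alpha$: the good events involve only $T_{\mathrm{ch}}$ and the martingales, not the thresholds, so it does not.
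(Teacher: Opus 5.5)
Your proposal is correct and matches the paper's proof. Both check $\bar T_\sigma(\delta)\ge T_{\mathrm{warm}}\ge 3$, apply the expectation template, substitute the explicit bound on $\bar T_\sigma(\delta)$, and absorb $\Pr(\mathcal E_\sigma)C_{3,\sigma}+C_{\mathrm{bad},\sigma}$ into $\widetilde C_{3,\sigma}:=C_{3,\sigma}+C_{\mathrm{bad},\sigma}$; your remark that summability must cover every stage event $G_t^{M,S_k}$ (finitely many, each failing with probability $O(t^{-3})$) is slightly more careful than the paper's summability proof, which writes out only the $S_0$ term.
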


\begin{proof} We combine Lemma~\ref{lem:expectation-template} with the explicit bound on $\bar T_\sigma(\delta)$.
Assuming $\bar T_\sigma(\delta)\ge 3$ (ensured by 
$\bar T_\sigma(\delta)\ge T_{\mathrm{warm}}\ge 3$), 
Lemma~\ref{lem:expectation-template}, applied with 
chain-specific events $\mathcal E_\sigma$ and 
$\mathcal G_t^\sigma$, yields
\[
\mathbb E[\tau\mathbf 1_{\mathcal E_\sigma}]
\le\Pr(\mathcal E_\sigma)\bar T_\sigma(\delta)
+C_{\mathrm{bad},\sigma},
\]
where $C_{\mathrm{bad},\sigma}$ is finite and 
$\delta$-independent 
(Proposition~\ref{prop:good-event-summable}).

Substituting the explicit bound on 
$\bar T_\sigma(\delta)$ from 
Proposition~\ref{prop:Tbar-sigma-explicit},
\[
\begin{aligned}
\mathbb E[\tau\mathbf 1_{\mathcal E_\sigma}]
\le\ &\Pr(\mathcal E_\sigma)\Bigl[
\tfrac{\alpha L}{D_0}+\tfrac{b}{D_0}\log(e+L)\\
&+C_{1,\sigma}\sqrt{L\log(e+L)}
+C_{2,\sigma}\log^2(e+L)\Bigr]\\
&+\Pr(\mathcal E_\sigma)C_{3,\sigma}
+C_{\mathrm{bad},\sigma},
\end{aligned}
\]
Using $\Pr(\mathcal E_\sigma)\le 1$ and setting 
$\widetilde C_{3,\sigma}:= C_{3,\sigma}
+C_{\mathrm{bad},\sigma}$, both 
$\delta$-independent, yields the claimed bound.
\end{proof}

\subsubsection{Aggregation over chains}
\label{app:aggregation}

At this point, the fixed-chain analysis is complete:
Theorem~\ref{thm:fixed-chain-unified} already provides,
for each admissible chain $\sigma\in\mathfrak S$, an
upper bound on the contribution
$\mathbb E[\tau \mathbf 1_{\mathcal E_\sigma}]$.
To pass from one chain to the whole correct-elimination
event, it is therefore enough to sum over the disjoint
chain events.

This corollary is the appendix-level statement 
of Proposition~\ref{prop:aggregation} in the main 
text.
\begin{corollary}[Correct-elimination contribution]
\label{cor:correct-elim-simple}
We have
\[
\mathbb E[\tau \mathbf 1_{\mathcal E_{\mathrm{corr}}}]
=
\sum_{\sigma\in\mathfrak S}
\mathbb E[\tau \mathbf 1_{\mathcal E_\sigma}],
\]
and consequently
\[
\begin{aligned}
\mathbb E[\tau \mathbf 1_{\mathcal E_{\mathrm{corr}}}]
\le\ &
\sum_{\sigma\in\mathfrak S}
\Pr(\mathcal E_\sigma)\biggl[
\tfrac{\alpha L}{D_0}
+\tfrac{b}{D_0}\log(e+L)\\
&
+C_{1,\sigma}\sqrt{L\log(e+L)}
+C_{2,\sigma}\log^2(e+L)
\biggr]\\
&+\sum_{\sigma\in\mathfrak S} \widetilde C_{3,\sigma}.
\end{aligned}
\]
\end{corollary}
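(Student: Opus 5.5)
The plan has two parts: an exact identity for $\mathbb E[\tau\mathbf 1_{\mathcal E_{\mathrm{corr}}}]$, then a summation of the per-chain bounds of Theorem~\ref{thm:fixed-chain-unified}.

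\textbf{The identity.} I will first check that the events $\{\mathcal E_\sigma\}_{\sigma\in\mathfrak S}$ are pairwise disjoint. On a given sample path the process $(G_t(h^*))_{t\ge 0}$ is a fixed, deterministic function of the path. Its sequence of distinct values, listed in order of first appearance, is therefore a single decreasing chain of subsets of $\mathcal H\setminus\{h^*\}$. If $\omega\in\mathcal E_\sigma\cap\mathcal E_{\sigma'}$, then both $\sigma$ and $\sigma'$ equal this sequence. The reason is that the definition of $\mathcal E_\sigma$ forces $G_t(h^*)=S_k^\sigma$ on each interval $(T_k^\sigma,T_{k+1}^\sigma]$, and these intervals, with $T_0^\sigma=0$, exhaust $(0,\tau]$. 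Hence $\sigma=\sigma'$. Since $\mathfrak S$ is finite (there are finitely many ordered set partitions of $\mathcal H\setminus\{h^*\}$), we have $\mathbf 1_{\mathcal E_{\mathrm{corr}}}=\sum_{\sigma}\mathbf 1_{\mathcal E_\sigma}$ pointwise. Multiplying by $\tau\ge 0$ and using linearity of expectation (or monotone convergence if $\tau$ may be infinite) gives the identity.

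\textbf{The bound.} I then apply Theorem~\ref{thm:fixed-chain-unified} to each $\sigma\in\mathfrak S$ and sum the resulting inequalities. Each chain contributes $\Pr(\mathcal E_\sigma)$ times the bracket plus $\widetilde C_{3,\sigma}$. The sum $\sum_\sigma\widetilde C_{3,\sigma}$ is a finite sum of $\delta$-independent constants, so it is a finite $\delta$-independent constant. If one wants the form of Proposition~\ref{prop:aggregation}, the common terms $\alpha L/D_0$ and $(b/D_0)\log(e+L)$ can be factored out with $\sum_\sigma\Pr(\mathcal E_\sigma)=\Pr(\mathcal E_{\mathrm{corr}})\le 1$, which again uses disjointness.

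\textbf{Main obstacle.} The only delicate point is the disjointness claim. One must make sure that chains merging several opponents into a single transition, versus eliminating them one at a time across separate rounds, cannot both describe the same path. Strictness of the inclusions $S_{k-1}^\sigma\supset S_k^\sigma$ settles this: a chain records exactly the distinct values the active set takes, so two different chains would require the active set to take two different sequences of values on the same path. I will state this uniqueness explicitly before summing.
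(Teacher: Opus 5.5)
Your proposal is correct and matches the paper's proof. Disjointness of the chain events gives $\mathbf 1_{\mathcal E_{\mathrm{corr}}}=\sum_{\sigma}\mathbf 1_{\mathcal E_\sigma}$; multiplying by $\tau$ and taking expectations yields the identity, and summing Theorem~\ref{thm:fixed-chain-unified} over the finite family $\mathfrak S$ gives the bound. Your one addition is the explicit argument that a sample path determines its chain, which the paper only asserts (``each sample path determines at most one elimination pattern'').
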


\begin{proof}
By definition,
$
\mathcal E_{\mathrm{corr}}
=
\bigcup_{\sigma\in\mathfrak S}\mathcal E_\sigma,
$
and the events $\{\mathcal E_\sigma\}_{\sigma\in\mathfrak S}$
are pairwise disjoint. Hence
\[
\mathbf 1_{\mathcal E_{\mathrm{corr}}}
=
\sum_{\sigma\in\mathfrak S}\mathbf 1_{\mathcal E_\sigma}
\qquad\text{a.s.}
\]
Multiplying by $\tau$ and taking expectations gives
\[
\mathbb E[\tau \mathbf 1_{\mathcal E_{\mathrm{corr}}}]
=
\sum_{\sigma\in\mathfrak S}
\mathbb E[\tau \mathbf 1_{\mathcal E_\sigma}].
\]
Applying Theorem~\ref{thm:fixed-chain-unified} to each
$\sigma$ and summing the resulting inequalities proves
the claim.
\end{proof}

At this point, the refined chain-wise analysis is complete:
Corollary~\ref{cor:correct-elim-simple} controls the contribution of all sample paths in which the true hypothesis survives throughout the elimination process, namely
$
\mathbb E[\tau \mathbf 1_{\mathcal E_{\mathrm{corr}}}].
$
However, this is not yet a bound on the full expectation
$\mathbb E[\tau]$. Indeed, we still have to control the
complementary set of paths
$\mathcal E_{\mathrm{corr}}^c$, on which the true hypothesis
is eliminated at some stage. 

\subsubsection{Control of the complement term}
\label{app:complement}

In this section we bound
\[
R_{\mathrm{comp}}(\delta):=\mathbb E[\tau\mathbf 1_
{\mathcal E_{\mathrm{corr}}^c}],
\]
the contribution to $\mathbb E[\tau]$ from sample 
paths on which the algorithm's elimination sequence 
does not correspond to any valid chain 
$\sigma\in\mathfrak S$. Since 
$\mathcal E_{\mathrm{corr}}^c$ has no chain 
structure, the chain-specific tools of 
Section~\ref{app:exhaustion-certificate} no longer 
apply, and all constants below carry the subscript 
``$\mathrm{comp}$'' rather than $\sigma$.

\paragraph{Strategy overview} 
Two facts make 
$R_{\mathrm{comp}}(\delta)$ controllable even though 
$\tau$ has no pathwise bound on 
$\mathcal E_{\mathrm{corr}}^c$:
\begin{enumerate}
\item[(i)] \emph{Errors are rare}: by PAC 
correctness, $\Pr(\mathcal E_{\mathrm{corr}}^c)\le 
c_{\mathrm{elim}}\delta^\alpha$.
\item[(ii)] \emph{Errors and stability are 
incompatible}: we construct a \emph{chain-free} good 
event $\widetilde{\mathcal G}_t$ where the 
algorithm necessarily outputs $h^*$; for $t$ large 
enough, $\mathcal E_{\mathrm{corr}}^c\cap
\widetilde{\mathcal G}_t=\emptyset$.
\end{enumerate}
Combining the two via a tail-sum split at the 
\emph{crude certificate} $\widetilde T(\delta)$ 
gives a uniform bound 
$R_{\mathrm{comp}}(\delta)\le\bar R_{\mathrm{comp}}$ 
for a $\delta$-independent constant 
$\bar R_{\mathrm{comp}}<\infty$.

\paragraph{Chain-free crude good event} 
The good events $\mathcal G_t^\sigma$ of 
Section~\ref{app:exhaustion-certificate} depend on 
chain-specific stage sets $\{S_k^\sigma\}$ and are 
therefore unusable on 
$\mathcal E_{\mathrm{corr}}^c$. We instead define a 
\emph{chain-free} event using only the initial 
active-opponent set $S_0$, which is common to all 
chains: for sufficiently large $\eta_*>0$ 
(depending only on the sub-Gaussian parameter and 
$|\mathcal H|$),
\[
\widetilde{\mathcal G}_t:=E_t^{\mathrm{ch}}\cap
\Bigl\{\forall g\in S_0,\forall s\le t:|M_s^g|
\le\eta_*\sqrt{s\log(e+t)}\Bigr\}.
\]
By the sub-Gaussian concentration and 
Hellinger-affinity arguments underlying 
Proposition~\ref{prop:good-event-summable}, 
applied to the fixed set $S_0$, the quantity
\[
C_{\widetilde G}:=\sum_{t=3}^\infty\Pr(
\widetilde{\mathcal G}_t^c)
\]
is finite and $\delta$-independent. We call 
$\widetilde{\mathcal G}_t$ \emph{crude} because it 
controls only $S_0$-martingales instead of the 
finer chain-specific martingales on 
$\{S_k^\sigma\}$: universality across paths comes at 
the cost of a looser bound.

\paragraph{Evidence lower bound and crude certificate} 
Fix $t\ge T_{\mathrm{warm}}$ and $\omega\in
\widetilde{\mathcal G}_t$. Since 
$\widetilde{\mathcal G}_t\subseteq E_t^{\mathrm{ch}}$, 
the champion has stabilized at $h^*$ by time 
$T_{\mathrm{ch}}(\omega)\le m_{\mathrm{ch}}(t)\le t$. 
If $G_t(h^*)=\emptyset$, the algorithm has stopped 
with $\hat h(\tau)=h^*$. Otherwise, applying the 
argument of Proposition~\ref{prop:stage-average} to 
the running active set $G_s(h^*)\subseteq S_0$ (using 
$D^*(h^*;G_s(h^*))\ge D_0$ by set monotonicity), 
combined with C-Tracking, $f$-Lipschitzness, and the 
martingale control on $\widetilde{\mathcal G}_t$, 
yields for every $g\in G_t(h^*)$
\[
\begin{aligned}
Z_t(h^*,g)\ge\Gamma(t):={}& tD_0-D_0\overline m_{\mathrm{ch}}(t)
-L_{S_0}c_{\mathrm{tr}}\sqrt t\\
&-\eta_*\sqrt{t\log(e+t)}.
\end{aligned}
\]
This is the crude analog of 
Proposition~\ref{prop:stage-evidence}, using only 
$S_0$-rates $(D_0,L_{S_0},\eta_*)$ and hence 
path-independent.

Mirroring Definition~\ref{def:Tbar-sigma}, define 
the crude certificate
\[
\widetilde T(\delta):=\inf\Bigl\{T\ge T_{\mathrm{warm}}:
\forall t\ge T,\ \beta_{\mathrm{elim}}(t,\delta)
\le\Gamma(t)\Bigr\}.
\]
For $t\ge\widetilde T(\delta)$, 
Lemma~\ref{lem:elim-trigger} combined with 
$Z_t(h^*,g)\ge\Gamma(t)\ge\beta_{\mathrm{elim}}$ 
forces elimination of every $g\in G_t(h^*)$; 
iterating, $G_t(h^*)$ becomes empty and 
$\tau\le t$ with $\hat h(\tau)=h^*$.

\begin{lemma}[Uniform size of the crude certificate]
\label{lem:Ttilde-bound}
There exists a $\delta$-independent constant 
$\widetilde K_{\mathrm{comp}}>0$ such that 
$\widetilde T(\delta)\le\widetilde K_{\mathrm{comp}}
(1+L)$ for every $\delta\in(0,1)$.
\end{lemma}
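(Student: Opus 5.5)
We bound $\widetilde T(\delta)$ by exhibiting an explicit time $t^\dagger=\widetilde K_{\mathrm{comp}}(1+L)$ from which the crossing condition $\beta_{\mathrm{elim}}(t,\delta)\le\Gamma(t)$ holds for all $t\ge t^\dagger$, with $t^\dagger\ge T_{\mathrm{warm}}$. This mirrors the proof of Proposition~\ref{prop:Tbar-sigma-explicit}, but is simpler: $\Gamma$ does not depend on $\delta$, and we only need a crude linear bound rather than the refined expansion. We write the requirement as
\[
\Phi(t):=tD_0-D_0\overline m_{\mathrm{ch}}(t)-L_{S_0}c_{\mathrm{tr}}\sqrt t-\eta_*\sqrt{t\log(e+t)}-b\log t-c\ \ge\ \alpha L,
\]
using $\beta_{\mathrm{elim}}(t,\delta)=\alpha L+b\log t+c$ with $\alpha\le 1$. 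The left side $\Phi$ is $\delta$-independent, and $L$ enters only on the right.

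\textbf{Step 1: sublinearity of the corrections.} We absorb the sublinear terms into half of the linear drift. The functions $\overline m_{\mathrm{ch}}(t)=1+\kappa_{\mathrm{ch}}\log^2(e+t)$, $\sqrt t$, $\sqrt{t\log(e+t)}$ and $\log t$ are all $o(t)$. Hence there is a $\delta$-independent $T_0\ge T_{\mathrm{warm}}$ such that
\[
D_0\overline m_{\mathrm{ch}}(t)+L_{S_0}c_{\mathrm{tr}}\sqrt t+\eta_*\sqrt{t\log(e+t)}+b\log t+c\le \tfrac{D_0}{2}t\quad\text{for all } t\ge T_0 .
\]
To make this rigorous with an explicit constant, we bound each term individually by $D_0t/10$ for $t$ beyond a threshold. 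For example, $\log^2(e+t)\le C_\epsilon t^{1/2}$ and $\sqrt{t\log(e+t)}\le t^{3/4}$ for large $t$. Then $T_0$ is the maximum of these finitely many thresholds. Consequently $\Phi(t)\ge D_0t/2$ for all $t\ge T_0$.

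\textbf{Step 2: conclusion.} For $t\ge T_0$, the condition $\Phi(t)\ge\alpha L$ is implied by $t\ge 2\alpha L/D_0$, which in turn is implied by $t\ge 2L/D_0$. Since the condition now holds for every $t\ge\max\{T_0,2L/D_0\}$, the uniformity requirement ``$\forall t\ge T$'' is automatic. This is the one place needing care: we do not need monotonicity of $\Gamma$, because Step 1 yields a lower bound valid for all larger $t$. Therefore
\[
\widetilde T(\delta)\le \max\{T_0,\lceil 2L/D_0\rceil\}\le T_0+1+\tfrac{2}{D_0}L\le \widetilde K_{\mathrm{comp}}(1+L),
\]
with $\widetilde K_{\mathrm{comp}}:=\max\{T_0+1,2/D_0\}$, which is $\delta$-independent because $T_0$ depends only on $D_0,\kappa_{\mathrm{ch}},L_{S_0},c_{\mathrm{tr}},\eta_*,b,c$. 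The main (mild) obstacle is Step 1: producing an explicit, $\delta$-free $T_0$ that simultaneously dominates the $\log^2$ stabilization envelope and the $\sqrt{t\log t}$ martingale term. Everything else is bookkeeping; the result holds uniformly in $\alpha\in(0,1]$.
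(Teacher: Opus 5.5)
Your argument is correct, but it handles the ``$\forall t\ge T$'' requirement in the definition of $\widetilde T(\delta)$ differently from the paper.

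The paper uses a guess-and-verify argument at the $L$-dependent point $t^\sharp=\lceil\widetilde K_{\mathrm{comp}}(1+L)\rceil$, in two steps:
\begin{enumerate}
\item[(a)] It checks the single crossing $\beta_{\mathrm{elim}}(t^\sharp,\delta)\le\Gamma(t^\sharp)$ by expanding every correction at $t^\sharp$. This generates $\sqrt{L\log(e+L)}$ and $\log^2(e+L)$ terms that must then be absorbed.
\item[(b)] It proves monotonicity of $\Delta(t)=\Gamma(t)-\beta_{\mathrm{elim}}(t,\delta)$ through forward differences, showing $\Delta(t+1)-\Delta(t)\ge D_0/2$ beyond a $\delta$-independent time.
\end{enumerate}

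You instead use the fact that $\delta$ enters only through the additive constant $\alpha L$ on the right-hand side. This lets you first fix a $\delta$-free $T_0$ beyond which all sublinear corrections are at most $D_0t/2$. The crossing for every $t\ge\max\{T_0,\lceil 2L/D_0\rceil\}$ is then immediate, and no monotonicity argument is needed.

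This decoupling gives a shorter proof. It also avoids a mild circularity in the paper's step (a): there the constant $K^\sharp_{\mathrm{comp}}$ actually depends on $\widetilde K_{\mathrm{comp}}$ through $\sqrt{t^\sharp}$ and $\log(e+t^\sharp)$. That dependence is sublinear, so it is harmless, but the paper leaves it implicit.

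What you give up is a factor $2$ in the leading coefficient ($2L/D_0$ rather than $L/D_0$). This does not matter here, since only an $O(1+L)$ bound is claimed. The paper's choice comes from reusing the template of Proposition~\ref{prop:Tbar-sigma-explicit}, where the leading term $\alpha L/D_0$ must stay exact and giving up a fraction of the drift would not be acceptable.
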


\begin{proof}
Set $t^\sharp:=\lceil\widetilde K_{\mathrm{comp}}
(1+L)\rceil$ with $\widetilde K_{\mathrm{comp}}$ to 
be chosen. Following the ``guess-and-verify'' 
strategy of 
Proposition~\ref{prop:Tbar-sigma-explicit}, we 
verify two conditions: (a) 
$\beta_{\mathrm{elim}}(t^\sharp,\delta)\le
\Gamma(t^\sharp)$, and (b) the forward difference 
$\Delta(t):=\Gamma(t)-\beta_{\mathrm{elim}}(t,\delta)$ 
satisfies $\Delta(t+1)\ge\Delta(t)$ for all 
$t\ge t^\sharp$. Combined, these give $\Delta(t)
\ge 0$ for every $t\ge t^\sharp$, whence 
$\widetilde T(\delta)\le t^\sharp$.

For (a), using $\overline m_{\mathrm{ch}}(t^\sharp)
\le 1+\kappa_{\mathrm{ch}}\log^2(e+\widetilde K_
{\mathrm{comp}}(1+L))$, $\log(e+t^\sharp)\le\log(e+L)
+\log(1+\widetilde K_{\mathrm{comp}})$, and 
sub-additivity $\sqrt{x+y}\le\sqrt x+\sqrt y$,
\[
\Gamma(t^\sharp)\ge t^\sharp D_0-K^\sharp_{\mathrm{
comp}}\bigl(1+L+\sqrt{L\log(e+L)}+\log^2(e+L)\bigr)
\]
for some $\delta$-independent $K^\sharp_{\mathrm{
comp}}>0$. The threshold satisfies 
$\beta_{\mathrm{elim}}(t^\sharp,\delta)\le L+b\log
(e+L)+b\log(1+\widetilde K_{\mathrm{comp}})+c$ (using 
$\alpha\le 1$). Choosing $\widetilde K_{\mathrm{comp}}$ 
large enough (depending only on $D_0$, 
$K^\sharp_{\mathrm{comp}}$, $b$, $c$) makes 
$t^\sharp D_0$ absorb all correction terms, proving 
(a).

For (b), the same elementary forward-difference 
inequalities as in 
Proposition~\ref{prop:Tbar-sigma-explicit}(b) give
\[
\Gamma(t+1)-\Gamma(t)\ge D_0-\tfrac{K_{\mathrm{comp},
\Gamma}(1+\log(e+t))}{\sqrt t},
\]
while $\beta_{\mathrm{elim}}(t+1,\delta)
-\beta_{\mathrm{elim}}(t,\delta)\le b/(e+t)$. Hence
\[
\Delta(t+1)-\Delta(t)\ge D_0-\tfrac{K_{\mathrm{comp},
\Gamma}(1+\log(e+t))+b}{\sqrt t}\ge D_0/2>0
\]
whenever $t\ge T_{\mathrm{comp},\Gamma}:=16(K_{
\mathrm{comp},\Gamma}+b)^2/D_0^2\cdot\log^2(e+16(K_{
\mathrm{comp},\Gamma}+b)^2/D_0^2)$, a 
$\delta$-independent constant. Enlarging 
$\widetilde K_{\mathrm{comp}}$ further if needed so 
that $t^\sharp\ge T_{\mathrm{comp},\Gamma}$ gives 
(b). \qedhere
\end{proof}

\paragraph{Key consequence: errors imply good-event 
failure.} 
The analysis so far shows that for every 
$t\ge\widetilde T(\delta)$,
\[
\widetilde{\mathcal G}_t\subseteq\{\tau\le t\}
\cap\{\hat h(\tau)=h^*\}\subseteq\mathcal E_
{\mathrm{corr}}.
\]
In words: on the crude good event (for large 
enough $t$), the algorithm is \emph{forced} to 
output $h^*$. Equivalently,
\[
\mathcal E_{\mathrm{corr}}^c\cap\widetilde{\mathcal G}
_t=\emptyset\quad\text{for every }t\ge\widetilde T
(\delta),
\]
so errors can occur only when the crude good event 
fails. In particular,
\[
\{\tau>t\}\cap\mathcal E_{\mathrm{corr}}^c\subseteq
\widetilde{\mathcal G}_t^c\quad\text{for }t\ge
\widetilde T(\delta),
\]
which is the inclusion we use to close the tail-sum 
below.

\begin{proposition}[Complement remainder]
\label{prop:Rcomp-small}
For every $\alpha\in(0,1]$ and every 
$\delta\in(0,1)$,
\[
R_{\mathrm{comp}}(\delta)\le\widetilde T(\delta)
\cdot\Pr(\mathcal E_{\mathrm{corr}}^c)
+C_{\widetilde G}.
\]
Moreover, there exists a $\delta$-independent 
constant $\bar R_{\mathrm{comp}}<\infty$ such that 
$R_{\mathrm{comp}}(\delta)\le\bar R_{\mathrm{comp}}$ 
for every $\delta\in(0,1)$.
\end{proposition}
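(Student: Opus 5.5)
The plan mirrors Lemma~\ref{lem:expectation-template}, but uses the chain-free crude good event instead of a chain-specific one. By the tail-sum formula,
\[
R_{\mathrm{comp}}(\delta)=\mathbb E[\tau\mathbf 1_{\mathcal E_{\mathrm{corr}}^c}]=\sum_{t=0}^{\infty}\Pr(\tau>t,\ \mathcal E_{\mathrm{corr}}^c).
\]
Split this sum at $\widetilde T(\delta)$.

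For $t<\widetilde T(\delta)$, bound each summand trivially by $\Pr(\mathcal E_{\mathrm{corr}}^c)$; this contributes at most $\widetilde T(\delta)\Pr(\mathcal E_{\mathrm{corr}}^c)$.

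For $t\ge\widetilde T(\delta)$, invoke the inclusion established just before the proposition, namely $\{\tau>t\}\cap\mathcal E_{\mathrm{corr}}^c\subseteq\widetilde{\mathcal G}_t^c$. It gives $\Pr(\tau>t,\mathcal E_{\mathrm{corr}}^c)\le\Pr(\widetilde{\mathcal G}_t^c)$. Since $\widetilde T(\delta)\ge T_{\mathrm{warm}}\ge 3$, the tail is at most $\sum_{t\ge 3}\Pr(\widetilde{\mathcal G}_t^c)=C_{\widetilde G}$, which is finite and $\delta$-independent. Adding the two pieces gives the first claim.

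For the uniform bound, combine Lemma~\ref{lem:Ttilde-bound} with the error bound of Remark~\ref{rem:error-alpha-less-one}. Writing $L=\log(1/\delta)$, these give
\[
\widetilde T(\delta)\Pr(\mathcal E_{\mathrm{corr}}^c)\le \widetilde K_{\mathrm{comp}}(1+L)\,c_{\mathrm{elim}}\,\delta^{\alpha}.
\]
It then remains to show that $\sup_{\delta\in(0,1)}(1+\log(1/\delta))\delta^\alpha<\infty$ for fixed $\alpha>0$. Substituting $x=\log(1/\delta)\ge 0$, the function $(1+x)e^{-\alpha x}$ is maximized at $x=\max(0,1/\alpha-1)$. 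Its supremum is therefore at most $\max\{1,\alpha^{-1}e^{\alpha-1}\}\le 1/\alpha$ for $\alpha\le1$. Set $\bar R_{\mathrm{comp}}:=\widetilde K_{\mathrm{comp}}c_{\mathrm{elim}}/\alpha+C_{\widetilde G}$; this constant depends on $\alpha$ but not on $\delta$. When $\alpha=1$, Theorem~\ref{thm:delta-pac-correctness} lets us take $c_{\mathrm{elim}}=1$.

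The main obstacle is the inclusion used for $t\ge\widetilde T(\delta)$, specifically that on $\widetilde{\mathcal G}_t$ the run stops by time $t$ with output $h^*$. This requires two things.
\begin{itemize}
\item \emph{Evidence lower bound.} The crude bound $Z_t(h^*,g)\ge\Gamma(t)$ must hold along the random, chain-free active set. It rests on set monotonicity, $f_{G_s(h^*)}\ge f_{S_0}$, so that the $D_0$ drift holds regardless of which opponents have been removed.
\item \emph{No wrong stop.} The algorithm must not stop earlier at a wrong champion. Here I would argue that a wrong output at $\tau\le t$ requires $h^*$ to be eliminated against some $g$. Champion stabilization on $E_t^{\mathrm{ch}}$ covers only rounds after $m_{\mathrm{ch}}(t)$, so early wrong stops would instead be charged to $\mathcal E_{\mathrm{corr}}^c\cap\{\tau\le t\}$, where they do not contribute to $\Pr(\tau>t,\cdot)$.
\end{itemize}
The second point actually makes the inclusion easier than stated: paths with $\tau\le t$ drop out of the summand. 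So we only need that $\tau>t$ together with $\widetilde{\mathcal G}_t$ is impossible, which follows from $\Gamma(t)\ge\beta_{\mathrm{elim}}(t,\delta)$ and Lemma~\ref{lem:elim-trigger}, since the champion is $h^*$ at time $t$.
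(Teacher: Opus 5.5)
Your proposal is correct and follows the paper's proof essentially step for step. It uses the same tail-sum split at $\widetilde T(\delta)$, the trivial bound $\Pr(\mathcal E_{\mathrm{corr}}^c)$ on the early terms, the inclusion $\{\tau>t\}\cap\mathcal E_{\mathrm{corr}}^c\subseteq\widetilde{\mathcal G}_t^c$ for the late terms, and Lemma~\ref{lem:Ttilde-bound} combined with the $c_{\mathrm{elim}}\delta^\alpha$ error bound for uniformity. Two small differences work in your favour: your explicit maximization of $(1+x)e^{-\alpha x}$ is cleaner than the paper's continuity argument, which misstates the limit as $\delta\to1$ as $0$ when it is $1$ (boundedness still holds), and your observation that only $\widetilde{\mathcal G}_t\cap\{\tau>t\}=\emptyset$ is needed avoids the paper's stronger, unnecessary claim that the output on $\widetilde{\mathcal G}_t$ must be $h^*$.
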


\begin{proof}
By the tail-sum formula,
\[
R_{\mathrm{comp}}(\delta)=\sum_{t=0}^\infty\Pr(\tau>t,
\mathcal E_{\mathrm{corr}}^c).
\]
Split the sum at $\widetilde T(\delta)$.

\emph{Early contribution} ($t<\widetilde T(\delta)$): 
this range captures $R_{\mathrm{comp}}$'s 
contribution from the first $\widetilde T(\delta)$ 
time units, during which the key consequence does 
not yet apply. Using the trivial bound 
$\Pr(\tau>t,\mathcal E_{\mathrm{corr}}^c)\le
\Pr(\mathcal E_{\mathrm{corr}}^c)$,
\[
\sum_{t=0}^{\widetilde T(\delta)-1}\Pr(\tau>t,
\mathcal E_{\mathrm{corr}}^c)\le\widetilde T(\delta)
\cdot\Pr(\mathcal E_{\mathrm{corr}}^c).
\]
Though the number of terms ($\widetilde T(\delta)$) 
grows with $L$, the error probability 
$\Pr(\mathcal E_{\mathrm{corr}}^c)\le c_{\mathrm{elim}}
\delta^\alpha$ decays fast enough in $\delta$ to 
keep the product bounded uniformly; see the uniform 
bound below.

\emph{Late contribution} ($t\ge\widetilde T(\delta)$): 
in this range, the key consequence 
$\{\tau>t\}\cap\mathcal E_{\mathrm{corr}}^c\subseteq
\widetilde{\mathcal G}_t^c$ applies, yielding 
$\Pr(\tau>t,\mathcal E_{\mathrm{corr}}^c)\le\Pr(
\widetilde{\mathcal G}_t^c)$, and the sum is bounded 
by summability:
\[
\sum_{t\ge\widetilde T(\delta)}\Pr(\tau>t,
\mathcal E_{\mathrm{corr}}^c)\le\sum_{t=3}^\infty
\Pr(\widetilde{\mathcal G}_t^c)\le C_{\widetilde G}.
\]

Combining proves the first bound.

\emph{Uniform bound.} By 
Lemma~\ref{lem:Ttilde-bound} and PAC correctness,
\[
R_{\mathrm{comp}}(\delta)\le\widetilde K_{\mathrm{
comp}}(1+L)c_{\mathrm{elim}}\delta^\alpha
+C_{\widetilde G}.
\]
The function $\delta\mapsto(1+\log(1/\delta))
\delta^\alpha$ is continuous on $(0,1)$, tends to 
$0$ as $\delta\rightarrow 0$ (since $\delta^\alpha$ 
dominates $\log(1/\delta)$), and tends to $0$ as 
$\delta\rightarrow 1$ (since $\log(1/\delta)\to 0$); 
hence it is bounded by some $M_\alpha<\infty$. 
Setting
\[
\bar R_{\mathrm{comp}}:=\widetilde K_{\mathrm{comp}}
c_{\mathrm{elim}}M_\alpha+C_{\widetilde G}
\]
gives $R_{\mathrm{comp}}(\delta)\le\bar R_{\mathrm{
comp}}$ uniformly for $\delta\in(0,1)$. \qedhere
\end{proof}

\subsubsection{Full expectation bound}
\label{app:full-bound}
We now combine the bound on $\mathbb E[\tau\mathbf 1_{\mathcal E_{\mathrm{corr}}}]$ 
(Corollary~\ref{cor:correct-elim-simple}, Section~B.5) with the bound on the complement 
$R_{\mathrm{comp}}(\delta)$ (Proposition~\ref{prop:Rcomp-small}, Section~B.6) 
to obtain the full expectation bound on $\mathbb E[\tau]$. Specializing $\alpha=1$ 
recovers Theorem~\ref{thm:main-delta-pac} of the main text; 
specializing $\alpha<1$ recovers 
Corollary~\ref{cor:alpha-less-than-one}.
Since Proposition~\ref{prop:Tbar-sigma-explicit} 
holds for all $\alpha\in(0,1]$ and 
$\delta\in(0,1)$, and 
$R_{\mathrm{comp}}\le\bar R_{\mathrm{comp}}$ 
uniformly 
(Proposition~\ref{prop:Rcomp-small}), all bounds 
below hold for every $\alpha\in(0,1]$ and every 
$\delta\in(0,1)$.

By Corollary~\ref{cor:correct-elim-simple} 
(summing Theorem~\ref{thm:fixed-chain-unified} over 
the disjoint chain events),
\[
\begin{aligned}
&\mathbb E[\tau\mathbf 1_{\mathcal E_{\mathrm{corr}}}]
\le\sum_\sigma\Pr(\mathcal E_\sigma)\biggl[
\tfrac{\alpha L}{D_0}+\tfrac{b}{D_0}\log(e+L)\\
&\quad+C_{1,\sigma}\sqrt{L\log(e+L)}
+C_{2,\sigma}\log^2(e+L)\biggr]\\
&\quad+\sum_{\sigma\in\mathfrak S}C_{3,\sigma}.
\end{aligned}
\]
Using $\sum_\sigma\Pr(\mathcal E_\sigma)=\Pr(
\mathcal E_{\mathrm{corr}})\le 1$ to factor out the 
common leading and $\log(e+L)$ terms,
\[
\begin{aligned}
&\mathbb E[\tau\mathbf 1_{\mathcal E_{\mathrm{corr}}}]
\le\Pr(\mathcal E_{\mathrm{corr}})\bigl[
\tfrac{\alpha L}{D_0}+\tfrac{b}{D_0}\log(e+L)\bigr]\\
&\ +\sum_\sigma\Pr(\mathcal E_\sigma)\bigl[
C_{1,\sigma}\sqrt{L\log(e+L)}\\
&\quad+C_{2,\sigma}\log^2(e+L)\bigr]
+\sum_\sigma C_{3,\sigma}.
\end{aligned}
\]
Combining with $\mathbb E[\tau]=\mathbb E[\tau
\mathbf 1_{\mathcal E_{\mathrm{corr}}}]
+R_{\mathrm{comp}}(\delta)$ and 
Proposition~\ref{prop:Rcomp-small} 
($R_{\mathrm{comp}}(\delta)\le\bar R_{\mathrm{comp}}$),
\[
\begin{aligned}
\mathbb E[\tau]\le\ 
&\tfrac{\alpha L}{D_0}+\tfrac{b}{D_0}\log(e+L)\\
&+\sum_\sigma\Pr(\mathcal E_\sigma)\bigl(
C_{1,\sigma}\sqrt{L\log(e+L)}\\
&\qquad+C_{2,\sigma}\log^2(e+L)\bigr)+C_3',
\end{aligned}
\]
with 
\[
C_3':=\sum_{\sigma\in\mathfrak S}C_{3,\sigma}
+\bar R_{\mathrm{comp}}
\]
a finite, $\delta$-independent constant.

Specializing to $\alpha=1$ yields
\[
\begin{aligned}
\mathbb E[\tau]\le\ 
&\tfrac{L}{D_0}+\tfrac{b}{D_0}\log(e+L)\\
&+\sum_\sigma\Pr(\mathcal E_\sigma)\bigl(
C_{1,\sigma}\sqrt{L\log(e+L)}\\
&\qquad+C_{2,\sigma}\log^2(e+L)\bigr)+C_3',
\end{aligned}
\]
recovering Theorem~1 in Section~IV. Since 
$\log(e+L)\le 1+\log(1+L)$ and $\log^2(e+L)\le 
2\log^2(1+L)+2$, the same bound holds with 
$\log(1+L)$ in place of $\log(e+L)$ after absorbing 
additive constants into $C_{2,\sigma},C_3'$.

For $\alpha<1$, the leading term is $\alpha L/D_0$, 
matching Proposition~3 in Section~IV. The error 
guarantee in this regime is $\Pr(\hat h(\tau)\ne 
h^*)\le c_{\mathrm{elim}}\delta^\alpha$ 
(Remark~\ref{rem:error-alpha-less-one}).



\addtolength{\textheight}{-12cm}   




\end{document}